\newcommand{\stkout}[1]{\ifmmode\text{\sout{\ensuremath{#1}}}\else\sout{#1}\fi}
\newtheorem{pred}{Prediction}
\newtheorem{thrmgrbg}{Theorem}
\newcommand{\algmargin}{\the\ALG@thistlm}
\newlength{\whilewidth}
\algnewcommand{\parState}[1]{\State%
  \parbox[t]{\dimexpr\linewidth-\algmargin}{\strut #1\strut}}
\DeclareMathOperator*{\argmin}{arg\,min}
\DeclareMathOperator*{\arginf}{arg\,inf}
\newcommand\hcancel[2][black]{\setbox0=\hbox{$#2$}%
\rlap{\raisebox{.45\ht0}{\textcolor{#1}{\rule{\wd0}{1pt}}}}#2}
\newcommand{\Id}{\textnormal{Id}}
\newcommand{\iid}{\overset{\textnormal{i.i.d}}{\sim}}
\newcommand{\GAN}{\textnormal{GAN}}
\newcommand{\W}{\textnormal{W}}
\newcommand{\Hess}{\textnormal{H}}
\newcommand{\Diff}{\mathcal{J}}
\newcommand{\Lip}{\textnormal{Lip}}
\DeclareRobustCommand{\hsout}[1]{\texorpdfstring{\sout{#1}}{#1}}
\begin{document}

\title{Generalization Bounds for Unsupervised\\ Cross-Domain Mapping with WGANs}

% \author[1]{Tomer Galanti}
% \author[1]{Sagie Benaim}
% \author[1,2]{Lior Wolf}
% \affil[1]{School of Computer Science, Tel Aviv University} 
% \affil[2]{Facebook AI Research}

\author{}

\author{\name  Tomer Galanti \email tomerga2@post.tau.ac.il\\
		\addr School of Computer Science\\
        Tel Aviv University \\
        Ramat Aviv, Tel Aviv 69978, Israel\\
       \AND
       \name Sagie Benaim \email sagieb@mail.tau.ac.il\\
		\addr School of Computer Science\\
        Tel Aviv University \\
        Ramat Aviv, Tel Aviv 69978, Israel\\
       \AND
       \name Lior Wolf \email wolf@cs.tau.ac.il \\
       \addr Facebook AI Research and\\
       \addr School of Computer Science\\
       Tel Aviv University\\
       Ramat Aviv, Tel Aviv 69978, Israel}

\editor{}

\maketitle

\begin{abstract}
The recent empirical success of {\color{red}unsupervised} cross-domain mapping algorithms, between two domains that share common characteristics, is not well-supported by theoretical justifications. This lacuna is especially troubling, given the clear ambiguity in such mappings. 

We work with the adversarial training method called the Wasserstein GAN and derive a novel generalization bound, which limits the risk between the learned mapping $h$ and the target mapping $y$, by a sum of two terms: (i) the risk between $h$ and the most distant alternative mapping that was learned by the same cross-domain mapping algorithm, 
%has a small Wasserstein GAN divergence, 
and (ii) the minimal Wasserstein GAN divergence between the target domain and the domain obtained by applying a hypothesis $h^*$ on the samples of the source domain, where $h^*$ is a hypothesis selected by the same algorithm. The bound is directly related to Occam's razor and \sout{it} encourages the selection of the minimal architecture that supports a small Wasserstein GAN divergence. 

{\color{red}The bound leads to multiple algorithmic consequences, including a} method for hyperparameters selection and for an early stopping in cross-domain mapping GANs. We also demonstrate a novel capability {\color{red}for unsupervised learning} of estimating confidence in the mapping of every specific sample. Lastly, we show how non-minimal architectures can be effectively trained by an inverted knowledge distillation, in which a minimal architecture is used to train a larger one, leading to higher quality outputs.
\end{abstract}

\begin{keywords}
Unsupervised Learning, Cross-Domain Alignment, Adversarial Training, Wasserstein GANs, Image to Image Translation
\end{keywords}

\section{Introduction}

The recent literature contains many examples of unsupervised learning that are beyond the classical work on clustering and density estimation, most of which revolve around generative models that are trained to capture a certain distribution {\color{red}$D$}. In many cases, the generation is unconditioned, and the learned hypothesis takes the form of $g(z)$ for a random vector $z$. It is obtained based on a training set containing i.i.d samples from the target domain. 

The vast majority of the literature on this problem employs adversarial training, and specifically a variant of Generative Adversarial Networks (GANs), which were introduced by~\cite{NIPS2014_5423}. GAN-based schemes typically employ two functions that are learned jointly: a generator $g$ and a discriminator $d$. The discriminator is optimized to distinguish between ``real'' {\color{red}training} samples {\color{red} from $D$} \sout{from the training set} and ``fake'' samples that are generated as $g(z)$, where $z$ is distributed according to a \sout{fixed known} {\color{red} predefined latent} distribution {\color{red} $D_z$ (typically, a low-dimensional normal or uniform distribution)}. The generator is optimized to generate adversarial samples, i.e., samples $g(z)$, such that $d$ would classify as real. %{\color{blue} Assuming that this process succeeds, the result is a mapping $g$ that takes samples from the distribution $D_z$ and maps them to samples from the distribution $D$.}

{\color{red}These unconditioned GANs are explored theoretically~\citep{pmlr-v70-arora17a}, and since intuitive non-adversarial (interpolation-based) techniques exist~\citep{bojanowski2017optimizing}, their success is also not surprising.} 

\sout{The ability to train a function $g(z)$ for a random $z$ that captures the distribution underlying a given training set, is not surprising and there are suitable options, even without GANs. A simple example would be sampling from the training set, which is unsatisfying due to the inability to generate novel samples. Other GAN-less options generate de novo samples, for example, various interpolation methods, including methods that fit noise~\mbox{\citep{bojanowski2017optimizing}}. The ability of GANs in generating samples from the target distribution was investigated theoretically by~\mbox{\cite{pmlr-v70-arora17a}}.}

Much less understood is the ability to learn, in a completely unsupervised manner, in the conditioned case, where the learned function $h$ maps a sample from a source domain to the analogous sample in a target domain. {\color{red} In this case, we have two domains $A,B$ and one {\color{red} aims at mapping} a sample $a\in A$ to an analogous sample $h(a)\in B$.} {\color{red}This computational problem is known as ``Unsupervised Cross-Domain Mapping'' or ``Image to Image Translation'' when considering visual domains.} {\color{red} There are a few issues with this computational problem that cause concern}. First, it is unclear what analogous means, let alone to capture it in a formula. Second, as detailed in Sec.~\ref{sec:problemformulation}, the mapping problem is inherently ambiguous. {\sout{While a conditional mapping, using a GAN was recently studied theoretically by~\mbox{\cite{pmlr-v80-pan18c}}, we are unaware of such efforts in the unsupervised case.}}

Despite these theoretical challenges, the field of unsupervised cross-domain mapping, in which a sample from domain $A$ is translated to a sample in domain $B$, is enjoying a great deal of empirical success, e.g,~\citep{xia2016dual,pmlr-v70-kim17a,CycleGAN2017,dualgan,distgan,liu2017unsupervised}. We attribute this success to {\color{red}what we term} the simplicity hypothesis, \sout{meaning} {\color{red} which means} that these solutions learn the minimal complexity mapping, such that the discrepancy between the fitted distribution and the target distribution is small. As we show empirically, choosing the minimal complexity mapping eliminates the ambiguity of the problem.

In addition to the empirical validation, we present a generalization bound that supports the simplicity hypothesis. Bounding the error obtained with unsupervised methods is subject to an inherent challenge: without the ability to directly evaluate the risk on the training set, it is not clear on which grounds to build the bound. Specifically, typical generalization bounds of the form of training risk plus a regularization term cannot be used.

The bound we construct has a different form. As one component, it has the success of the fitting process. This is captured by the Wasserstein GAN (WGAN)~\citep{DBLP:conf/icml/ArjovskyCB17} divergence and is typically directly minimized by the learner. Another component is the maximal risk within the hypothesis class to any other hypothesis that also provides a good fit. This term is linked to minimal complexity, since it is expected to be small in minimal hypothesis classes, and it can be estimated empirically for any hypothesis class. %, as was shown by~\cite{benaim2017maximally}.% The third component of the bound is a term that measures the capacity of the class of discriminators used for the second component. This is measured in a very specific and problem dependent way, which is the ability to fit the error between the target function and the learned mapper.

%As mentioned, our bound uses the 1-Wasserstein distance. This distance has gained a considerable amount of attention recently due to the introduction of the Wasserstein GAN

In addition to explaining the plausibility of unsupervised \sout{image to image mapping} {\color{red}cross-domain mapping} despite the inherent ambiguities, our analysis also directly leads to a set of new unsupervised cross-domain mapping algorithms. By training pairs of networks that are distant from each other and both minimize the WGAN divergence, we are able to obtain a measure of confidence on the mapping's outcome. This is surprising for two reasons: first, in the unsupervised settings, confidence estimation is almost unheard of, since it typically requires a second set of supervised samples. Second, confidence is hard to calibrate for multidimensional outputs. The confidence estimation holds both in expectation (Alg.~\ref{algo:when}), with application to hyperparameters selection (Alg.~\ref{algo:rtrvl}), and estimating per-sample loss (Alg.~\ref{algo:where}). 

Another method that we derive based on our framework, is able to avoid the ambiguity of unsupervised mapping between domains, yet enjoy the expressiveness of deep neural networks. The method trains a pair of mappings that are close to each other (Alg.~\ref{algo:duallayers}). The first mapping minimizes the WGAN divergence and has a minimal complexity. The second mapping, which is deeper, also minimizes the WGAN divergence, and in addition, is restricted to be close to the first network.

\subsection{{\hsout{Relation to Our Previous Work}}}

{\sout{The work described here is part of the line of work on the role of minimality complexity in unsupervised learning that we have been following in conference publications~\mbox{\citep{galanti2018the,benaim2017maximally}}. In this manuscript, we focus specifically on WGAN, while previous work employed a measure of discrepancy, and derive a precise generalization bound, which was missing in our previous work (the bounds of~\mbox{\cite{benaim2017maximally}} are used for motivating the methods, but contain a term that cannot be estimated). The algorithms presented here, and the empirical results, are identical to those in the conference publications, except for Alg.~\ref{algo:when2}, which is new.}}

{\color{red}
\section{Contributions}

The work described here is part of the line of work on the role of minimal complexity in unsupervised learning that we have been following in conference publications~\citep{galanti2018the,benaim2017maximally}. Our contributions in this line of work are as follows.

\begin{itemize}
    % \item {\color{blue} Lem.~\ref{lem:boundL} provides a generalization bound for cross-domain mapping with WGANs and linear generators. This lemma provides sufficient conditions in which, the generalization risk of the learned mapping is upper bounded by the error of fitting between the two distributions. Additionally, in this case, we are able to specify the architecture of the discriminators. }
    \item Thm.~\ref{thm:corBoundGEN} provides a rigorous statement of the generalization bound for unsupervised cross-domain mapping with WGANs, which is the basis of this work. This bound sums two terms: (a) The maximal risk within the hypothesis class to any other hypothesis that also provides a good fit. (b) The error of fitting between the two domains. This is captured by the Wasserstein GAN (WGAN)~\citep{DBLP:conf/icml/ArjovskyCB17} divergence and is typically directly minimized by the learner.  
    \item Thm.~\ref{thm:corBoundGEN} leads to concrete predictions that are verified experimentally in Sec.~\ref{sec:experiments}. In addition, based on this theorem, we introduce Algs.~\ref{algo:when} and~\ref{algo:rtrvl}. The first, serves as a method for deciding when to stop training a generator in unsupervised cross-domain mapping. The second algorithm provides a method for hyperparameters selection for unsupervised cross-domain mapping.  
    \item In Sec.~\ref{sec:onesamp}, we derive a similar upper bound on the per-sample loss for a specific sample $x$. Alg.~\ref{algo:where} is a heuristic method for estimating the per-sample loss using this bound.
    \item Our line of work shows that unsupervised cross-domain mapping succeeds when the architecture of the learned generator is of minimal complexity. In Alg.~\ref{algo:duallayers}, we present a method for learning a generator of a non-minimal architecture, by employing knowledge distillation.
    \item In Appendix.~\ref{sec:dualProofExt}, we extend our generalization bounds for the non-unique case. In this case, there are multiple possible target functions and we wish our algorithm to return a hypothesis that is close to one of them. This extension leads to Alg.~\ref{algo:when2} that extends Alg.~\ref{algo:when}, which is then verified experimentally.%In this case, we extend our method for deciding when to stop training the generator in Alg.~\ref{algo:when2}.
\end{itemize}

The algorithms presented here, and the empirical results, are identical to those in the conference publications, except for Alg.~\ref{algo:when2} that extends Alg.~\ref{algo:when} to the non-unique case, which is new. The contributions in this manuscript over the previous conference publications include: (i) In this paper, we employ Integral Probability Metrics (IPMs) and focus specifically on WGANs, while previous work employed a different measure of discrepancy (another specific type of IPM). (ii) We derive a precise generalization bound (Thm.~\ref{thm:corBoundGEN}), which was missing in our previous work. While in~\citep{benaim2017maximally}, we provide generalization bounds for unsupervised cross-domain mapping, it is mainly used for motivating the methods and it contains a term that cannot be estimated and is, therefore, explicitly assumed to be small. In the generalization bounds in this paper, we can estimate each term and the terms are being minimized explicitly by our algorithms. Therefore, the new generalization bounds are better suited for measuring the generalization performance of unsupervised cross-domain mapping algorithms. (iii) As mentioned, in Appendix.~\ref{sec:dualProofExt}, we extend our generalization bounds for the non-unique case. %In this case, there are multiple possible target mappings between the two domains and we wish our algorithm to return a hypothesis that is close to one of them. 
}

\section{Background}

We briefly review IPMs, WGANs, {\color{red} which is} the framework we use to define and then address the ambiguity in alignment, and previous work on the role of complexity in overcoming this ambiguity. {\color{red} All notations are listed in Tab.~\ref{tab:summary}.} %{\color{red} At the end of the section we put a table that summarizes the notations in the paper.}
{\color{red}
\subsection{Terminology and Notations}\label{sec:terminology}

We introduce some necessary terminology and notations. The real line is denoted by $\mathbb{R} := (-\infty,\infty)$ and the set of natural numbers by $\mathbb{N} := \{1,2,3,\dots\}$. We denote by $\Id_{\mathcal{X}}:\mathcal{X} \to \mathcal{X}$ the identity function, i.e., for every $x\in \mathcal{X}$, $\Id_{\mathcal{X}}(x)=x$. For a vector $x = (x_1,\dots,x_n) \in \mathbb{R}^n$ we denote $\|x\|_2 := \sqrt{\sum^{n}_{i=1} x^2_i}$ the Euclidean norm of $x$ and for a matrix $W \in \mathbb{R}^{m \times n}$, we denote by $\|W\|_2 := \max\limits_{x\neq 0} \frac{\|Wx\|_2}{\|x\|_2}$ the induced operator norm of $W$.

Throughout the paper, basic notions from probability theory are being used. We denote by $D$ a distribution over some set, for example, $\mathcal{X}$, that is equipped with a $\sigma$-algebra. We use the notation $x \sim D$ to express that $x$ is sampled according to $D$. Given a random variable $f : \mathcal{X} \to \mathbb{R}$, its expected value is denoted by $\mathbb{E}_{x\sim D}[f(x)]$ (assuming $f$ is measurable with respect to the $\sigma$-algebra of $D$). For $f : \mathcal{X} \to \{\textnormal{true}, \textnormal{false}\}$ we also use $\mathbb{P}_{x\sim D}[f(x)] := D(\{x \vert f(x) = \textnormal{true}\})$. The notation $D^m$ stands for the probability distribution over $\mathcal{X}^m$ induced by sampling $(x_1, \dots , x_m)$, where each point $x_i$ is sampled from $D$ independently of the other points. For a given function $h$ over a set $\mathcal{X}$ and a distribution $D$ over $\mathcal{X}$, $h \circ D$ denotes the distribution of $h(x)$ for $x \sim D$. For completeness, we assumed that all of the proposed expectations are well-defined.

For a differentiable function $f : \mathbb{R}^{m} \rightarrow \mathbb{R}^{n}$ and $x \in \mathbb{R}^m$, we denote by $\Diff_f(x) := \left(\frac{\partial f_i}{\partial \zeta_j}(x)\right)_{i \in [n],j \in [m]}$ the Jacobian matrix of $f$ in $x$ and for a twice differentiable function $f : \mathbb{R}^{m} \rightarrow \mathbb{R}$, we denote by $\Hess_f(x) := \left(\frac{\partial^2 f}{\partial \zeta_i \partial \zeta_j}(x)\right)_{i,j \in [m]}$ the Hessian matrix of $f$ in $x$. For a scalar function $f:\mathbb{R}^m \to \mathbb{R}$ the gradient notation will be employed, instead of the Jacobian notation, $\nabla f(x) = \Diff_f(x)$. Given a function $f:\mathcal{X}_1 \to \mathcal{X}_2$ (where $\mathcal{X}_1 \subset \mathbb{R}^m$ and $\mathcal{X}_2 \subset \mathbb{R}^n$), we denote $f \in C^r$ if $f$ is $r$-times continuously differentiable and $f \in C^r_{\textnormal{diff}}$ if $f \in C^r$, $f$ is invertible and $f^{-1} \in C^r$. For a function $f:\mathbb{R}^m \to \mathbb{R}^{n}$ and $\mathcal{X} \subset \mathbb{R}^m$, we define, $\|f\|_{\infty,\mathcal{X}} := \sup\limits_{x \in \mathcal{X}} \|f(x)\|_2$. In addition, given a function $f:\mathcal{X} \to \mathbb{R}^n$, we denote, $\|f\|_{\Lip} = \sup\limits_{x,y \in \mathcal{X}}\frac{\|f(x)-f(y)\|_2}{\|x-y\|_2}$, where $\mathcal{X} \subset \mathbb{R}^m$. In particular, if $\mathcal{X}$ is convex and $f$ is differentiable, we have $\|f\|_{\Lip} = \|\Diff_f\|_{\infty,\mathcal{X}}$. For a twice differentiable function $f:\mathcal{X} \to \mathbb{R}$, we denote $\beta(f):= \|\Hess_f\|_{\infty,\mathcal{X}}$. Given a set $E$ and two functions $F:E \to \mathbb{R}$ and $G:E \to \mathbb{R}$, we denote, $\forall e \in E: F(e)\lesssim G(e)$ if and only if there exists a constant $C>0$ (independent of $e$), such that, $\forall e \in E: F(e) \leq C\cdot G(e)$. As a convention, we denote, $\inf_{x \in \emptyset} f(x) = \infty$ for any real valued function $f$.
}

\subsection{IPMs and WGANs}
Integral Probability Metrics (or IPMs for short), first introduced by~\cite{muller}, is a \sout{important} family of \sout{distance} {\color{red}pseudometric\footnote{{\color{red} A pseudometric $d:X^2 \to [0,\infty)$ is a non-negative, symmetric function that satisfies the triangle inequality and $d(x,x)=0$ for all $x \in X$}}} functions between distributions. For a given Polish space $\mathcal{S} = (\mathcal{X},\| \cdot \|)$ {\color{red} (i.e., separable and completely metrizable topological space)}, two distributions $D_1$ and $D_2$ over $\mathcal{X}$ and a class $\mathcal{C}$ of {\color{red} discriminator} functions {\color{red} $d:\mathcal{X} \to \mathbb{R}$}, the $\mathcal{C}$-IPM between $D_1$ and $D_2$ is defined as follows: 
\begin{equation}\label{eq:IPM}
\rho_{\mathcal{C}}(D_1,D_2) := \sup\limits_{d \in \mathcal{C}} \Big\{ \mathbb{E}_{x\sim D_1}[d(x)] - \mathbb{E}_{x\sim D_2}[d(x)] \Big\}
\end{equation}
This family of \sout{metrics} {\color{red}pseudometric functions} includes a wide variety of GAN divergences, such as~\citep{DBLP:conf/icml/ArjovskyCB17,DBLP:journals/corr/ZhaoML16,Berthelot2017BEGANBE,Li:2015:GMM:3045118.3045301,NIPS2017_6815,DBLP:conf/nips/MrouehS17,mroueh2018sobolev}.

In this work, we give special attention to the WGAN algorithm~\citep{DBLP:conf/icml/ArjovskyCB17} which inherits many features from IPMs. This is a variant of GAN, that uses the 1-Wasserstein distance instead of the original GAN divergence. The stability of WGAN was later improved by~\cite{NIPS2017_7159} and is currently one of the most successful ways to generate samples from a distribution {\color{red} $D$,} defined by training samples. This algorithm aims to find a mapping $g:\mathcal{X}_z \to \mathcal{X}$ {\color{red} (generator)} that takes a random sample from a noise distribution {\color{red}(latent distribution)}, $D_z$ {\color{red} over $\mathcal{X}_z$}, and maps it into a random sample of {\color{red} a distribution} $D$ by minimizing the \sout{1-Wasserstein} distance between $g \circ D_z$ (the distribution of $g(z)$ for $z \sim D_z$) and $D$. {\color{red} The set $\mathcal{X}_z$ is considered a latent space and is typically a convex subset of a Euclidean space, such as $\mathbb{R}^d$, $[-1,1]^d$ or the $d$-dimensional closed unit ball, $\mathbb{B}_d := \{x \in \mathbb{R}^d \;\vert\; \|x\|_2 \leq 1\}$. } {\color{red} Additionally, the noise distribution $D_z$ is typically a normal distribution (for $\mathcal{X}_z = \mathbb{R}^d$) or a uniform distribution (for $\mathcal{X}_z = [-1,1]^d$ or $\mathcal{X}_z = \mathbb{B}_d$). The distance function employed by the WGAN algorithm is the 1-Wasserstein distance.} For a given Polish space $\mathcal{S} = (\mathcal{X},\| \cdot \|)$ and two distributions $D_1$ and $D_2$ over $\mathcal{X}$ the 1-Wasserstein distance between two distributions $D_1$ and $D_2$ is defined as follows: 
\begin{equation}\label{eq:Weq1}
\W(D_1,D_2) := \inf\limits_{\gamma \in \mathcal{M}(D_1,D_2)} \mathbb{E}_{(x,y) \sim \gamma}[\|x-y\|]
\end{equation}
Here, $\mathcal{M}(D_1,D_2)$ denotes the set of all joint distributions $\gamma(x, y)$, whose marginals are $D_1$ and $D_2$ (resp.). %{\color{blue} A reader who is unfamiliar with Polish spaces, can simply treat it as the Euclidean space with the Euclidean norm. For the sake of generality, we prefer to present our setting in the most general manner.}

In general, computing the infimum in Eq.~\ref{eq:Weq1} is highly intractable. \sout{Nevertheless} {\color{red}Fortunately}, by the Kantorovich-Rubinstein duality (cf.~\citep{villani2003topics}, Remark 6.5) we have the following equivalence:
\begin{equation}\label{eq:wass}
\W(D_1,D_2) = \sup\limits_{\|d\|_{\Lip} \leq 1} \Big\{ \mathbb{E}_{x\sim D_1}[d(x)] - \mathbb{E}_{x\sim D_2}[d(x)] \Big\}
\end{equation}
where $d:\mathcal{X} \rightarrow \mathbb{R}$ is a 1-Lipschitz function and $\|d\|_{\Lip} := \sup\limits_{x,y \in \mathcal{X}} \frac{|d(x) - d(y)|}{\|x-y\|}$ is its Lipschitz norm. {\color{red} Even though computing the infimum in Eq.~\ref{eq:Weq1} is highly intractable, computing Eq.~\ref{eq:wass} (up to an approximation) is a possible task. To do so, we can simply search for a $1$-Lipschitz function $d$ that maximizes the term $\Big\{ \mathbb{E}_{x\sim D_1}[d(x)] - \mathbb{E}_{x\sim D_2}[d(x)] \Big\}$. This is a central component in the WGAN algorithm.} \sout{In order to employ the 1-Wasserstein distance for learning to generate samples from a target distribution,~\mbox{\citet{DBLP:conf/icml/ArjovskyCB17}} approximated $\W(D_1,D_2)$ by implementing the members of $\mathcal{C} = \{d \;\vert\; \|d\|_{\Lip} \leq 1\}$ using neural networks.} 

{\color{red} The goal of the WGAN algorithm is to select a mapping (generator) $g$ from a class, $\mathcal{H}$, of neural networks of a fixed architecture, that minimizes the following ``WGAN divergence'':
\begin{equation}\label{eq:wganarg}
\begin{aligned}
\W(g \circ D_z,D) 
&= \sup\limits_{\|d\|_{\Lip} \leq 1} \Big\{ \mathbb{E}_{x\sim h \circ D_z}[d(x)] - \mathbb{E}_{x\sim D}[d(x)] \Big\}\\
&=\sup\limits_{\|d\|_{\Lip} \leq 1} \Big\{ \mathbb{E}_{z\sim D_z}[d(g(z))] - \mathbb{E}_{x\sim D}[d(x)] \Big\}
\end{aligned}
\end{equation}
For this purpose, they implement the members of $\mathcal{C} = \{d \;\vert\; \|d\|_{\Lip} \leq 1\}$ using neural networks of a certain architecture. The optimization process iteratively minimizes $\Big\{ \mathbb{E}_{z\sim D_z}[d(g(z))] - \mathbb{E}_{x\sim D}[d(x)] \Big\}$ with respect to $g$ and maximizes it with respect to $d$. In each iteration, the algorithm runs a few gradient based optimization steps for $g$ or $d$. {\color{red} To ensure that the Lipschitz constant of $d$ is bounded, its parameters are restricted to be bounded.} As a side note, to derive the second equation in Eq.~\ref{eq:wganarg}, we used (cf.~\cite{limits}, Thm.~1.9).
}

% introduced the WGAN divergence, which approximates the 1-Wasserstein distance as follows:
% \begin{equation}\label{eq:WGAN}
% \W_{\mathcal{C}}(D_1,D_2) = \sup\limits_{d \in \mathcal{C}} \Big\{ \mathbb{E}_{x \sim D_1}[d(x)] - \mathbb{E}_{x\sim D_2}[d(x)]\Big\} 
% \end{equation}
% where $\mathcal{C}$ is typically a subset of $\{d \;\vert\; \|d\|_{\Lip} \leq 1\}$ is typically a set of neural networks.  

\begin{table}[th!]\caption{Summary of Notation}
\begin{center}% used the environment to augment the vertical space
% between the caption and the table
\begin{tabular}{r c p{12cm} }
\toprule
$\ell$ & & A loss function. In this paper, we use the $L_2$ loss\\
$\mathbb{P},\mathbb{E}$ &  & The probability and expectation operators\\
$\mathbb{R},\mathbb{N}$ &  & The real line and the set of natural numbers\\
$\Id_{\mathcal{X}}$ &  & The identity function\\
$A,B$ &  & Two domains $A=(\mathcal{X}_A,D_A)$ and $B=(\mathcal{X}_B,D_B)$\\
$R_D[f_1,f_2]$ &  & The generalization risk between $f_1$ and $f_2$, with respect to the distribution $D$ \\
$\mathcal{H},h$ &  & A hypothesis class and a specific hypothesis\\
$\mathcal{C},d$ &  & A set of discriminators and a specific discriminator\\
$\mathcal{T},y$ & & A set of target functions and a target function\\
$\mathcal{M}(D_1,D_2)$ &  & The set of all joint distributions $\gamma(x,y)$, whose marginals are $D_1$ and $D_2$ (resp.)\\
\hsout{$\GAN_{\mathcal{C}}(D_1,D_2)$} &  & \hsout{The GAN divergence between the distributions $D_1$ and $D_2$}\\
$\rho_{\mathcal{C}}(D_1,D_2)$ &  & The $\mathcal{C}$-IPM divergence between $D_1$ and $D_2$ \\
$\W(D_1,D_2)$ &  & The 1-Wasserstein distance between $D_1$ and $D_2$\\
$\Omega,\omega$ & & A set of vectors of hyperparamers and a specific vector of hyperparameters\\
$\mathcal{A}_{\omega}$ & & A cross-domain mapping algorithm with hyperparameters $\omega$\\
$\mathcal{P}_{\omega}(D_A,D_B)$ & & The set of possible outputs of $\mathcal{A}_{\omega}$ {\color{red} provided with access to $D_A$ and $D_B$}\\
$\mathcal{H}_{k}$ & & A hypothesis class of functions of complexity $\leq k$\\
$\mathcal{A}_{k}$ & & A cross-domain mapping algorithm of generators from $\mathcal{H}_k$\\
$C(h)$ & & The complexity of a hypothesis $h\in \mathcal{H}$\\
$\mathcal{P}_{k}(D_A,D_B)$ & & The set of possible outputs of $\mathcal{A}_{k}$ {\color{red} provided with access to $D_A$ and $D_B$} (see Eq.~\ref{eq:Pk})\\
$\|x\|_2$ &  & The Euclidean norm of a vector $x \in \mathbb{R}^n$\\
$\|W\|_2$ &  & The induced operator norm of a matrix $W \in \mathbb{R}^{m\times n}$\\
$\Diff_f(x)$ &  & The Jacobian matrix of a differentiable function $f:\mathbb{R}^m \rightarrow \mathbb{R}^n$ in $x\in \mathbb{R}^m$\\
$\nabla f(x)$ &  & The gradient of a differentiable function $f:\mathbb{R}^m \rightarrow \mathbb{R}$ in $x$\\
$\Hess_f(x)$ &  & The Hessian matrix of a twice differentiable function $f:\mathbb{R}^m \rightarrow \mathbb{R}$ in $x \in \mathbb{R}^m$\\
$\|f\|_{\infty,\mathcal{X}}$ & & For $f:\mathbb{R}^m \to \mathbb{R}^{n}$ and $\mathcal{X} \subset \mathbb{R}^m$, we define, $\|f\|_{\infty,\mathcal{X}} := \sup\limits_{x \in \mathcal{X}} \|f(x)\|_2$\\
$\|f\|_{\Lip}$ &  & The Lipschitz norm $f:\mathcal{X} \to \mathbb{R}^n$\\
$\beta(f)$ & & The maximal Euclidean norm of the Hessian of $f:\mathcal{X} \to \mathbb{R}$\\
$C^r$ & & The set of $r$-times continuously differentiable functions $f$\\
$C^r_{\textnormal{diff}}$ & & The set of invertible functions $f$, such that $f,f^{-1} \in C^r$\\
$\mathbb{B}_d$ & & The $d$-dimensional closed unit ball\\
$F \lesssim G$ & & For a set $E$ and two functions $F,G:E\to [0,\infty)$, we denote, $F \lesssim G$ if $\exists C>0\forall x\in E: F(x) \leq C \cdot G(x)$\\
$\mathcal{Q}$ & & A set of functions that are restricted to be weakly correlated with a target function (see Eq.~\ref{eq:Q})\\
% $\textnormal{D}^\gamma f(x)$ & & The partial derivative of $f:\mathbb{R}^M \rightarrow \mathbb{R}$ corresponding to the multi-index $\gamma$\\
% $C^m(\mathbb{R}^M), \| \cdot \|_{m,\mathcal{X}}$ & & The space of $m$-smooth functions and the norm corresponding to it (see Eq.~\ref{eq:unifNorm})\\
% $\mathcal{C}_{N,\sigma}$ &  & The set of $1$-hidden layer neural networks with $N$ hidden neurons of the form: $\sum^{N}_{i=1} u_i \cdot \sigma (\langle w_i,x\rangle +b_i)$ \\
% $\mathcal{C}_{N,\sigma,\tau}$ &  &  The set of $1$-hidden layer neural networks with $N$ hidden neurons of the form: $\tau\left(\sum^{N}_{i=1} u_i \cdot \sigma (\langle w_i,x\rangle +b_i) \right)$\\
%$\diam(U)$ &  &  The diameter of $U$, i.e., $\diam(U) = \sup\limits_{a,b \in U} \|a - b\|$\\
\bottomrule
\end{tabular}
\end{center}
\label{tab:summary}
\end{table}

\subsection{\hsout{The Unsupervised Alignment Problem}}
\hsout{
In this paper, we consider the {\em Unsupervised Alignment Problem}. In this setting, there are two domains $(\mathcal{X}_A,D_A)$ and $(\mathcal{X}_B,D_B)$ where $D_A$ and $D_B$ are distributions over the sample spaces $\mathcal{X}_A \subset \mathbb{R}^N$ and $\mathcal{X}_B \subset \mathbb{R}^M$ respectively (we assume that the spaces are recruited with $\sigma$-algebras). In addition, there is a hypothesis class $\mathcal{H}$ of functions $h:\mathcal{X}_A \to \mathbb{R}^M$ and a loss function $\ell:\mathbb{R}^M \times \mathbb{R}^M \to \mathbb{R}$. In this problem, the goal is to learn a mapping between the two domains $A=(\mathcal{X}_A,D_A)$ and $B=(\mathcal{X}_B,D_B)$ in an unsupervised manner with the only inputs are i.i.d samples from the two distributions $D_A$ and $D_B$ independently (all notations are listed in Tab.~\ref{tab:summary}). }
\begin{equation}
\begin{aligned}
\hcancel{\mathcal{S}_A \iid D^{m_1}_A \textnormal{ and } \mathcal{S}_B \iid D^{m_2}_B}
\end{aligned}
\end{equation}

\hsout{We assume that there is a set $\mathcal{T}$ of unknown target functions  that map the first domain to the second domain, i.e., for every $y \in \mathcal{T}$, we have: $y:\mathcal{X}_A \rightarrow \mathcal{X}_B$ and $D_B = y \circ D_A$. The goal of the learning algorithm is to fit a function $h\in \mathcal{H}$ that is closest to some $y \in \mathcal{T}$,}
\begin{equation}
\begin{aligned}
\hcancel{h \in \arginf_{f \in \mathcal{H}} \left\{ \inf\limits_{y \in \mathcal{T}} R_{D_A}[f,y] \right\}}
\end{aligned}
\end{equation} 
\hsout{Here, $R_D[f_1,f_2]$ is the generalization risk function between $f_1$ and $f_2$ for a distribution $D$, that is defined in the following manner:}
\begin{equation}
\hcancel{R_D[f_1,f_2] := \mathbb{E}_{x \sim D} \left[\ell (f_1(x),f_2(x)) \right]}
\end{equation}

\hsout{In the unsupervised setting, one cannot estimate this risk on the training samples. Instead, the learner must rely on the two training sets  $\{x_i \}^{m_1}_{i=1}$ and $\{y_j\}^{m_2}_{j=1}$. This, however, is inherently ambiguous, since %{\color{red} If $\mathcal{H}$ is not restricted, one can expect many functions that minimize the risk of Eq.~\ref{eq:32}. 
for example, given an invertible mapping $y$, one cannot distinguish between this mapping and a mapping that replaces $y(x)$ by $y(\hat{x})$ for a pair of samples $x,\hat{x}$ in the first domain such that $\mathbb{P}_{D_A}[x]=\mathbb{P}_{D_A}[\hat{x}]$, simply by observing the samples. This difficulty of recovering $y$ and not one of the alternative functions $\hat{y}$ such that $D_B = y\circ D_A = \hat{y}\circ D_A$ is termed in~\mbox{\citep{galanti2018the}} as ``the alignment problem''. Note that since this ambiguity is independent to the size of the training set, and, is, in many ways, more fundamental than the estimation error that arises from limited training sets, we simply assume that the algorithm has full access to $D_A$ and $D_B$ themselves. Constructing the additional terms that reflect the estimation error is straightforward, e.g., using well-established statistical learning tools, such as the Radamacher complexity~\mbox{\citep{Bartlett:2003:RGC:944919.944944}}.}

\hsout{In the field of cross-domain mapping, most contributions employ a circularity constraint~\mbox{\citep{xia2016dual,pmlr-v70-kim17a,CycleGAN2017,dualgan}}. Circularity requires the recovery of both $y$ and $y^{-1}$ simultaneously, using two hypotheses $h$ and $h'$, respectively, with the following objective:}
\begin{subequations}
\begin{align}
\hcancel{\inf\limits_{h,h'\in \mathcal{H}}} &\hcancel{\GAN_{\mathcal{C}}(h\circ D_A,D_B) + \GAN_{\mathcal{C}}(h'\circ D_B,D_A)} \label{eq:circa2}\\
& \hcancel{+ R_{D_A}[h'\circ h,\Id_A] + R_{D_B}[h\circ h',\Id_B]}
\label{eq:circb2}
\end{align}
\end{subequations} 
\hsout{where $\GAN_{\mathcal{C}}(D_1,D_2)$ denotes the GAN divergence between distributions $D_1$ and $D_2$ that was first introduced by~\mbox{\cite{NIPS2014_5423}}, defined as follows:}
\begin{equation}\label{eq:GAN}
\hcancel{\GAN_{\mathcal{C}}(D_1,D_2):=\sup\limits_{d \in \mathcal{C}} \Big\{ \mathbb{E}_{x \sim D_1}[\log(1-d(x))] + \mathbb{E}_{x \sim D_2}[\log(d(x))] \Big\}}
\end{equation}

\hsout{Here, $\mathcal{C}$ is a set of discriminator functions. The terms in Eq.~\ref{eq:circa} ensure that the samples generated by mapping domain $A$ to domain $B$ follow the distribution of samples in domain $B$ and vice versa. The terms in~\ref{eq:circb} are the circularity terms, which ensure that mapping a sample from one domain to the second and back, results in the original sample. Note that the first two terms match distributions (via the GAN scores) and the last two match individual samples (via the loss $\ell$ in the risk).}

\hsout{The circularity terms are shown empirically to improve the obtained results. However, these terms do not eliminate all of the inherent ambiguity. Let $\Pi$ be an invertible permutation of the samples in domain $B$ ({\em not} a permutation of the vector elements of the representation of samples in $B$).~\mbox{\cite{galanti2018the}} explain that every circularity preserving pair $h$ and $h'$ gives rise to many alternative solutions of the form $\hat{h} = h \circ \Pi$ and $\hat{h}' = \Pi^{-1} \circ \hat{h}'$}
\begin{equation}
\label{eq:Pi}
\begin{aligned}
\hcancel{(h' \circ \Pi^{-1})\circ (\Pi \circ h ) }  &\hcancel{ = h\circ h' \approx \Id_A,\text{  and} }\\
\hcancel{(\Pi \circ h ) \circ (h' \circ \Pi^{-1}) = \Pi \circ (h\circ} & \hcancel{ h') \circ \Pi^{-1} \approx  \Pi \circ \Id_B \circ \Pi^{-1} = \Id_B.}
\end{aligned}
\end{equation}

\hsout{If, in addition, $\Pi$ satisfies $D_B(x) \approx D_B(\Pi(x))$, then the GAN terms of Eq.~\ref{eq:circa} are approximately invariant to this permutation.}

\hsout{
Since both low GAN divergence and circularity cannot, separately or jointly, eliminate the ambiguity of the mapping problem, a complete explanation of the success of unsupervised cross-domain mapping must consider the hypothesis classes $\mathcal H$ and $\mathcal C$. This is what we intend to do in Sec.~\ref{sec:dualProof}. 
}

{\color{red}
\section{Problem Setup}
\label{sec:problemformulation}

In this paper, we consider the {\em Unsupervised Cross-Domain Mapping Problem}. In this setting, there are two domains $A = (\mathcal{X}_A,D_A)$ and $B = (\mathcal{X}_B,D_B)$ where $D_A$ and $D_B$ are distributions over the sample spaces $\mathcal{X}_A \subset \mathbb{R}^N$ and $\mathcal{X}_B \subset \mathbb{R}^M$ respectively (formally, we assume that both spaces are equipped with $\sigma$-algebras). In addition, there is a hypothesis class $\mathcal{H}$ of functions $h:\mathcal{X}_A \to \mathbb{R}^M$ and a loss function $\ell:\mathbb{R}^M \times \mathbb{R}^M \to \mathbb{R}$. Our results are shown for the $L_2$-loss $\ell(x_1,x_2) = \|x_1-x_2\|^2_2$.

In this setting, there is an unknown target function $y$ that maps the first domain to the second domain, i.e., $y:\mathcal{X}_A \rightarrow \mathcal{X}_B$ and $D_B = y \circ D_A$. In Appendix.~\ref{sec:dualProofExt}, we extend the framework and the results to include multiple target functions. For simplicity, in the main text we focus on the unique case, where, there is only one target function that we wish to learn. 

As an example that is often used in the literature, $\mathcal{X}_A$ is a set of images of shoes and $\mathcal{X}_B$ is a set of images of shoe edges, see Fig.~\ref{fig:edges_to_shoes}(a). Here, $D_A$ is a distribution of images of shoes and $D_B$ a distribution of images of shoe edges. The function $y$ takes an image of a shoe and maps it to an image of the edges of the shoe. {\color{red} The assumption that $y\circ D_A = D_B$ simply means that the target function, $y$, takes a sampled image of a shoe $x \sim D_A$ and maps it to a sample $y(x)$ from the distribution of images of edges.}

%Throughout the paper we assume that every function $f \in \mathcal{H} \cup \{y\}$ (that satisfies $f:\mathcal{X}_A \to \mathcal{X}_B$) is measurable (with respect to the $\sigma$-algebras of $\mathcal{X}_A$ and $\mathcal{X}_B$ respectively). 

In contrast to the supervised case, where the learning algorithm is provided with a dataset of labelled samples $(x,y(x))$ for $x \sim D$ and $y$ is the target function, in the unsupervised case that we study, the only inputs of the learning algorithm $\mathcal{A}$ are i.i.d samples from the two distributions $D_A$ and $D_B$ independently. 
\begin{equation}
\begin{aligned}
\mathcal{S}_A \iid D^{m_1}_A \textnormal{ and } \mathcal{S}_B \iid D^{m_2}_B
\end{aligned}
\end{equation}
The set $\mathcal{S}_A$ consists of unlabelled instances in $\mathcal{X}_A$ and the set $\mathcal{S}_B$ consists of labels with no sources. We also do not assume that for any $a \in \mathcal{S}_A$ there is a corresponding $b \in \mathcal{S}_B$, such that, $b = y(a)$.

The goal of the learning algorithm $\mathcal{A}$ is to fit a function $h \in \mathcal{H}$ that is closest to $y$,
\begin{equation}
\label{eq:32}
\begin{aligned}
h \in \arginf_{f \in \mathcal{H}} R_{D_A}[f,y] 
\end{aligned}
\end{equation} 
Here, $R_D[f_1,f_2]$ is the generalization risk function between $f_1$ and $f_2$ with respect to a distribution $D$, that is defined in the following manner:
\begin{equation}\label{eq:genRisk}
R_D[f_1,f_2] := \mathbb{E}_{x \sim D} \left[\ell (f_1(x),f_2(x)) \right]
\end{equation}

In supervised learning, the algorithm is provided with the labels of the target function $y$ on the training set $\mathcal{S}_A$ and estimates the generalization risk $R_{D_A}[h,y]$ using the empirical risk $R_{\mathcal{S}_A}[h,y] := \frac{1}{\vert \mathcal{S}_A\vert} \sum_{x \in \mathcal{S}_A} \ell(h(x),y(x))$. In the proposed unsupervised setting, one cannot estimate this risk on the training samples, since the algorithm is not provided with the labels $y(x)$ for any sample $x \in \mathcal{S}_A$. Instead, the learner must rely on the two independent sets $\mathcal{S}_A$ and $\mathcal{S}_B$. 

With regards to the example above, the learning algorithm is provided with a set of $m_1$ images of shoes and $m_2$ images of shoe edges. The two sets are independent and unmatched. The goal of the learning algorithm is to provide a hypothesis $h$ that approximates $y$. Informally, we want to have $h(a) \approx y(a)$ in expectation over $a \sim D_A$, i.e., $h$ and $y$ map the same image of a shoe to the same image of shoe edges. $h$ can fail in multiple ways: it can fail to produce the output domain, i.e., $h \circ D_A$ will diverge from $D_B$, and second, even if $h\circ D_A = D_B$, we can have it map different than $y$, i.e., there would be  a high probability for samples $a \sim D_A$, such that, $\ell(h(a),y(a)) = \|y(a)-h(a)\|^2_2$ is large (see Sec.~\ref{sec:alignment}). 

\subsection{The Unsupervised Alignment Problem}\label{sec:alignment}

\begin{figure}[t]
\centering
\begin{small}
  \begin{tabular}{
  c@{\hspace*{2mm}}
  c@{\hspace*{2mm}}
  c@{\hspace*{0.1mm}}
  c@{\hspace*{2mm}}
  c@{\hspace*{14mm}}
  c@{\hspace*{2mm}}
  c@{\hspace*{2mm}}
  c@{\hspace*{0.1mm}}
  c@{\hspace*{2mm}}
  c
  }
\includegraphics[width=0.09\linewidth, clip]{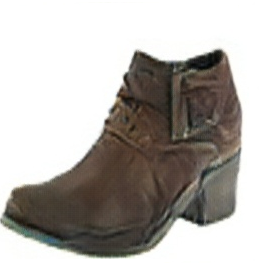}&
\includegraphics[width=0.09\linewidth, clip]{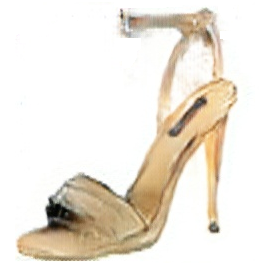}&&
\includegraphics[width=0.09\linewidth, clip]{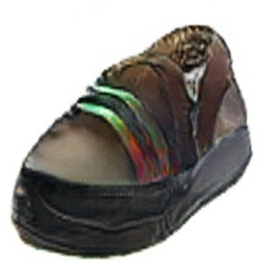}&
\includegraphics[width=0.09\linewidth, clip]{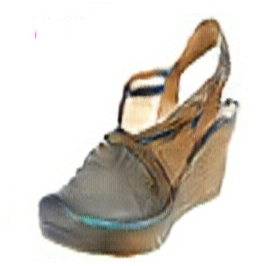}&
\includegraphics[width=0.09\linewidth, clip]{figures_alignment/shoe1.png}&
\includegraphics[width=0.09\linewidth, clip]{figures_alignment/shoe2.png}&&
\includegraphics[width=0.09\linewidth, clip]{figures_alignment/shoe3.png}&
\includegraphics[width=0.09\linewidth, clip]{figures_alignment/shoe4.png}\\
\includegraphics[width=0.09\linewidth, clip]{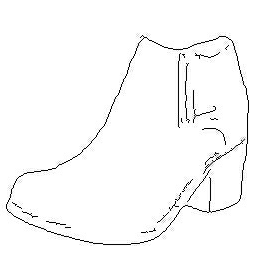}&
\includegraphics[width=0.09\linewidth, clip]{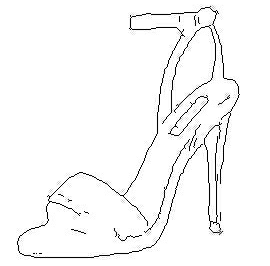}&&
\includegraphics[width=0.09\linewidth, clip]{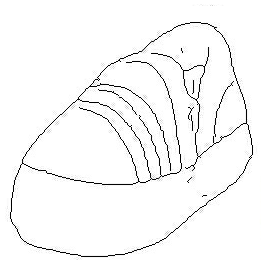}&
\includegraphics[width=0.09\linewidth, clip]{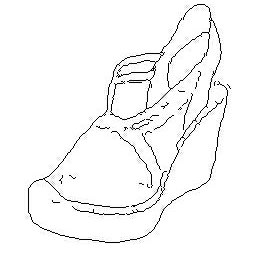}&
\includegraphics[width=0.09\linewidth, clip]{figures_alignment/edge3.png}&
\includegraphics[width=0.09\linewidth, clip]{figures_alignment/edge4.png}&&
\includegraphics[width=0.09\linewidth, clip]{figures_alignment/edge1.png}&
\includegraphics[width=0.09\linewidth, clip]{figures_alignment/edge2.png}\\
& & (a) & & & & & (b) & & \\
\includegraphics[width=0.09\linewidth, clip]{figures_alignment/edge1.png}&
\includegraphics[width=0.09\linewidth, clip]{figures_alignment/edge2.png}&&
\includegraphics[width=0.09\linewidth, clip]{figures_alignment/edge3.png}&
\includegraphics[width=0.09\linewidth, clip]{figures_alignment/edge4.png}\\
\includegraphics[width=0.09\linewidth, clip]{figures_alignment/edge3.png}&
\includegraphics[width=0.09\linewidth, clip]{figures_alignment/edge4.png}&&
\includegraphics[width=0.09\linewidth, clip]{figures_alignment/edge1.png}&
\includegraphics[width=0.09\linewidth, clip]{figures_alignment/edge2.png}\\
& & (c) & & & & &
\end{tabular}
\end{small}
% \smallskip
% \smallskip
% \smallskip
% \smallskip

% \begin{tabular}{cc}
% \hspace{-0.4cm}\includegraphics[width=0.2\linewidth, clip]{}&
% \includegraphics[width=0.2\linewidth, clip]{}\\
% (Handbags2Edges) & (Shoes2Edges)\\
% \end{tabular}
%  \end{small}
% \smallskip
% \smallskip
% \smallskip
\caption{{\color{red}Domain $A$ consists of shoes and domain $B$ consists of edges. (a) The correct alignment $y$ between the two domains. (b) A wrong alignment $\hat{h}$ between the two domains. The algorithm is provided with independent samples from domain $A$ and from domain $B$. It is not obvious what makes the algorithm return the mapping (a) instead of any other mapping between the two domains. (c) A permutation function $\Pi$ that gives (b) when applied on (a), i.e., $\hat{h} = \Pi \circ y$.}}
  \label{fig:edges_to_shoes}
\end{figure}

We next address that the proposed unsupervised learning setting suffers from what we term {\em ``the alignment problem''}. The problem arises from the fact that when observing samples only from the marginal distributions $D_A$ and $D_B$, one cannot uniquely link the samples in the source domain to those of the target domain, see Fig.~\ref{fig:edges_to_shoes}(b). %To illustrate this pitfeall, we provide two concrete examples. 

As a simple example, let $D_A$ and $D_B$ be two discrete distributions, such that there are two points $a,a' \in \mathcal{X}_A$ that satisfy $\mathbb{P}_{x \sim D_A}[x = a]=\mathbb{P}_{x \sim D_A}[x = a']$. Assuming that the mapping $y$ is one-to-one, then $y(a)$ and $y(a')$ have the same likelihood in the density function $\mathbb{P}_{x \sim D_B}[x = \cdot ]$. Therefore, a-priori it is unclear if the target mapping takes $a$ and maps it to $y(a)$ or to $y(a')$.

More generally, given the target function $y$ between the two domains, in many cases it is possible to define many alternative mappings of the form $\hat{h} = \Pi \circ y$, where $\Pi$ is a mapping that satisfies $\Pi \circ D_B = D_B$. {\color{red} For such functions, we have, $\hat{h} \circ D_A = \Pi \circ y \circ D_A = \Pi \circ D_B = D_B$, and therefore, they satisfy the same assumptions we had regarding the target function $y$.} 

{\color{red}Thus,} a-priori it is unclear why a cross-domain mapping algorithm that only observes samples from $D_A$ and $D_B$ will recover the target mapping $y$ instead of any arbitrary mapping $\hat{h} \neq y$, such that, $\hat{h} \circ D_A = D_B$. {\color{red} In Fig.~\ref{fig:edges_to_shoes}(b), the mapping can be represented as $\hat{h} = \Pi \circ y$, where $\Pi$ is the mapping illustrated in Fig.~\ref{fig:edges_to_shoes}(c).}

{
\color{red} The inherent ambiguity of the mapping is independent of the size of the training set, and, is, in many ways, more fundamental than the estimation error that arises from limited training sets. We simply assume that the algorithm has full access to $D_A$ and $D_B$ themselves. To extend the analysis to the case where the algorithm is provided with only a finite sample from each distribution, we can use well-established statistical learning tools, such as the Rademacher complexity~\citep{Bartlett:2003:RGC:944919.944944}. Since these methods are well-established, we focus on the aspects that are unique to the alignment problem. }

\subsection{Circularity Constraints do not Eliminate All of the Inherent Ambiguity}

In the field of unsupervised cross-domain mapping, most contributions learn the mapping $h$ between the two domains $A$ and $B$ by employing two constraints. The first, $h$ is restricted to minimize a GAN-like divergence~\citep{NIPS2014_5423}. In this work, in order to support a more straightforward analysis, we employ WGANs as the specific divergence and $h$ minimizes $\W(h\circ D_A,D_B)$ (Eq.~\ref{eq:wass}). %, we examine these algorithms when employing the $1$-Wasserstein distance (see Eq.~\ref{eq:wass}). We note that this is simply a matter of choice, since these algorithms suffer from the unsupervised alignment problem for any choice of a standard GAN-like divergence. 
%When using the WGAN divergence, $h$ minimize $\W(h\circ D_A,D_B)$ (Eq.~\ref{eq:wass}). %Since $\W$ is a distance function between distribution, a mapping $h$ that provides a small value for $\W(h\circ D_A,D_B)$ is simply a mapping that takes samples from $D_A$ as inputs and projects them into samples from $D_B$. 

Many of the cross-domain mapping algorithms also employ what is called the circularity constraint~\citep{xia2016dual,pmlr-v70-kim17a,CycleGAN2017,dualgan}. Circularity requires learning a second mapping $h'$ that maps between $B$ and $A$ (the opposite direction of $h$) and serves as an inverse function to $h$. Similarly to $h$, $h'$ is trained to minimize a GAN divergence but in the other domain $B$, e.g., $\W(h'\circ D_B,D_A)$. The circularity terms, which are minimized over  $h$ and $h'$ take the form $R_{D_A}[h'\circ h,\Id_{\mathcal{X}_A}]$ and $R_{D_B}[h\circ h',\Id_{\mathcal{X}_B}]$, where $\Id_{\mathcal{X}}:\mathcal{X} \to \mathcal{X}$ is the identity function, i.e., $\forall x \in \mathcal{X}:\Id_{\mathcal{X}}(x)=x$. 
%By minimizing the circularity terms, we restrict $h$ and $h'$ to be inverses of each other on samples from $D_A$ (and $D_B$). 
In other words, for a random sample $a\sim D_A$, we expect to have, $h'(h(a))\approx a$ and for a random sample $b \sim D_B$, we expect to have, $h(h'(b))\approx b$. %The reason why $h'$ minimizes the GAN-like divergence $\W(h'\circ D_B,D_A)$ is because we want the inverse of $h$ to be a mapping between the domains $B$ to $A$. In short, we learn two mappings $h$ and $h'$ that are inverse to each other and map between the two domains (in opposite directions). 

The complete minimization objective of both $h$ and $h'$ is, therefore, as follows:
\begin{subequations}
\label{eq:circ}
\begin{align}
\inf\limits_{h,h'\in \mathcal{H}} &\W(h\circ D_A,D_B) + \W(h'\circ D_B,D_A) \label{eq:circa}\\
&+ R_{D_A}[h'\circ h,\Id_{\mathcal{X}_A}] + R_{D_B}[h\circ h',\Id_{\mathcal{X}_B}]
\label{eq:circb}
\end{align}
\end{subequations} 
The terms in Eq.~\ref{eq:circa} ensure that the samples generated by mapping domain $A$ to domain $B$ follow the distribution of samples in domain $B$ and vice versa. The terms in Eq.~\ref{eq:circb} ensure that mapping a sample from one domain to the second and back, results in the original sample. Note that the first two terms match distributions (via the WGAN scores) and the last two match individual samples (via the loss $\ell$ in the risk).

The circularity terms are shown empirically to improve the obtained results. However, these terms do not eliminate all of the inherent ambiguity, as shown in the following observation. Let $\Pi$ be an invertible permutation of $D_B$, i.e., $\Pi:\mathcal{X}_B \to \mathcal{X}_B$ is an invertible mapping and $\Pi \circ D_B = D_B$. Then, the pair $h = \Pi \circ y$ and $h' = y^{-1} \circ \Pi^{-1}$ achieves:
\begin{equation}
\W(\hat{h}\circ D_A,D_B) + \W(\hat{h}'\circ D_B,D_A) + R_{D_A}[h'\circ h,\Id_{\mathcal{X}_A}] + R_{D_B}[h\circ h',\Id_{\mathcal{X}_B}] = 0
\end{equation}
Informally, if $\Pi$ is an invertible permutation of the samples in domain $B$ ({\em not} a permutation of the vector elements of the representation of samples in $B$), then, if $y$ is the target function and $y^{-1}$ is its inverse function, the pair of functions $h = \Pi \circ y$ and $h' = y^{-1} \circ \Pi^{-1}$ achieves zero losses. Therefore, even though the function $\hat{h} = \Pi \circ y$ might correspond to an incorrect alignment between the two domains $A$ and $B$ (i.e., the function $\hat{h}$ is very different from $y$), the pair $\hat{h}$ and $\hat{h}'$ can still achieve a zero value on each of the losses proposed by~\citep{xia2016dual,pmlr-v70-kim17a,CycleGAN2017,dualgan}. 

Since both low GAN divergence and circularity cannot, separately or jointly, eliminate the ambiguity of the mapping problem, a complete explanation of the success of unsupervised cross-domain mapping must consider the hypothesis classes $\mathcal H$ and $\mathcal C$. This is what we intend to do in Sec.~\ref{sec:dualProof}. 
}

{\color{red} 
\subsection{Cross-Domain Mapping Algorithms}\label{sec:algorithms}
A central goal in this work, is the derivation of generalization bounds that can be used to compare different cross-domain mapping algorithms. The set of cross-domain mapping algorithms $\{\mathcal{A}_{\omega}\}_{\omega \in \Omega}$ that are compared, are indexed by a vector of hyperparameters $\omega \in \Omega$. The vector of hyperparameters $\omega$ can include the architecture of the hypothesis class from which $\mathcal{A}_{\omega}$ selects candidates, the learning rate, batch size, etc'. To compare the performance of the algorithms, an upper bound on the term $R_{D_A}[h,y]$ is provided. Fortunately, this bound can be estimated without the need for supervised data, i.e., without paired matches $(x,y(x))$. Here, $h$ is the selected hypothesis by a cross-domain mapping algorithm $\mathcal{A}_{\omega}$ provided with access to the distributions $D_A$ and $D_B$. %{\color{blue} To compare the performance of two cross-domain mapping algorithms $\mathcal{A}_{\omega_1}$ and $\mathcal{A}_{\omega_2}$, one estimates the bound on their output hypotheses $h^{1}$ and $h^{2}$, and selects the algorithm $\mathcal{A}_{\omega_i}$ that provides a smaller bound.}

The outcome of every deep learning algorithm often depends on the random initialization of its parameters and the order in which the samples are presented. Such non-deterministic algorithm $\mathcal{A}_{\omega}$ can be seen as a mapping from the training data $(D_A,D_B)$ to a subset of the hypothesis space $\mathcal{H}$. This subset, which is denoted as $\mathcal{P}_{\omega}(D_A,D_B)$, contains all the hypotheses that the algorithm may return for the given training data. %a set of hypothesis that the algorithm s
%For simplicity, each algorithm $\mathcal{A}_{\omega}$, provided with access to input distributions $D_A$ and $D_B$ is associated with a set $\mathcal{P}_{\omega}(D_A,D_B)$, that contains the possible output hypotheses of the algorithm. 
Typically, the set $\mathcal{P}_{\omega}(D_A,D_B)$ is much sparser than the original hypothesis class.  Since the algorithm is not assumed to be deterministic, to measure the performance of a cross-domain mapping algorithm $\mathcal{A}_{\omega}$, an upper bound on $R_{D_A}[h,y]$ is derived for any $h \in \mathcal{P}_{\omega}(D_A,D_B)$.

In this paper, special attention is given to the WGAN algorithm applied for unsupervised cross-domain mapping. The algorithm, $\mathcal{A}_{\omega}$, given access to two distributions $D_A$ and $D_B$ and hypothesis class $\mathcal{H}_{\omega}$ returns a hypothesis $h \in \mathcal{H}_{\omega}$ (see Eq.~\ref{eq:wass}), such that, $\W(h \circ D_A,D_B) \leq \epsilon_0$ if exists, where $\epsilon_0 > 0$ is some small enough threshold and $\mathcal{H}_{\omega} \subset \mathcal{H}$ is the hypothesis class of the algorithm $\mathcal{A}_{\omega}$. %One can think of $\epsilon_0$ as a stopping criteria for training $h$. 

The following are concrete examples of the proposed framework. We specify, $\omega$, $\mathcal{H}$, $\mathcal{H}_{\omega}$ and $\mathcal{P}_{\omega}(D_A,D_B)$ for different settings, where $\mathcal{A}_{\omega}$ is the WGAN algorithm. 

\begin{enumerate}
    \item \textbf{The hyperparameters are the learning rate and batch size:} in this case each $\omega = (\mu,s)$ includes a different learning rate $\mu>0$ and batch size $s \in \mathbb{N}$. The hypothesis class $\mathcal{H}_{\omega}$ from which $\mathcal{A}_{\omega}$ selects candidates is $\mathcal{H}$ itself. The set of possible outputs $\mathcal{P}_{\omega}(D_A,D_B)$ of $\mathcal{A}_{\omega}$ is a subset of $\{h \in \mathcal{H} \;\vert\; \W(h\circ D_A,D_B)\leq \epsilon_0\}$. The framework compares  $\mathcal{P}_{\omega}(D_A,D_B)$ between different choices of learning rates and batch sizes.
    \item \textbf{The hyperparameter is the number of layers:} in this case $\mathcal{H}$ is a class of neural networks of varying number of layers, each of size $\in [r_1,r_2]$, for some pre-defined $r_1,r_2 \in \mathbb{N}$. % and a specific activation function.
    The hyperparameter is the maximal number of layers $\omega = k \in \mathbb{N}$ of the learned neural network $h \in \mathcal{H}$. The hypothesis class $\mathcal{H}_{k}$ from which $\mathcal{A}_{k}$ selects candidates is the set of neural networks in $\mathcal{H}$ that have a depth $\leq k$. In this setting, $\mathcal{P}_{k} = \{h \in \mathcal{H}_{k} \;\vert\; \W(h\circ D_A,D_B)\leq \epsilon_0\}$. This case is discussed in Secs.~\ref{sec:gentle} and~\ref{sec:analyzing}.
    \item \textbf{The hyperparameters are the weights of the first layers:} in this case, $\mathcal{H} = \{h_{\theta,\omega} = g_{\theta} \circ f_{\omega} \vert \theta \in \Theta, \omega \in \Omega\}$ is a set of neural networks, each parameterized by two sets of parameters $\theta \in \Theta$ and $\omega \in \Omega$. We can think of $f_{\omega}$ as the first $l_1$ layers of $h_{\theta,\omega}$ and $g_{\theta}$ as the last $l_2$ layers of it. For instance, $f_{\omega}$ can be an encoder and $g_{\theta}$ a decoder. Here, $\mathcal{H}_{\omega} = \{h_{\theta,\omega} \vert \theta \in \Theta\}$ is the set of neural networks $h_{\theta,\omega} \in \mathcal{H}$ with fixed $\omega$. In this setting, $\mathcal{P}_{\omega} = \{h \in \mathcal{H}_{\omega} \;\vert\; \W(h\circ D_A,D_B)\leq \epsilon_0\}$. We investigate this setting in Sec.~\ref{sec:equiv}.
\end{enumerate}

}

\section{\hsout{The Simplicity Hypothesis}}
\label{sec:hypothesis}

\sout{Despite the availability of a large number of alternative  hypotheses $\hat{h}$ that satisfy the constraints of Eq.~\ref{eq:circ}, the methods} \hcancel{of~\cite{xia2016dual,pmlr-v70-kim17a,CycleGAN2017,dualgan}} \sout{ enjoy empirical success, Why?} 
%Since low discrepancy by itself cannot explain the success of mapping between domains $A$ and $B$, and circularity eliminates only part of the ambiguity, an alternative explanation is required. 

\sout{Our hypothesis is that the small-discrepancy mappings of the lowest complexity approximate the alignment of the target functions.} 
%{\color{blue} Our hypothesis is that {\em a mapping $h$ that has the lowest complexity to have a small discrepancy approximates the target function}. Here, the discrepancy is a measure of distance between the distributions $h \circ D_A$ and $D_B$. For example, the discrepancy can be the WGAN divergence $\W(h \circ D_A,D_B)$. In addition, the complexity stands for the number of layers of $h$, where each layer is of similar size. Nevertheless, we do not limit the discussion to one measure of complexity and we believe that the simplicity hypothesis is a wider phenomena.} 
\sout{We further hypothesize that when performing research in unsupervised mapping, goldilock architectures are selected. These architectures are complex enough to allow small discrepancies, but not complex enough to support mappings that are not minimal in complexity. By doing so, one of the minimal-complexity low-discrepancy mappings is learned.} 
%{\color{blue} Revisiting the example in Fig.~\ref{fig:edges_to_shoes}, intuitively, it is simpler to map each image of a shoe to its edges instead of mapping the shoe to edges of a totally different shoe. }

\subsection{\hsout{An Illustrative Example}}
\label{sec:toy}

\sout{In order to  illustrate our hypothesis, we present a very simple toy example, depicted in Fig.~\ref{fig:toy}. Consider the domain $A$ of uniformly distributed points $(x_1,x_2)^\top \in \mathbb{R}^2$, where $0 \leq x_1 < 1$ and $x_2=0.5$. Let $B$ be a similar domain, except $x_2=2$. We are interested in learning the mapping $y^{2D}_{AB}((x_1,0.5)^\top) = (x_1,2)^\top$. We note that there are infinitely many mappings from domain $A$ to $B$ that satisfy the constraints of Eq.~\ref{eq:circ}.}%, see Fig.~\ref{fig:toy}.

\sout{However, when we learn the mapping using a neural network with one hidden layer of size 2, and Leaky ReLU activations,}\hcancel{~\citep{maas2013rectifier}}
%\footnote{$\sigma_a(x) = \textnormal{Ind}[x<0] ax + \textnormal{Ind}[x\geq 0]x$, for the indicator function $\textnormal{Ind}[q]$ which maps a true value to one, zero otherwise.}
\sout{$y^{2D}_{AB}$ is one of only two options. In this case $h(x) = \sigma_a(Wx+b)$, for $W\in \mathbb{R}^{2\times 2}$, $b\in \mathbb{R}^2$ and where $\sigma_a$ is applied per coordinate. The only admissible solutions are of the form} 
 {$W_b =  \left( \begin{array}{cc}
1 & -2b_1  \\
0 & 4-2b_2  \end{array} \right)$} or $W'_b=
 \left( \begin{array}{cc}
-1 & 1-2b_1  \\
0 & 4-2b_2  
\end{array} \right)$ \sout{and $b = (b_1,b_2)^\top$, which are identical, for every $b$, to $y^{2D}_{AB}$ or to an alternative  $y^{2D'}_{AB}((x_1,0.5)^\top) = (1-x_1,2)^\top$. Exactly the same situation holds for any pair of line segments in $\mathbb{R}_+^d$.}%; See supplementary material for more details on this example. 

% \begin{figure}[t]
% \begin{centering}
% \includegraphics[scale=0.75]{}
% \end{centering}

% \caption{\label{fig:toy} An illustrative example, where the two domains are line segments in $\mathbb{R}^2$. There are infinitely many mappings that preserve the uniform distribution on the two segments. However, only two stand out as ``semantic''. These are exactly the two mappings that can be captured by a neural network with only two hidden neurons and Leaky ReLU activations, i.e., by a function  $h(x) = \sigma_a(Wx+b)$, for a weight matrix $W$ and the bias vector $b$.}
% \end{figure}

\sout{Therefore, by restricting the hypothesis space of $h$, we eliminate all alternative solutions, except two. These two are exactly the two mappings that would commonly be considered ``more semantic'' than any other mapping, and can be expressed as the simplest possible mapping through a shared one dimensional space. While this is an extreme example, we believe that the principle is general, since %are able to show that 
limiting the complexity of the admissible solutions eliminates the solutions that are derived from a target function $y$ by permuting the samples in the space $\mathcal X_A$, because such mixing requires added complexity.}

\section{Generalization Bounds for Unsupervised Cross-Domain Mapping}
\label{sec:dualProof}

In this section, we discuss sufficient conditions for overcoming the alignment problem. For simplicity, we first focus on the \sout{deterministic} {\color{red} unique} case, i.e., \sout{$\mathcal{T}=\{y\}$} {\color{red} there is a unique target function $y$}. The results are then extended, in Appendix~\ref{sec:dualProofExt}, to the \sout{non-deterministic case} {\color{red} non-unique case, where there are multiple target functions}. % (see Lems.~\ref{lem:boundGENnondet} and~\ref{lem:corBoundGENnondet} in Appendix~\ref{sec:proofs}). 

\subsection{\hsout{Terminology and Assumptions}}

\sout{
Before we state our main results, we introduce some necessary terminology and assumptions. The real line is denoted by $\mathbb{R} := (-\infty,\infty)$ and the set of natural numbers by $\mathbb{N} := \{1,2,\dots\}$. We denote by $\Id:\mathcal{X} \to \mathcal{X}$ the identity function, i.e., for every $x\in \mathcal{X}$, $\Id(x)=x$. For a vector $x = (x_1,\dots,x_m) \in \mathbb{R}^m$ we denote $\|x\|_2 := \sqrt{\sum^{m}_{i=1} x^2_i}$ the Euclidean norm of $x$ and for a matrix $W \in \mathbb{R}^{m \times n}$, we denote by $\|W\|_2 := \max\limits_{x\neq 0} \frac{\|Wx\|_2}{\|x\|_2}$ the induced operator norm of $W$.
For a differentiable function $f : \mathbb{R}^{m} \rightarrow \mathbb{R}^{n}$ and $x \in \mathbb{R}^m$, we denote by $\Diff_f(x) := \left(\frac{\partial f_i}{\partial \zeta_j}(x)\right)_{i \in [n],j \in [m]}$ the Jacobian matrix of $f$ in $x$ and for a twice differentiable function $f : \mathbb{R}^{m} \rightarrow \mathbb{R}$, we denote by $\Hess_f(x) := \left(\frac{\partial^2 f}{\partial \zeta_i \partial \zeta_j}(x)\right)_{i,j \in [m]}$ the Hessian matrix of $f$ in $x$. For $f:\mathbb{R}^m \to \mathbb{R}$ we will use the gradient notation instead of the Jacobian notation, $\nabla f(x) = \Diff_f(x)$. Given a function $f:\mathcal{X}_1 \to \mathcal{X}_2$ (where $\mathcal{X}_1 \subset \mathbb{R}^m$ and $\mathcal{X}_2 \subset \mathbb{R}^n$), we denote $f \in C^r$ if $f$ is $r$-times continuously differentiable and $f \in C^r_{\textnormal{diff}}$ if $f \in C^r$, $f$ is invertible and $f^{-1} \in C^r$. For a function $f:\mathbb{R}^m \to \mathbb{R}^{n}$ and $\mathcal{X} \subset \mathbb{R}^m$, we define, $\|f\|_{\infty,\mathcal{X}} := \sup\limits_{x \in \mathcal{X}} \|f(x)\|_2$. In addition, given a function $f:\mathcal{X} \to \mathbb{R}^n$, we denote, $\|f\|_{\Lip} = \sup\limits_{x,y \in \mathcal{X}}\frac{\|f(x)-f(y)\|_2}{\|x-y\|_2}$ for a domain $\mathcal{X} \subset \mathbb{R}^m$. In particular, if $\mathcal{X}$ is convex and $f$ is differentiable, we have $\|f\|_{\Lip} = \|\Diff_f\|_{\infty,\mathcal{X}}$. For a twice differentiable function $f:\mathcal{X} \to \mathbb{R}$, we denote $\beta(f):= \|\Hess_f\|_{\infty,\mathcal{X}}$. Given a set $E$ and two functions $F:E \to \mathbb{R}$ and $G:E \to \mathbb{R}$, we denote, $\forall e \in E: F(e)\lesssim G(e)$ if and only if $\exists C>0$ (independent of $e$) such that, $\forall e \in E: F(e) \leq C\cdot G(e)$. 
}

\sout{We consider a set of random (the randomness arises from the initializations, selection of the data, etc') cross-domain mapping algorithms $\{\mathcal{A}_{\omega}\}_{\omega \in \Omega}$. Each algorithm $\mathcal{A}_\omega$ takes two distributions $D_A$ and $D_B$ (or $\mathcal{S}_A \iid D^{m_1}_A$ and $\mathcal{S}_B \iid D^{m_2}_B$ if it does not have full access to the two distributions) and returns a hypothesis $h \in \mathcal{H}$. Here, $\omega$ is the vector of hyperparameters of $\mathcal{A}_{\omega}$ and $\Omega$ is the set of all possible vectors of hyperparameters. We denote by $\mathcal{P}_{\omega}$ the set of possible outputs of $\mathcal{A}_{\omega}$. This set is expected to be much sparser than $\mathcal{H}$. }

\sout{Throughout the paper we assume that the sets $\mathcal{X}_A \subset \mathbb{R}^N$ and $\mathcal{X}_B \subset \mathbb{R}^M$ are convex and bounded sets. We also assume that $\mathcal{C} \subset C^2$ and that every function $f \in \mathcal{H} \cup \mathcal{T}$, such that $f:\mathcal{X}_A \to \mathcal{X}_B$ is measurable (with respect to the $\sigma$-algebras of $\mathcal{X}_A$ and $\mathcal{X}_B$ respectively). Our results are shown for the $L_2$-loss $\ell(x_1,x_2) = \|x_1-x_2\|^2_2$ (unless stated otherwise).}

\subsection{A Gentle Start}
\label{sec:gentle}

In this section, we illustrate the main idea using a simple lemma from~\citep{benaim2017maximally}.
{\color{red} We would like to see how the complexity of the hypothesis selected by the cross-domain mapping algorithm affects its performance. For this purpose, we will take the WGAN algorithm as our benchmark and investigate how the complexity of the selected hypothesis affects the performance of the WGAN algorithm in approximating the target function $y$. We will see that the effect of the complexity of the hypothesis class forms a bias-variance type of trade-off.} %By the end of this section we explain that this lemma is very limited in terms of measuring the generalization risk and in expressing meaningful relationships between the involved parameters. In Sec.~\ref{sec:genB} we introduce generalization bounds that provides stronger relationships between the involved arguments and in measuring the generalization risk. }

{\color{red} First, we decompose the hypothesis class into complexity classes $\mathcal{H} := \cup^{\infty}_{k=1} \mathcal{H}_k$, where $\mathcal{H}_k \subset \mathcal{H}_{k+1}$ for all $k \in \mathbb{N}$ and we denote by $C(h) := \argmin\limits_{k\in \mathbb{N}} [h \in \mathcal{H}_k]$ the complexity of $h \in \mathcal{H}$. As an example, $\mathcal{H}$ can be a class of neural networks with a varying number of layers, each of size $ \in [r_1,r_2]$ (for some pre-defined $r_1,r_2 \in \mathbb{N}$) and a specific activation function. In addition, the complexity class $\mathcal{H}_k$ can be the class of such neural networks with, at most, $k$ layers. In this case, the complexity of a neural network is determined by its number of layers. However, to make our framework and arguments general, we do not stick to a specific measure of complexity.} 

\sout{We consider the case where} {\color{red} %As we said above, we investigate the effect of the complexity of the selected mapping between the two domains. 
We define a set of cross-domain mapping algorithms $\{\mathcal{A}_k\}_{k \in \mathbb{N}}$ to be compared. Each algorithm is parameterized by the complexity $k$ of the class $\mathcal{H}_k$ from which candidate hypotheses are selected.} \sout{In this case, we decompose the hypothesis class into complexity classes, $\mathcal{H} := \cup^{\infty}_{k=1} \mathcal{H}_k$ where $\mathcal{H}_k \subset \mathcal{H}_{k+1}$ for $k \in \mathbb{N}$. In addition, we denote by $C(h) := \argmin\limits_{k\in \mathbb{N}} [h \in \mathcal{H}_k]$ the complexity of $h \in \mathcal{H}$.} Inspired by the WGAN algorithm~\citep{DBLP:conf/icml/ArjovskyCB17}, we consider an algorithm $\mathcal{A}_k$ that {\color{red}given access to two distributions $D_A$ and $D_B$} returns \sout{a generator} {\color{red} a hypothesis $h$} from $\mathcal{H}_k$ that has a small WGAN divergence, $\W(h \circ D_A,D_B) \leq \epsilon_0$. Here, $\epsilon_0>0$ is some fixed threshold. Therefore, 
\begin{equation}\label{eq:Pk}
\mathcal{P}_{k}(D_A,D_B) = \left\{h \in \mathcal{H}_k \;\vert\; \W(h \circ D_A,D_B) \leq \epsilon_0\right\}
\end{equation}

In other words, similar to the way the ERM learning rule (Emprirical Risk Minimization) abstracts the specific details of the underlying algorithm, the abstraction of the WGAN algorithm that we consider in our analysis can be any algorithm that returns any hypothesis $h_1\in \mathcal{P}_{k}(D_A,D_B)$. The following lemma bounds the generalization risk of the learned hypothesis $h_1$. {\color{red} This algorithm has only one restriction on the selected hypothesis $h_1$, which is that it maps samples from $D_A$ into samples in $D_B$. Therefore, as discussed in Sec.~\ref{sec:alignment}, this algorithm suffers from the ``alignment problem'' and it does not necessarily learn an approximate version of the target function. The following lemma will help us determine when this algorithm is able to capture the target mapping.}

\begin{restatable}{lemma}{cvprBound}\label{lem:cvprBound1} Assume the settings of Sec.~\ref{sec:problemformulation} and Sec.~\ref{sec:gentle}. Then, for every function $h_1 \in \mathcal{P}_{k}(D_A,D_B)$, we have:
\begin{equation}\label{eq:cvprBound}
R_{D_A}[h_1,y] \lesssim \overbrace{\sup\limits_{h_2 \in \mathcal{P}_{k}(D_A,D_B)} R_{D_A}[h_1,h_2]}^{{\color{red} \textnormal{variance}}} + \overbrace{\inf\limits_{h \in \mathcal{P}_{k}(D_A,D_B)} R_{D_A}[h,y]}^{{\color{red}\textnormal{bias}}}
\end{equation}
\end{restatable}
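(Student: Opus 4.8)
The plan is to reduce everything to the triangle inequality for the $L_2$ metric. Since the loss is $\ell(x_1,x_2)=\|x_1-x_2\|_2^2$, for any two functions $f_1,f_2:\mathcal{X}_A\to\mathbb{R}^M$ we have $\sqrt{R_{D_A}[f_1,f_2]}=\big(\mathbb{E}_{x\sim D_A}\|f_1(x)-f_2(x)\|_2^2\big)^{1/2}=\|f_1-f_2\|_{L_2(D_A)}$, which is a pseudometric by Minkowski's inequality. So the first step is to record, for every $h\in\mathcal{P}_{k}(D_A,D_B)$,
\[
\sqrt{R_{D_A}[h_1,y]}\;\le\;\sqrt{R_{D_A}[h_1,h]}+\sqrt{R_{D_A}[h,y]}.
\]

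The second step is to square this and apply the elementary inequality $(a+b)^2\le 2a^2+2b^2$, which yields, for every $h\in\mathcal{P}_{k}(D_A,D_B)$,
\[
R_{D_A}[h_1,y]\;\le\;2\,R_{D_A}[h_1,h]+2\,R_{D_A}[h,y].
\]
Now I would bound the first term on the right by its worst case over the set $\mathcal{P}_{k}(D_A,D_B)$, which is legitimate precisely because $h$ ranges over that set: $R_{D_A}[h_1,h]\le \sup_{h_2\in\mathcal{P}_{k}(D_A,D_B)}R_{D_A}[h_1,h_2]$. The left-hand side does not depend on $h$, so the third step is simply to take the infimum over $h\in\mathcal{P}_{k}(D_A,D_B)$ on the right-hand side, giving
\[
R_{D_A}[h_1,y]\;\le\;2\,\sup_{h_2\in\mathcal{P}_{k}(D_A,D_B)}R_{D_A}[h_1,h_2]\;+\;2\,\inf_{h\in\mathcal{P}_{k}(D_A,D_B)}R_{D_A}[h,y],
\]
which is exactly the claim with the constant $C=2$ absorbed into $\lesssim$. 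The edge case $\mathcal{P}_{k}(D_A,D_B)=\emptyset$ needs no separate treatment: the hypothesis of the lemma already supplies $h_1\in\mathcal{P}_{k}(D_A,D_B)$, so the set is nonempty whenever there is anything to prove (and in any case the right-hand side would be $+\infty$ by the $\inf_\emptyset=\infty$ convention).

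Honestly, I do not expect a genuine obstacle here; the whole content is the observation that $\sqrt{R_{D_A}[\cdot,\cdot]}$ is a pseudometric, after which the ``variance plus bias'' decomposition falls out of one triangle inequality. The only points requiring a line of care are the passage from the squared-root form to the squared form (handled by $(a+b)^2\le2a^2+2b^2$, which is where the implied constant $2$ comes from) and noting that taking the infimum over $h$ is valid because the left-hand side is independent of $h$. Note also that this argument never uses the assumption $D_B=y\circ D_A$; that structural assumption only matters later, when one wants the bias term to vanish (e.g.\ when $y\in\mathcal{H}_k$, so that $y\in\mathcal{P}_{k}(D_A,D_B)$ since $\W(y\circ D_A,D_B)=0\le\epsilon_0$).
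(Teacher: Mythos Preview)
Your proposal is correct and follows essentially the same approach as the paper: both arguments are one-line triangle-inequality decompositions for the $L_2$ risk, followed by replacing the intermediate point $h$ by a sup/inf over $\mathcal{P}_k(D_A,D_B)$. The only cosmetic difference is that the paper works pointwise (using $\|a-c\|_2^2\le 3\|a-b\|_2^2+3\|b-c\|_2^2$ via $2\|a-b\|_2\|b-c\|_2\le 2\max(\|a-b\|_2^2,\|b-c\|_2^2)$) and then takes expectations, obtaining constant $3$, whereas you apply Minkowski in $L_2(D_A)$ first and then square with $(a+b)^2\le 2a^2+2b^2$, obtaining the slightly sharper constant $2$.
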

This lemma follows immediately from the triangle inequality (see Lem.~\ref{lem:cvprBound} in Appendix~\ref{sec:proofs}). It provides an upper bound on the generalization risk, $R_{D_A}[h_1,y]$, of the learned function, $h_1$, which is the argument that we would like to minimize. The upper bound is decomposed into two parts: $\sup\limits_{h_2 \in \mathcal{P}_{k}(D_A,D_B)} R_{D_A}[h_1,h_2]$ and $\inf\limits_{h \in \mathcal{P}_{k}(D_A,D_B)} R_{D_A}[h,y]$. The first term, {\color{red} acting as a variance term,} measures the maximal distance between $h_1$ and a different candidate $h_2 \in \mathcal{P}_{k}(D_A,D_B)$. The second term, {\color{red} the bias,} is the approximation error of $y$ in the class $\mathcal{P}_{k}(D_A,D_B)$. By the definition of $\mathcal{P}_{k}(D_A,D_B)$, we have: $\mathcal{P}_{k}(D_A,D_B) \subset \mathcal{P}_{k+1}(D_A,D_B)$. In particular, \sout{the bound leads to the following trade-off,}
\begin{equation}
\begin{aligned}
\sup\limits_{h_2 \in \mathcal{P}_{k}(D_A,D_B)} R_{D_A}[h_1,h_2] &\leq \sup\limits_{h_2 \in \mathcal{P}_{k+1}(D_A,D_B)} R_{D_A}[h_1,h_2] \\
\inf\limits_{h \in \mathcal{P}_{k}(D_A,D_B)} R_{D_A}[h,y] &\geq \inf\limits_{h \in \mathcal{P}_{k+1}(D_A,D_B)} R_{D_A}[h,y]
\end{aligned}
\end{equation}
\sout{Therefore,} {\color{red}In other words,} the first term is increasing with respect to $k$ and the second term is decreasing with respect to $k$. \sout{We, therefore, advocate selecting the complexity $k \in \mathbb{N}$ that minimizes the diameter of $\mathcal{P}_{k}(D_A,D_B)$, such that the approximation error is small.} {\color{red} Therefore, from this bound we conclude that if there is a complexity value $k \in \mathbb{N}$ for which both the diameter of $\mathcal{P}_{k}(D_A,D_B)$ and the approximation error are small, it is guaranteed that $\mathcal{A}_k$ learns the target function $y$. }

We consider that for every $k \in \mathbb{N}$, the first term can be easily estimated by training $h_2\in \mathcal{H}_k$, such that, $\W(h_2 \circ D_A,D_B) \leq \epsilon_0$ and is maximally distant from $h_1$ {\color{red} (see Alg.~\ref{algo:when} in Sec.~\ref{sec:conseq} for more details)}. On the other hand, {\color{red} as discussed in Sec.~\ref{sec:problemformulation},} since the algorithm is not provided with paired matches $(x,y(x))$, it is infeasible to {\color{red} directly} compute {\color{red} (or even estimate)} the second term. Therefore, this bound gives us a very limited intuition on how to select \sout{$\mathcal{H}_k$} {\color{red} $k$ and} about the relationship between the two terms and their relationship with the WGAN divergence. In addition, \sout{it} {\color{red} this bound} can be employed to estimate the generalization risk, only if we assume the second term is small, as done in~\citep{benaim2017maximally}. {\color{red} In that paper, they assume that the target function has the following property: for every $k>0$, if there is a hypothesis $h$ of complexity $k$, such that, $\W(h \circ D_A,D_B)\leq \epsilon_0$ (i.e., $\mathcal{P}_{k}(D_A,D_B) \neq \emptyset$), then, $\inf\limits_{h \in \mathcal{P}_{k}(D_A,D_B)}R_{D_A}[h,y]\leq \epsilon_1$ (where $\epsilon_1>0$ is a small constant). Therefore, to minimize the bound, they simply select $k$ to be the minimal complexity that supports $\mathcal{P}_{k}(D_A,D_B) \neq \emptyset$. Since $k$ is minimal, it is guaranteed that the approximation error is small and at the same time, the term $\sup\limits_{h_2 \in \mathcal{P}_{k}(D_A,D_B)} R_{D_A}[h_1,h_2]$ is kept as small as possible.} {\color{red} In such cases, they simply ignore the second term (which is assumed to be negligible) and use the first one as an estimation of the generalization risk $R_{D_A}[h_1,y]$. }

{\color{red}Unfortunately, the above argument is not rigorous. For instance, it does not explicitly express the relationship between the values of the approximation error and the WGAN divergence and simply assumes that if the former is smaller than $\epsilon_0$, then, the latter is smaller than $\epsilon_1$. } In Sec.~\ref{sec:genB}, we provide generalization bounds that replace the second term with \sout{GAN-like divergences and capacity terms} {\color{red} a WGAN divergence}. {\color{red} These bounds lead to  more insightful trade-offs between the involved arguments. The conditions in which the approximation term is small are also weaker, more intuitive and more explicit than the assumption used by~\cite{benaim2017maximally}.}

\subsection{Generalization Bounds}\label{sec:genB}

\sout{The following theorem introduces a generalization bound for cross domain mapping with IPMs.}

\begin{thrmgrbg}[\sout{Cross-Domain Mapping with IPMs}] \sout{Assume the settings of Sec.~\ref{sec:problemformulation} and Sec.~\ref{sec:dualProof}. Let $\omega \in \Omega$, $h \in \mathcal{P}_{\omega}$ such that $h:\mathcal{X}_A \to \mathcal{X}_B$ and $d \in \mathcal{C}$ such that $\beta(d)\leq 1$. Then, for every $h_1 \in \mathcal{P}_{\omega}$, we have:}
\begin{equation}
\begin{aligned}
\hcancel{R_{D_A}[h_1,y] \lesssim \sup_{h_2 \in \mathcal{P}_{\omega}} R_{D_A}[h_1,h_2] + \rho_{\mathcal{C}}(h \circ D_A,D_B) + \sqrt{R_{D_B}[h\circ y^{-1}-\Id,\nabla d]}}
\end{aligned}
\end{equation}
\end{thrmgrbg}
\sout{Thm.~\ref{thm:boundGEN} provides an upper bound on the generalization risk, $R_{D_A}[h_1,y]$, of a hypothesis $h_1$ that was selected by an algorithm $\mathcal{A}_{\omega}$, which is the argument that we would like to minimize. First, we note that the upper bound can be rephrased as follows:}
\begin{equation}
\hcancel{\sup_{h_2 \in \mathcal{P}_{\omega}} [h_1,h_2]+\inf_{h,d} \left\{ \rho_{\mathcal{C}}(h \circ D_A,D_B) + \sqrt{R_{D_B}[h\circ y^{-1}-\Id,\nabla d]} \right\}}
\end{equation}
\sout{where the infimum is taken over $h\in \mathcal{P}_{\omega}$, such that, $h:\mathcal{X}_A \to \mathcal{X}_B$ and $d \in \mathcal{C}$, such that, $\beta(d)\leq 1$. This bound is decomposed into three parts: $\sup_{h_2 \in \mathcal{P}_{\omega}} [h_1,h_2]$, $\rho_{\mathcal{C}}(h \circ D_A,D_B)$ and $\sqrt{R_{D_B}[h\circ y^{-1}-\Id,\nabla d]}$ (for arbitrary $h \in \mathcal{P}_\omega$ and $d \in \mathcal{C}$ such that $\beta(d)\leq 1$). The first term measures the maximal distance between $h_1$ and a different candidate $h_2$ from $\mathcal{P}_\omega$. This term is small, whenever the set $\mathcal{P}_\omega$ is concentrated. The second and third terms behave as approximation error and capacity terms (resp.). The second term measures the value of the $\mathcal{C}$-IPM between $h \circ D_A$ and $D_B$ of a hypothesis $h \in \mathcal{P}_{\omega}$. The third term measures the ability of a gradient $\nabla d$ to approximate the function $h \circ y^{-1} - \Id$. In general, since we do not know the target function, $y$, we cannot estimate this term. Nevertheless, this term can be treated as a capacity term and monotonically decreases, as $\mathcal{C}$ is expanded. On the other hand, if $\mathcal{C}$ is expanded, then, the second term increases. }

{\color{red} In this section, we derive generalization bounds for unsupervised cross-domain mapping, comparing alternative algorithms $\{\mathcal{A}_{\omega}\}_{\omega \in \Omega}$. %Each algorithm $\mathcal{A}_{\omega}$ receives a full access to the distributions $D_A$ and $D_B$ as inputs and returns a hypothesis $h \in \mathcal{H}_{\omega}$ as an output. %For each algorithm $\mathcal{A}_{\omega}$ and input distributions $D_A$ and $D_B$, $\mathcal{P}_{\omega}(D_A,D_B)$ denotes a set that contains the possible outputs of $\mathcal{A}_{\omega}$. 
As discussed in Sec.~\ref{sec:problemformulation}, our goal is to select $\omega$ that provides the best performing algorithm $\mathcal{A}_{\omega}$ in terms of minimizing $R_{D_A}[h_1,y]$ for an output $h_1$ of $\mathcal{A}_{\omega}$ provided with access to $D_A$ and $D_B$. For this purpose, Thm.~\ref{thm:corBoundGEN} will provide us {\color{red}with} an upper bound (for every $\omega \in \Omega$) on the generalization risk $R_{D_A}[h_1,y]$, for an arbitrary hypothesis $h_1$ selected by the algorithm $\mathcal{A}_{\omega}$, i.e., $h_1 \in \mathcal{P}_{\omega}(D_A,D_B)$. The terms of the bound can be estimated in an unsupervised manner, see Sec.~\ref{sec:conseq} for the derived algorithms. %Our bound is decomposed of terms that can be estimated without the need for supervision. Therefore, the performance of any two algorithms $\mathcal{A}_{\omega_1}$ and $\mathcal{A}_{\omega_2}$ can be compared by evaluating the corresponding bounds. In Sec.~\ref{sec:conseq}, it is shown how to achieve this outlet in an algorithmic manner. 
} 

\sout{Next, we present a generalization bound for cross-domain mapping with WGANs. In this case, we can get rid of the capacity term. This is desirable, since we cannot estimate this term. For this purpose, we define the following set:} {\color{red} To state our generalization bound for cross-domain mapping, we define the following set:}
\begin{equation}\label{eq:Q}
\mathcal{Q} := \left\{h:\mathcal{X}_A \to \mathcal{X}_B \;\Big\vert\; \hcancel{\exists y \in \mathcal{T}: \;} \|h \circ y^{-1} - \Id_{\mathcal{X}_B}\|_{\Lip} \leq 1 \right\}
\end{equation}
This set contains the functions $h$ that are weakly correlated with the target function $y$. By \sout{weakly correlated} {\color{red} weak correlation}, we mean that their derivatives cannot be too far from each other. \sout{We note that this definition fits both the deterministic case and the non-deterministic case as well. In the deterministic case, the ''$\exists y \in \mathcal{T}$'' is unnecessary.}  

The following \sout{corollary} {\color{red} theorem} introduces a generalization bound for unsupervised cross-domain mapping with WGANs. {\color{red} This is the main generalization bound in this paper.}

\begin{restatable}[Cross-Domain Mapping with WGANs]{thrm}{corBoundGEN}\label{thm:corBoundGEN} Assume the setting of Sec.~\ref{sec:problemformulation} \sout{and Sec.~\ref{sec:dualProof}}. {\color{red} Assume that $\mathcal{X}_A \subset \mathbb{R}^N$ and $\mathcal{X}_B \subset \mathbb{R}^M$ are convex and bounded sets.} In addition, assume that $y \in C^1_{\textnormal{diff}}$ and $\mathcal{H} \subset C^1$ \sout{and that for every $\omega \in \Omega$, we have: $\mathcal{P}_{\omega}(D_A,D_B) \cap \mathcal{Q} \neq \emptyset$}. Then, for every $\omega \in \Omega$ and $h_1 \in \mathcal{P}_{\omega}(D_A,D_B)$, we have:
\begin{equation}\label{eq:corBoundGEN}
\begin{aligned}
R_{D_A}[h_1,y] \lesssim & \sup\limits_{h_2 \in \mathcal{P}_{\omega}(D_A,D_B)} R_{D_A}[h_1,h_2] + \inf\limits_{h \in \mathcal{P}_{\omega}(D_A,D_B) \cap \mathcal{Q}} \W(h \circ D_A,D_B)
\end{aligned}
\end{equation}
\end{restatable}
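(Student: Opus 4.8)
The plan is to combine the triangle-inequality mechanism of Lemma~\ref{lem:cvprBound1} with a transport argument that replaces the inaccessible approximation term $\inf_h R_{D_A}[h,y]$ by the Wasserstein divergence $\W(h\circ D_A,D_B)$, which the algorithm actually drives down. The new point relative to Lemma~\ref{lem:cvprBound1} is that restricting the competitor $h$ to $\mathcal{Q}$ forces $h$ to be ``aligned'' with $y$ up to a near-identity distortion, and for such $h$ a small Wasserstein divergence does yield a small $L_2$ risk.

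First I would peel off the variance term. Fix $\omega$ and $h_1\in\mathcal{P}_{\omega}(D_A,D_B)$ and let $h$ be an arbitrary member of $\mathcal{P}_{\omega}(D_A,D_B)\cap\mathcal{Q}$. Since $\ell$ is the squared Euclidean loss, $R_{D_A}[\cdot,\cdot]^{1/2}$ is an $L_2(D_A)$ distance, hence obeys the triangle inequality, so $R_{D_A}[h_1,y]\lesssim R_{D_A}[h_1,h]+R_{D_A}[h,y]$, while $R_{D_A}[h_1,h]\leq\sup_{h_2\in\mathcal{P}_{\omega}(D_A,D_B)}R_{D_A}[h_1,h_2]$ because $h\in\mathcal{P}_{\omega}(D_A,D_B)$. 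Taking the infimum over $h\in\mathcal{P}_{\omega}(D_A,D_B)\cap\mathcal{Q}$ at the very end, it is enough to bound, for every $h\in\mathcal{Q}$, the quantity $R_{D_A}[h,y]$ by a constant times $\W(h\circ D_A,D_B)$ plus a remainder that is itself dominated by $\sup_{h_2\in\mathcal{P}_{\omega}(D_A,D_B)}R_{D_A}[h_1,h_2]$; this remainder is needed only to absorb the rotational ambiguity discussed below.

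Next I would move everything into domain $B$. Because $y\in C^{1}_{\textnormal{diff}}$, set $z=y(x)$ for $x\sim D_A$, so $z\sim D_B=y\circ D_A$, and put $g:=h\circ y^{-1}:\mathcal{X}_B\to\mathcal{X}_B$ and $\phi:=g-\Id_{\mathcal{X}_B}$. Then $h\in\mathcal{Q}$ is precisely $\|\phi\|_{\Lip}\leq 1$, and since $\mathcal{X}_B$ is convex and $g\in C^1$ (as $h\in\mathcal{H}\subset C^1$ and $y^{-1}\in C^1$) this is the same as $\|\Diff_{\phi}(z)\|_2\leq 1$ for all $z$. Also $h\circ D_A=g\circ D_B$, $R_{D_A}[h,y]=\mathbb{E}_{z\sim D_B}[\,\|\phi(z)\|_2^2\,]$ and $\W(h\circ D_A,D_B)=\W(g\circ D_B,D_B)$, so the claim becomes a statement about a $C^1$ near-identity self-map $g=\Id_{\mathcal{X}_B}+\phi$ of a convex bounded set. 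The core estimate is obtained when $\Diff_{\phi}$ is symmetric: then $\phi=\nabla\Psi$ (Poincar\'e lemma on the convex $\mathcal{X}_B$), $\Diff_{g}=\Id+\Hess_{\Psi}$ is positive semidefinite, so $g$ is the optimal transport map from $D_B$ to $g\circ D_B$ for the quadratic cost; hence $\mathbb{E}_{z\sim D_B}[\|\phi(z)\|_2^2]$ is the squared $2$-Wasserstein distance between those laws, which is at most $\mathrm{diam}(\mathcal{X}_B)$ times their $1$-Wasserstein distance $\W(g\circ D_B,D_B)$ (on the $W_1$-optimal coupling both endpoints lie in $\mathcal{X}_B$, so $\|u-v\|_2^2\leq\mathrm{diam}(\mathcal{X}_B)\|u-v\|_2$). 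More elementarily, feeding the $1$-Lipschitz function $d=\Psi/\mathrm{diam}(\mathcal{X}_B)$ into Kantorovich--Rubinstein~(\ref{eq:wass}) and Taylor-expanding $\Psi$ along the segment $[z,z+\phi(z)]$ gives linear term $\langle\nabla\Psi(z),\phi(z)\rangle=\|\phi(z)\|_2^2$ and remainder at most $\tfrac12\|\Hess_{\Psi}\|_{\infty,\mathcal{X}_B}\|\phi(z)\|_2^2=\tfrac12\|\Diff_{\phi}\|_{\infty,\mathcal{X}_B}\|\phi(z)\|_2^2\leq\tfrac12\|\phi(z)\|_2^2$, whence $R_{D_A}[h,y]\leq 2\,\mathrm{diam}(\mathcal{X}_B)\,\W(h\circ D_A,D_B)$. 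Either way, the constant ``$1$'' in the definition of $\mathcal{Q}$ is exactly what keeps the quadratic correction harmless.

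The hard part is a general $\phi$ with $\|\Diff_{\phi}\|\leq 1$, which need not be curl-free: its antisymmetric (rotational) component is the analytic shadow of the alignment ambiguity of Sec.~\ref{sec:alignment} --- take $\mathcal{X}_B=\mathbb{B}_M$ with $D_B$ rotationally symmetric and $g$ a rotation by an angle $\theta$ with $2\sin(\theta/2)\leq1$, so that $\W(g\circ D_B,D_B)=0$ while $R_{D_A}[h,y]>0$. Here the plan is to split $\phi$ into a gradient part $\nabla\Psi$, handled as above to produce the $\W$-term, and a rotational remainder, and then to show that the hypothesis obtained by ``undoing'' that remainder is another admissible output in $\mathcal{P}_{\omega}(D_A,D_B)$, so that the remainder's contribution to $R_{D_A}[h,y]$ is at most $\sup_{h_2\in\mathcal{P}_{\omega}(D_A,D_B)}R_{D_A}[h_1,h_2]$. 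Making this split quantitative is, I expect, the main obstacle: the Hodge/gradient projection is not bounded in sup-norm, so the bound $\|\Diff_{\phi}\|\leq 1$ cannot simply be transferred to $\Psi$, and one must instead exploit the convexity of $\mathcal{X}_B$ and the $C^1$ regularity of $g$, while also checking that the rotational modification stays inside $\mathcal{H}_{\omega}$ with Wasserstein divergence $\leq\epsilon_0$. This is where the hypotheses ``$\mathcal{X}_A,\mathcal{X}_B$ convex and bounded'' and ``$y\in C^{1}_{\textnormal{diff}}$, $\mathcal{H}\subset C^1$'' are consumed, and where the constant hidden in ``$\lesssim$'' is pinned down.
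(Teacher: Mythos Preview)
Your ``elementary'' Taylor argument in the symmetric case is exactly the paper's proof. The paper first establishes a general IPM estimate (Lemmas~\ref{lem:bound1} and~\ref{lem:boundGENExt}): for any $d\in C^2$ with $\beta(d):=\|\Hess_d\|_{\infty,\mathcal{X}_B}\le 1$,
\[
R_{D_A}[h,y]\;\lesssim\;\rho_{\mathcal{C}}(h\circ D_A,D_B)\;+\;\sqrt{R_{D_B}\bigl[\phi,\nabla d\bigr]},
\qquad \phi:=h\circ y^{-1}-\Id_{\mathcal{X}_B},
\]
via the second-order expansion $d(z+\phi(z))-d(z)=\langle\nabla d(z),\phi(z)\rangle+\tfrac12\langle\phi(z),\Hess_d(u^*)\phi(z)\rangle$. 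To obtain Theorem~\ref{thm:corBoundGEN} it then takes $\mathcal{C}=\{d:\|d\|_{\Lip}\le 2K\}$ with $K$ the radius of $\mathcal{X}_B$ (so $\rho_{\mathcal{C}}=2K\cdot\W$), and for $h\in\mathcal{Q}$ it \emph{chooses} $d$ to be an antiderivative of $\phi$, i.e.\ $\nabla d=\phi$. That kills the square-root term outright; the hypothesis $\|\Diff_\phi\|_{\infty}\le 1$ gives $\beta(d)\le 1$; and $\|\nabla d\|_\infty=\|\phi\|_\infty\le 2K$ puts $d$ in $\mathcal{C}$. Combined with the triangle-inequality Lemma~\ref{lem:cvprBound} (your first step) this is the whole proof. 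There is no Brenier theorem, no Hodge splitting, and no attempt to feed anything back into the variance term.

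Your instinct that ``$\phi\in C^1$'' alone does not guarantee $\phi=\nabla d$ for a scalar $d$ when $M>1$ is correct---that needs $\Diff_\phi$ symmetric---but the paper's proof simply asserts such an antiderivative exists and moves on. So the rotational obstruction you diagnose is present in the paper as well; the paper does not resolve it, it ignores it. Your proposed workaround (project out the curl-free part, then argue that undoing the rotational remainder yields another member of $\mathcal{P}_{\omega}(D_A,D_B)$ so its cost is absorbed by $\sup_{h_2}R_{D_A}[h_1,h_2]$) goes well beyond the paper and, as you yourself note, faces real obstacles: nothing ensures the rotationally corrected map lies in $\mathcal{H}_\omega$ or keeps $\W\le\epsilon_0$, and the Hodge projection does not respect the sup-norm bound on $\Diff_\phi$. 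If your aim is to reproduce the paper's argument, drop that program and run the Taylor computation with $\nabla d=\phi$; if your aim is a fully rigorous proof, you have correctly located where one is missing.
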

{\color{red} The proof of this theorem can be found in Sec.~\ref{sec:proofs} of the Appendix.}
%{\color{blue} To understand the theorem, in Sec.~\ref{sec:analyzing} the bound is analyzed in terms of its relationship with the WGAN algorithm, its assumptions and tightness. By the end of this section, we introduce an extended version of this theorem that fits the case of cross-domain mapping with generic IPMs. In Sec.~\ref{sec:boundSimp}, we discuss the relationship of Thm.~\ref{thm:corBoundGEN} with the simplicity hypothesis and in Sec.~\ref{sec:conseq} we provide a few consequences that follow from this bound.

{\color{red}
\subsection{Analyzing the Bound}\label{sec:analyzing}

Thm.~\ref{thm:corBoundGEN} provides an upper bound on the generalization risk, $R_{D_A}[h_1,y]$, of a hypothesis $h_1$ that is selected by an algorithm $\mathcal{A}_{\omega}$, which is the argument that we would like to minimize. The first term measures the maximal distance between $h_1$ and a different candidate $h_2$ from $\mathcal{P}_{\omega}(D_A,D_B)$. 
%This term is small, whenever the set $\mathcal{P}_{\omega}(D_A,D_B)$ is concentrated. 
The second term behaves as an approximation error and measures the value of the WGAN divergence between $h \circ D_A$ and $D_B$ of the best fitting hypothesis $h \in \mathcal{P}_{\omega}(D_A,D_B) \cap \mathcal{Q}$. %This bound can be employed t

Unlike the bound in Lem.~\ref{lem:cvprBound1}, both terms in the bound of Thm.~\ref{thm:corBoundGEN} can be directly estimated. Therefore, to compare the performance of two cross-domain mapping algorithms $\mathcal{A}_{\omega_1}$ and $\mathcal{A}_{\omega_2}$, one estimates the bound on  their output hypotheses $h^{1}_{1}$ and $h^{2}_{1}$, and selects the algorithm $\mathcal{A}_{\omega_i}$ (and the hypothesis $h^i_1$) that provides a smaller bound. 
Note that this theorem holds for a generic set of cross-domain mapping algorithms $\{ \mathcal{A}_{\omega} \}_{\omega \in \Omega}$, as long as the assumptions are met. It does not require that the WGAN divergence is minimized. %. However, from this bound it is implied that algorithms that tend to minimize the WGAN divergence $\W(h\circ D_A,D_B)$ of their candidate outputs $h \in \mathcal{P}_{\omega}(D_A,D_B)$ are preferable. Therefore, this bound is especially suitable for working with the WGAN algorithm, since it explicitly minimizes this term. 

%For instance, in the formalism of Sec.~\ref{sec:algorithms}, 
For a WGAN algorithm $\mathcal{A}_{\omega}$ %with an access to a hypothesis class $\mathcal{H}_{\omega}$ and distributions $D_A$ and $D_B$ 
that returns a hypothesis $h\in \mathcal{H}_{\omega}$ that satisfies $\W(h \circ D_A,D_B) \leq \epsilon_0$ if exists ($\epsilon_0>0$ is a pre-defined stopping criteria threshold for training $h$), one obtains:%. In particular, for every $\omega \in \Omega$, such that, $\mathcal{P}_{\omega}(D_A,D_B) \cap \mathcal{Q} \neq \emptyset$, we have:
\begin{equation}\label{eq:fullomega}
R_{D_A}[h_1,y] \lesssim \sup\limits_{h_2 \in \mathcal{P}_{\omega}(D_A,D_B)} R_{D_A}[h_1,h_2] + \epsilon_0
\end{equation}
%In addition, the first term, $\sup\limits_{h_2 \in \mathcal{P}_{\omega}(D_A,D_B)} R_{D_A}[h_1,h_2]$, can be estimated by training a second hypothesis $h_2 \in \mathcal{H}_{\omega}$ that maximizes the risk $R_{D_A}[h_1,h_2]$ and on the same time satisfies $\W(h_2 \circ D_A,D_B) \leq \epsilon_0$ (for more details, see Sec.~\ref{sec:conseq}). 
%Therefore, we are interested in selecting $\omega \in \Omega$, such that: (i) $\inf\limits_{h \in \mathcal{H}_{\omega}} \W(h \circ D_A,D_B) \leq \epsilon_0$ and (ii) the term $\sup\limits_{h_2 \in \mathcal{P}_{\omega}(D_A,D_B)} R_{D_A}[h_1,h_2]$ is small. Algs.~\ref{algo:when} and~\ref{algo:rtrvl} are concrete methods for selecting $\omega$, such that the two items hold.

This is directly related to the simplicity hypothesis. In the setting proposed in Sec.~\ref{sec:gentle}, $\inf\limits_{h \in \mathcal{H}_{k}} \W(h \circ D_A,D_B)$ decreases as $k$ increases and $\sup\limits_{h_2 \in \mathcal{P}_{k}(D_A,D_B)} R_{D_A}[h_1,h_2]$ increases as $k$ increases. Therefore, to make both of the terms small, we advocate {\em selecting the minimal complexity $k \in \mathbb{N}$ that provides a hypothesis $h \in \mathcal{H}_k$ that has a small WGAN divergence (i.e., $\leq \epsilon_0$)}.

In Thm.~\ref{thm:corBoundGEN}, several assumptions are made. First, the target function, $y$, its inverse, $y^{-1}$ and every hypothesis, $h \in \mathcal{H}$ are assumed to be continuously differentiable.} \sout{ provides a simplified version of Thm.~\ref{thm:boundGEN}. This statement further assumes that  $y$, $y^{-1}$ and every $h \in \mathcal{H}$ are continuously differentiable.} \sout{In addition, it is assumed that there is a function $h \in \mathcal{P}_{\omega}(D_A,D_B)$, such that, $\|h \circ y^{-1} - \Id_{\mathcal{X}_B}\|_{\Lip} \leq 1$. In this case, the capacity term is eliminated and the $\mathcal{C}$-IPM is replaced with the WGAN divergence of $h$.} {\color{red} The assumption that every hypothesis $h \in \mathcal{H}$ is a continuously differentiable function is an architectural assumption on the hypothesis class $\mathcal{H}$. For instance, if $\mathcal{H}$ is a class of neural networks with continuously differentiable activation functions (for example, $\tanh$, sigmoid, softplus, etc'), then, this assumption is satisfied. The assumption that $y$ and $y^{-1}$ are continuously differentiable is task-specific, but reasonable in common image to image translation settings, where small changes in an instance in one domain results in small changes in the corresponding instance in the other domain.} 

In addition, {\color{red} in cases when the bound is non-trivial (i.e., $<\infty$)} it is assumed that there is a function $h \in \mathcal{P}_{\omega}(D_A,D_B)$, such that, $\|h \circ y^{-1} - \Id_{\mathcal{X}_B}\|_{\Lip} \leq 1$, {\color{red} i.e., $\mathcal{P}_{\omega}(D_A,D_B) \cap \mathcal{Q} \neq \emptyset$. This is a weak form of realizability. It asserts that the gradient of the target function $y$ is weakly correlated with the gradient of some function $h \in \mathcal{P}_{\omega}(D_A,D_B)$. This relationship is referred to as weak correlation since it does not require $y$ to be $\epsilon$-close to some function $h \in \mathcal{P}_{\omega}(D_A,D_B)$ to make the bound small. This is in contrast to Lem.~\ref{lem:cvprBound1}, which requires strong realizability for the bound to be useful. To make the bound small, that lemma requires that the term $\inf\limits_{h \in \mathcal{P}_{\omega}(D_A,D_B)} R_{D_A}[h,y]$ is small, which measures the amount of realizability of $y$ within $\mathcal{P}_{\omega}(D_A,D_B)$. Such strong realizability assumptions are also made in the classic domain adaptation generalization bounds of~\cite{bendavid,DBLP:conf/colt/MansourMR09}.}

In order to motivate \sout{this last} the weak realizability assumption, we consider the case where $M=N$ {\color{red} (i.e., the dimensions of $\mathcal{X}_A$ and $\mathcal{X}_B$ are the same)}. By Lem.~\ref{lem:strongerAssmp} in Appendix~\ref{sec:proofs}, the following is a stronger assumption: $\|h-y\|_{\Lip} \leq \|y^{-1}\|^{-1}_{\Lip}$. This stronger assumption holds if : (i) the Lipschitz constant of $y^{-1}$ is small and (ii) there is a function $h \in \mathcal{P}_{\omega}(D_A,D_B)$ such that that {\color{red} the Jacobians} $\Diff_h$ and $\Diff_y$ are not too far from each other. The Lipschitz constant is commonly assumed to be bounded in image to image translation, for example in~\citep{distgan} it is employed for deriving an unsupervised image to image translation algorithm. Given that this constant is small, the approximation requirement {\color{red} $\| h - y \|_{\Lip}$} between $y$ and the function $h \in \mathcal{P}_{\omega}(D_A,D_B)$ is not too restrictive. {\color{red}Therefore, the task of mapping between the two domains is ``easier'', as the Lipschitz constant of $y^{-1}$ decreases. Such dependence does not follow from Lem.~\ref{lem:cvprBound1}.}

\subsection{\hsout{The Relationship with the Simplicity Hypothesis}}

\sout{An important application of Cor.~\ref{cor:corBoundGEN} arises, when working with the setting of Sec.~\ref{sec:gentle} ($\Omega = \mathbb{N}$, $C(h)$, $\mathcal{A}_k$ and $\mathcal{P}_k$ stay the same, etc'). In practice, there are no restrictions that distinguish between the members of $\mathcal{P}_k$, and the algorithm $\mathcal{A}_k$ is able to return any member $h$ of $\mathcal{P}_k$ in an equal manner, therefore, the success of $\mathcal{A}_k$ is measured by, $\sup_{h \in \mathcal{P}_k} R_{D_A}[h,y]$. By Lem.~\ref{lem:cvprBound} in Appendix~\ref{sec:proofs},}
\begin{equation}
\hcancel{\sup_{h_1,h_2 \in \mathcal{P}_k} R_{D_A}[h_1,h_2] \lesssim \sup_{h \in \mathcal{P}_k} R_{D_A}[h,y]}
\end{equation}
% As before, the hypothesis class is decomposed into complexity classes, $\mathcal{H} := \cup^{\infty}_{k=1} \mathcal{H}_k$ where $\mathcal{H}_k \subset \mathcal{H}_{k+1}$ for $k \in \mathbb{N}$. In addition, $C(h) := \argmin_{k\in \mathbb{N}} [h \in \mathcal{H}_k]$ denotes the complexity of $h \in \mathcal{H}$. Furthermore, $\mathcal{A}_k$ returns a generator from $\mathcal{H}_k$ that has small 1-Wasserstein distance $\W(h \circ D_A,D_B) \leq \epsilon_0$. In this case, $\mathcal{P}_k := \left\{h \in \mathcal{H}_k\;\vert\; \W(h \circ D_A,D_B) \leq \epsilon_0 \right\}$.
\sout{In addition, according to Cor.~\ref{eq:corBoundGEN}, whenever $\mathcal{P}_k \cap \mathcal{Q} \neq \emptyset$, Cor.~\ref{eq:corBoundGEN} is translated to:} 
\begin{equation}
\begin{aligned}
\hcancel{\sup_{h_1 \in \mathcal{P}_k} R_{D_A}[h,y] \lesssim  \sup_{h_1,h_2 \in \mathcal{P}_k} R_{D_A}[h_1,h_2] + \inf_{h \in \mathcal{P}_k \cap \mathcal{Q}} \W(h \circ D_A,D_B)}
\end{aligned}
\end{equation}
\sout{Therefore, by Eq.~\ref{eq:lowerBoundk}, if $\mathcal{A}_k$ successfully learns the target function, $y$, the diameter of $\mathcal{P}_k$ must be small. On the other hand, the upper bound in Eq.~\ref{eq:corBoundGENk} is decomposed into two parts: $\sup_{h_2 \in \mathcal{P}_{k}}[h_1,h_2]$ and $\inf_{h \in \mathcal{P}_k \cap \mathcal{Q}} \W(h \circ D_A,D_B)$. By the definition of $\mathcal{P}_k$, we have: $\mathcal{P}_{k} \subset \mathcal{P}_{k+1}$. In particular,}
\begin{equation}
\begin{aligned}
\hcancel{\sup_{h_2 \in \mathcal{P}_k} R_{D_A}[h_1,h_2]} &\hcancel{\leq \sup_{h_2 \in \mathcal{P}_{k+1}} R_{D_A}[h_1,h_2]} \\
\hcancel{\inf_{h \in \mathcal{P}_k \cap \mathcal{Q}} \W(h \circ D_A,D_B)} & \hcancel{\geq \inf_{h \in \mathcal{P}_{k+1} \cap \mathcal{Q}} \W(h \circ D_A,D_B)}
\end{aligned}
\end{equation}
\sout{Therefore, the first term is increasing with respect to $k$ and the second term is decreasing with respect to $k$. Hence, informally, we advocate {\em selecting the complexity $k \in \mathbb{N}$ that provides a hypothesis $h \in \mathcal{P}_k$ that has a small WGAN divergence and the diameter of $\mathcal{P}_{k}$ is kept small}. Differently said, we recommend selecting the minimal complexity $k$ that supports small discrepancy. In Sec.~\ref{sec:est}, we discuss the algorithmic aspects of estimating and minimizing the bound.}

\section{Consequences of the Bound}\label{sec:conseq}

% One of the main features of Thm.~\ref{thm:corBoundGEN} is its ability to measure the success of many cross-domain mapping algorithms. This is modeled by $\mathcal{P}_{\omega}(D_A,D_B)$. Since the bound involves with the 1-Wasserstein distance of members of $\mathcal{P}_{\omega}(D_A,D_B)$, it is reasonable to see how the theorem applies for the case of the WGAN algorithm~\citep{DBLP:conf/icml/ArjovskyCB17}. In this case, we take $\mathcal{P}_{\omega}(D_A,D_B)$ to be the set of possible outputs of the WGAN algorithm such that the generator is taken from $\mathcal{H}_{\omega}$, i.e.,
% \begin{equation}\label{eq:Pomega}
% \mathcal{P}_{\omega}(D_A,D_B) := \left\{h \in \mathcal{H}_{\omega}\;\Big\vert\; \W(h \circ D_A,D_B) \leq \epsilon_0 \right\}
% \end{equation}
% where, $\epsilon_0>0$ is some fixed threshold. Therefore, we ensure that for every $h \in \mathcal{P}_{\omega}(D_A,D_B)$, we have: $\W(h \circ D_A,D_B) \leq \epsilon_0$. Furthermore, the bound in Thm.~\ref{thm:corBoundGEN} can be translated to:
% \begin{equation}
% \inf\limits_{y \in \mathcal{T}} R_{D_A}[h_1,y] \lesssim \inf\limits_{\omega \in \Omega} \sup\limits_{h_2 \in \mathcal{P}_{\omega}(D_A,D_B)} R_{D_A}[h_1,h_2] + \epsilon_0 
% \end{equation}

Thm.~\ref{thm:corBoundGEN} leads to concrete predictions, \sout{which} {\color{red}that} are verified in Sec.~\ref{sec:experiments}. The first one states that in contrast to the current common wisdom, one can learn a semantically aligned mapping between two spaces without any matching samples and even without circularity. 
%{\color{blue} This is illustrated in Fig.~\ref{fig:celebA_blond}.}

\begin{pred}\label{pred1}
When learning with a small enough network in an unsupervised way a mapping between domains that share common characteristics, the $GAN$ constraint in the target domain is sufficient to obtain a semantically aligned mapping.
\end{pred}

% \begin{figure}[t]
%   \centering
%   \begin{tabular}{p{1cm}@{~}c@{~~~}c} 
% \adjustbox{varwidth=1cm,raise=0.5cm}{{\footnotesize (a) Input}} &\includegraphics[width=0.44380\linewidth,trim={0 20px 64px 0},clip]{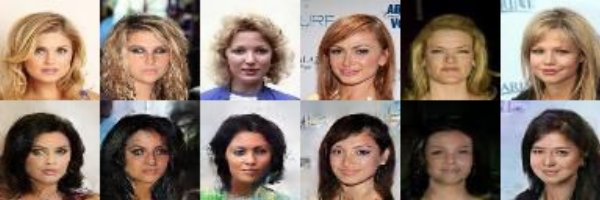}&  \includegraphics[width=0.44380\linewidth,trim={0 20px 64px 0},clip]{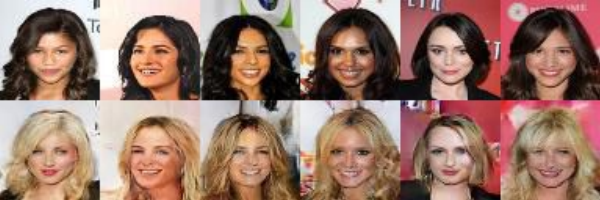}\\ 
% \adjustbox{varwidth=1cm,raise=.5cm}{\footnotesize (b)\\Output}&\includegraphics[width=0.44380\linewidth,trim={0 0 20px 64px},clip]{figures_celebA_blond/gan_onlyA_to_AB.pdf}&   \includegraphics[width=0.44380\linewidth,trim={0 0 64px 64px},clip]{figures_celebA_blond/gan_onlyB_to_BA.pdf}\\ 
% &  (Blond to black hair) & (Black to blond hair)\\
% \end{tabular}
%   \caption{\label{fig:celebA_blond} {\color{blue} Results for celebA black hair to blond hair transfer (a) Input (b) The mapping obtained by the GAN loss without additional losses.}}
% \end{figure}

The strongest clue that helps identify the alignment of the semantic mapping from the other mappings, is the suitable complexity of the network that is learned. A network with a complexity that is too low cannot replicate the target distribution, when taking inputs in the source domain (high discrepancy). A network that has a complexity that is too high, would not learn the minimal complexity mapping, since it could be distracted by other alignment solutions.

We believe that the success of the recent methods results from selecting the architecture used in an appropriate way. For example, DiscoGAN~\citep{pmlr-v70-kim17a} employs either eight or ten layers, depending on the dataset. We make the following prediction:

\begin{pred}\label{pred2}
When learning in an unsupervised way a mapping between domains, the complexity of the network needs to be carefully adjusted.
\end{pred}

This prediction is also surprising, since in supervised learning, extra depth is not as detrimental, if at all. As far as we know, this is the first time that this clear distinction between supervised and unsupervised learning is made~\footnote{The {\color{red} Minimum Description Length (MDL for short)} literature was developed when, people believed that small hypothesis classes are desired for both supervised and unsupervised learning.}. 

%{\color{blue} In Fig.~\ref{fig:sw004} we illustrate the results of a generator $h$ trained with the WGAN algorithm with varying number of layers. As can be seen, the WGAN algorithm captures the target alignment for $4$ and $6$ layers and fails for larger networks. For $6$ layers, the quality of the produced images is higher than the quality for $4$ layers. }

\subsection{Estimating the Ground Truth Error}
\label{sec:est}

\sout{Eq.~\ref{eq:fullk} along with Eq.~\ref{eq:Pk},} {\color{red} Eq.~\ref{eq:fullomega}} % for the setting in Sec.~\ref{sec:gentle},} 
provides us with an accessible upper bound for the generalization risk:
\begin{equation}\label{eq:boundEps0}
R_{D_A}[h_1,y] \lesssim \sup\limits_{h_2 \in \mathcal{P}_{k}(D_A,D_B)} R_{D_A}[h_1,h_2] + \epsilon_0
\end{equation}
% In general, for every $k \in \mathbb{N}$, we can approximate $\sup\limits_{h_2 \in \mathcal{P}_{k}(D_A,D_B)}[h_1,h_2]$ simply by maximizing $R_{D_A}[h_1,h_2]$ with respect to $h_2 \in \mathcal{P}_{k}(D_A,D_B)$. In addition, in order to approximate the second term, we can simply minimize $\W(h \circ D_A,D_B)$ for $h \in \mathcal{P}_{k}(D_A,D_B)$. This can only give a heuristic approximation for the second term, since we do not suffice that $h \in \mathcal{Q}$. Alternatively, if we select $\mathcal{P}_{k}(D_A,D_B)$ according to Eq.~\ref{eq:Pk}, then, we ensure that for every $h \in \mathcal{P}_{k}(D_A,D_B)$, we have: $\W(h \circ D_A,D_B) \leq \epsilon_0$. Therefore, the bound in Thm.~\ref{thm:corBoundGEN} can be translated to:
% \begin{equation}
% R_{D_A}[h_1,y] \lesssim \sup\limits_{h_2 \in \mathcal{P}_{k}(D_A,D_B)} R_{D_A}[h_1,h_2] + \epsilon_0 
% \end{equation}
Hence, for any fixed $k \in \mathbb{N}$, the right hand side can be directly approximated by training a neural network $h_2 \in \mathcal{H}_k$ that has a discrepancy at most $\epsilon_0$ (i.e., $h_2 \in \mathcal{P}_{k}(D_A,D_B)$) and has the maximal risk with regards to $h_1$, i.e., 
\begin{equation}\label{eq:primal}
\max\limits_{h_2 \in \mathcal{H}_k} R_{D_A}[h_1,h_2] \textnormal{ s.t: } \W(h_2 \circ D_A,D_B) \leq \epsilon_0
\end{equation}
It is computationally impossible to compute the solution $h_2$ to Eq.~\ref{eq:primal}, since, in most cases we cannot explicitly compute the set $\mathcal{P}_{k}(D_A,D_B)$. Therefore, inspired by Lagrange relaxation, we employ the following relaxed version of Eq.~\ref{eq:primal}:
%By applying Lagrange relaxation, we obtain the following Lagrangian dual form: 
\begin{equation}\label{eq:dual}
\min\limits_{h_2\in \mathcal{H}_k} \Big\{ \W(h_2 \circ D_A,D_B) - \lambda R_{D_A}[h_1,h_2] \Big\}
\end{equation}
where $\lambda>0$ is a trade-off parameter. The expectation over $x \sim D_A$ (resp $x \sim D_B$) in the risk and discrepancy are replaced, as is often done, with the sum over the training samples {\color{red} $\mathcal{S}_A$} \sout{in} {\color{red} from} domain $A$ (resp $B$). Based on this, we present a stopping criterion in Alg.~\ref{algo:when}. %, and a method for hyperparameter selection in Alg.~\ref{algo:rtrvl}. 
Eq.~\ref{eq:dual} is manifested in Step 6 (last line) %of the former and Step 3 of the latter 
as the selection criterion. % that appears as the last line of both algorithms.

\begin{algorithm}[t]
\caption{Deciding when to stop training $h_1$}
\label{algo:when}
  \begin{algorithmic}[1]
  \Require{$\mathcal{S}_A$ and $\mathcal{S}_B$: unlabeled training sets; $\mathcal{H}$: a hypothesis class; $\epsilon_0$: a threshold; $k$: a complexity value; $\lambda$: a trade-off parameter; $T_1$: a fixed number of epochs for $h_1$; $T_2$: a fixed number of epochs for $h_2$.}
  	\State Initialize $h^0_1\in\mathcal{H}_k$ and $h^0_2\in\mathcal{H}_k$ at random.
    \For {$t = 1,\dots,T_1$}
      \parState{%
        Train $h^{t-1}_1 \in \mathcal{H}_k$ for one epoch to minimize \begin{small}$\W(h^t_1\circ D_A,D_B)$\end{small}, obtaining $h^t_1 \in \mathcal{H}_k$.}
       \parState{%
       Train $h^{t-1}_2\in \mathcal{H}_k$ for $T_2$ epochs to minimize \begin{small}$\W(h^t_2\circ D_A,D_B)-\lambda R_{D_A}[h^t_1,h^t_2]$\end{small}, obtaining $h^{t}_2 \in \mathcal{H}_k$.\\\Comment{$T_2$ provides a fixed comparison point}.}
    \EndFor
    \parState{\Return $h^t_1$ such that: \begin{small}$t = \underset{i \in [T_1]}{\arg\min}  \left\{ R_{D_A}[h^i_1,h^i_2] \; \Big\vert \; \forall j=1,2: \W(h^t_j \circ D_A,D_B) \leq \epsilon_0\right\}$\end{small}.} 
  \end{algorithmic}
\end{algorithm}

\subsection{Deriving an Unsupervised Variant of Hyperband using the Bound}
\label{sec:hyperband}

In order to optimize multiple hyperparameters simultaneously, we create an unsupervised variant of the hyperband method~\cite{hyperband}. Hyperband requires the evaluation of the loss for every configuration of hyperparameters. In our case, our loss is the risk function, $R_{D_A}[h_1,y]$. Since we cannot compute the actual risk, we replace it with an approximated value of our bound 
\begin{equation}\label{eq:boundEstimation}
\max\limits_{h_2 \in \mathcal{P}_{\omega}(D_A,D_B)} R_{D_A}[h_1,h_2] + \W(h_1 \circ D_A,D_B)
\end{equation}
This expression differs from the original bound, since instead of having $\inf\limits_{h \in \mathcal{P}_{\omega}(D_A,D_B) \cap \mathcal{Q}} \W(h \circ D_A,D_B)$ we replace it with $\W(h_1 \circ D_A,D_B)$, which can be easily estimated. This term can fit as a good replacement, whenever $\inf\limits_{h \in \mathcal{P}_{\omega}(D_A,D_B) \cap \mathcal{Q}} \W(h \circ D_A,D_B) \approx \inf\limits_{h \in \mathcal{P}_{\omega}(D_A,D_B)} \W(h \circ D_A,D_B)$ since $\inf\limits_{h \in \mathcal{P}_{\omega}(D_A,D_B)} \W(h \circ D_A,D_B) \leq \W(h_1 \circ D_A,D_B)$. 
In addition, we choose $\mathcal{A}_{\omega}$ to be the WGAN algorithm, where $\omega$ is a set of hyperparameters that includes: the complexity of the trained network, batch size, learning rate, etc'.

In particular, the function `run\_then\_return\_val\_loss' in the hyperband algorithm (Alg.~1 of~\cite{hyperband}), which is a plug-in function for loss evaluation, is provided with our bound from Eq.~\ref{eq:boundEstimation} after training $h_2$, as in Eq.~\ref{eq:dual}. Our variant of this function is listed in Alg.~\ref{algo:rtrvl}. It employs two additional procedures that are used to store the learned models $h_1$ and $h_2$ at a certain point in the training process and to retrieve these to continue the training for a large \sout{amount} {\color{red}number} of epochs. The retrieval function is simply a map between a vector of hyperparameters and a tuple of the learned networks and the number of epochs $T$ when stored.  For a new vector of hyperparameters, it returns $T=0$ and two randomly initialized networks, with architectures that are determined by the given set of hyperparameters. When a network is retrieved, it is then trained for a number of epochs that is the difference between the required number of epochs $T$, which is given by the hyperband method, and the number of epochs it was already trained, denoted by $T_{\text{last}}$.

\begin{algorithm}[t]
\caption{Unsupervised run\_then\_return\_val\_loss for hyperband}
\label{algo:rtrvl}
  \begin{algorithmic}[1]
  \Require{$\mathcal{S}_A$ and $\mathcal{S}_B$: unlabeled training sets; $\lambda$: a trade-off parameter; $T$: number of epochs; $\omega$: set of hyperparameters.}
      \parState{%
      [$h_1, h_2$, $T_{\text{last}}$] = return\_stored\_functions($\omega$)}
      \parState{%
        Train $h_1\in \mathcal{H}$ for $T-T_{\text{last}}$ epochs to minimize \begin{small}$\W(h_1\circ D_A,D_B)$\end{small}.}
%        \If{\begin{small}$\W(G_1\circ D_A,D_B)\leq \epsilon$\end{small}}
       \parState{%
       Train $h_2 \in \mathcal{H}$ for $T-T_{\text{last}}$ epochs to minimize \begin{small}$\W(h_2\circ D_A,D_B)-\lambda R_{D_A}[h_1,h_2]$\end{small}.}
%       \EndIf
    \parState{store\_functions($\omega$, [$h_1, h_2, T$]) }
    \parState{\Return \begin{small}$R_{D_A}[h_1,h_2] + \W(h_1 \circ D_A,D_B)$\end{small}.} 
  \end{algorithmic}
\end{algorithm}

\subsection{Bound on the Loss of Each Sample}\label{sec:onesamp}

Next, we extend the bound to estimate the error $\ell(h_1(x),y(x))$ of mapping by $h_1$ a specific sample $x \sim D_A$. Lem.~\ref{lem:bound3} gives rise to a simple method for bounding the loss of $h_1$ on a specific sample $x\sim D_A$. 
Note that the second term in the bound does not depend on $h_1$ and is expected to be small, since it denotes the capability of overfitting on a single sample $x$.
\begin{restatable}{lemma}{oneSamp}\label{lem:bound3} Let $A=(\mathcal{X}_A,D_A)$ and $B=(\mathcal{X}_B,D_B)$ be two domains and $\mathcal{P}$ a class of functions. In addition, let $\ell$ be a loss function satisfying the triangle inequality. Then, for any target function $y$ and $h_1 \in \mathcal{P}$, we have:
\begin{equation}\label{eq:onesamp}
\ell(h_1(x),y(x)) \leq  \underset{h_2 \in \mathcal{P}}{\sup} \ell(h_1(x),h_2(x)) + \underset{h \in \mathcal{P}}{\inf}
\ell(h(x),y(x))
\end{equation}
\end{restatable}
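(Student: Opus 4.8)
The plan is to obtain the inequality pointwise in $x$ by inserting an arbitrary intermediate hypothesis $h \in \mathcal{P}$ into the triangle inequality for $\ell$, and then optimizing over that choice. The only structural facts used are that $\ell$ satisfies the triangle inequality and that $\mathcal{P}$ is nonempty (which holds here since $h_1 \in \mathcal{P}$). No differentiability, no distributional assumptions, and no properties of $y$ beyond its being a function $\mathcal{X}_A \to \mathcal{X}_B$ are needed; in particular this is a strict generalization of the pointwise version of Lem.~\ref{lem:cvprBound1}.

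First I would fix the sample $x$ and an arbitrary hypothesis $h \in \mathcal{P}$. By the triangle inequality applied at the point $h(x)$,
\begin{equation}
\ell(h_1(x),y(x)) \;\leq\; \ell(h_1(x),h(x)) + \ell(h(x),y(x)).
\end{equation}
Since $h \in \mathcal{P}$, the first term on the right is at most $\sup_{h_2 \in \mathcal{P}} \ell(h_1(x),h_2(x))$, so
\begin{equation}
\ell(h_1(x),y(x)) \;\leq\; \sup_{h_2 \in \mathcal{P}} \ell(h_1(x),h_2(x)) + \ell(h(x),y(x)).
\end{equation}
This holds for \emph{every} $h \in \mathcal{P}$, and the first term on the right does not depend on $h$; taking the infimum over $h \in \mathcal{P}$ on the right-hand side yields the claimed bound.

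The passage to the infimum is the one place deserving a word of care: if $\sup_{h_2 \in \mathcal{P}} \ell(h_1(x),h_2(x)) = \infty$ the statement is vacuous, so we may assume it is finite, and then for each $\varepsilon>0$ we choose $h_\varepsilon \in \mathcal{P}$ with $\ell(h_\varepsilon(x),y(x)) \leq \inf_{h \in \mathcal{P}} \ell(h(x),y(x)) + \varepsilon$, apply the displayed inequality with $h = h_\varepsilon$, and let $\varepsilon \to 0$. There is essentially no obstacle here; the argument is a two-line consequence of the triangle inequality, and the main content of the lemma is conceptual — it isolates a per-sample ``variance'' term $\sup_{h_2} \ell(h_1(x),h_2(x))$ and a per-sample ``bias'' term $\inf_h \ell(h(x),y(x))$ that, as remarked after the statement, is expected to be negligible because it merely measures how well some hypothesis in $\mathcal{P}$ can fit the single point $x$.
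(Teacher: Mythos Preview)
Your proof is correct and follows essentially the same approach as the paper: insert an intermediate hypothesis via the triangle inequality, bound the first term by the supremum over $\mathcal{P}$, and identify the second term with the infimum. The only difference is that the paper simply picks $h^* \in \arginf_{h \in \mathcal{P}} \ell(h(x),y(x))$ and applies the triangle inequality once, whereas you handle the possibility that the infimum is not attained via an $\varepsilon$-argument; this is a minor refinement rather than a different route.
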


\begin{algorithm}[t]
\caption{Bounding the loss of $h_1$ on sample $x$}
\label{algo:where}
  \begin{algorithmic}[1]
  \Require{$S_A$ and $S_B$: unlabeled training sets; $\mathcal{H}$: a hypothesis class; $h_1\in\mathcal{H}$: a mapping; $\lambda$: a trade-off parameter; $x$: a specific sample.}
% \parState{%
% Train $G_1$ to minimize \begin{small}$\W(G_1\circ D_A,D_B)$\end{small}.}
\parState{%
Train $h_2\in\mathcal{H}$ to minimize \begin{small}$\W(h_2\circ D^x_A,D_B)-\lambda \ell(h_1(x),h_2(x))$\end{small}.} 
\parState{\Return \begin{small}$\ell(h_1(x),h_2(x))$\end{small}.}
\end{algorithmic}
\end{algorithm}

Similar to the analysis done in Sec.~\ref{sec:est}, Eq.~\ref{eq:onesamp} provides us with an accessible bound for the per-sample loss. In this case, we take $\mathcal{P} = \{h\in \mathcal{H} \;\vert\; \W(h \circ D^x_A,D_B) \leq \epsilon_0\}$, where $D^x_A$ is the distribution of $s = x$ with probability $0.5$ and $s \sim D_A$ with probability $0.5$. This reweighing emphasizes the role of $x$ and allows us to train $h_2$ for fewer epochs. This is important, since, different $h_2$ must be trained for measuring the error of each sample $x$. The RHS can be directly approximated, by training a neural network $h_2$ of a discrepancy lower than $\epsilon$ and has maximal loss with regards to $h_1$, i.e., 
\begin{equation}\label{eq:primalOne3}
\sup\limits_{h_2 \in \mathcal{H}} \ell(h_1(x),h_2(x)) \textnormal{ s.t: } \W(h_2 \circ D^x_A,D_B) \leq \epsilon_0
\end{equation}
With similar considerations as in Sec.~\ref{sec:est}, we replace Eq.~\ref{eq:primalOne3} with the following objective:
\begin{equation}\label{eq:dualOne3}
\begin{aligned}
\min\limits_{h_2 \in \mathcal{H}}  \W(h_2 \circ D^x_A,D_B) - \lambda \ell(h_1(x),h_2(x))
\end{aligned}
\end{equation}
As before, the expectation over $D_A$ and $D_B$ in the \sout{discrepancy} {\color{red} WGAN divergence}, are replaced with the \sout{sum} {\color{red} avergage} over the training samples {\color{red} $\mathcal{S}_A$ and $\mathcal{S}_B$} in domains $A$ and $B$ (resp.).

\subsection{Alignment with Non-minimal Architectures}

If the simplicity hypothesis is correct, then in order to capture the target alignment, one would need to learn with the minimal complexity architecture that supports a small discrepancy. However, complex architectures can lead to even smaller discrepancies and to better outcomes.

In order to enjoy both the alignment provided by our hypothesis and the improved output quality, we propose to find a function $h$ of a non-minimal complexity $k_2$ that minimizes the following objective function
\begin{equation}\label{eq:objComp}
\min\limits_{h  \textnormal{ s.t } C(h)=k_2 } \left\{\W(h \circ D_A,D_B) + \lambda \inf\limits_{g \textnormal{ s.t } C(g)=k_1} R_{D_A}[h,g]\right\}
\end{equation}
where $k_1 \in \mathbb{N}$ is the minimal complexity for mapping with low discrepancy between domain $A$ and domain $B$. In other words, we suggest to find a function $h$ that is both a high complexity mapping from domain $A$ to $B$ and is close to a function of low complexity. The term $\underset{g \textnormal{ s.t } C(g)=k_1}{\arg\inf} R_{D_A}[h,g]$ is a regularization term that restricts the function $h$ to be aligned with a function of smaller complexity.

There are alternative ways to implement an algorithm that minimizes the objective function presented in Eq.~\ref{eq:objComp}. Assuming, based on this equation, that for $h$ that minimizes the objective function, the corresponding $g^* = \underset{g \textnormal{ s.t } C(g)=k_1}{\arg\inf} R_{D_A}[h,g]$ has a (relatively) small discrepancy, leads to a two-step algorithm. The algorithm first finds a function $g$ that has small complexity and small discrepancy and then finds $h$ of a larger complexity that is close to $g$. This is implemented in Alg.~\ref{algo:duallayers}. Note that in the first step, $k_1$ is being estimated, for example, by gradually increasing its value, until $g$ with a discrepancy lower than a threshold $\epsilon_0$ is found. We suggest using a liberal threshold, since the goal of the network $g$ is to provide alignment and not the lowest possible discrepancy.

\begin{algorithm}
\caption{Complexity Based Regularization Alignment
\label{algo:duallayers}}
\begin{algorithmic}[1]
\Require{$\mathcal{S}_A$ and $\mathcal{S}_B$: unlabeled training sets; $k_2$: a desired complexity; $\epsilon_0$: a threshold; $\lambda$: a trade-off parameter.}
%        \State Set $k_1 \leftarrow 0$. 
%        \State Set $g \leftarrow \Id$.
%                \While{$\W(g \circ D_A,D_B)>\epsilon_0$} 
%        \State $k_1 \leftarrow k_1 + 1$.
		\State Identify the minimal complexity $k_1$, which leads to a small discrepancy 
        $k_1 = \argmin\limits_{k \in \mathbb{N}} \left\{\underset{g \textnormal{ s.t: } C(g)=k_1}{\min} \W(g \circ D_A,D_B) \leq \epsilon_0 \right\}$. 
        \State Train $g$ of complexity $k_1$ to minimize $\W(g \circ D_A,D_B)$.
        %\EndWhile
        \State Train $h$ of complexity $k_2$ to minimize $\W(h \circ D_A,D_B) + \lambda R_{D_A}[h,g]$.
\end{algorithmic}
\end{algorithm}

\section{Experiments}
\label{sec:experiments}
{\color{red} The experiments presented below are a subset of the experiments presented in ~\citep{galanti2018the,benaim2017maximally}.} 

The first group of experiments is intended to test the validity of the two predictions made {\color{red}in Sec.~\ref{sec:conseq}}. {\sout{in order to give further support to the simplicity hypothesis.}} The next group of experiments are dedicated to  Algs.~\ref{algo:when},~\ref{algo:rtrvl} and~\ref{algo:where}. Finally, we evaluate the success of Alg.~\ref{algo:duallayers} in comparison to the DiscoGAN method of~\cite{pmlr-v70-kim17a}. {\color{red}Additional experiments validating Alg.~\ref{algo:when2} from Appendix~\ref{sec:dualProofExt} are presented in the appendix itself.}

Note that while we develop the theoretical results in the context of \sout{IPMs and WGAN} {\color{red} WGANs and IPMs}, the methods developed are widely applicable. In order to demonstrate this, we run our experiments using a wide variety of GAN variants, including CycleGAN\mbox{~\citep{CycleGAN2017}}, DiscoGAN~\citep{pmlr-v70-kim17a}, DistanceGAN~\citep{distgan}, and UNIT~\citep{liu2017unsupervised}, as well as using WGAN itself. The choice of relying on WGAN in the analysis was made, since it is highly accepted as an effective GAN method and since it is amenable to analysis.

\subsection{Experiments for the Predictions}

% We chose to experiment with the DiscoGAN architecture since it focuses on semantic tasks that contain a lesser component of texture or style transfer. The CycleGAN architecture of~\cite{CycleGAN2017} inherits much from the style transfer architecture of Pix2Pix~\cite{pix2pix}, and the discrepancy term is based on a patch-based analysis, which introduces local constraints that could mask the added freedom introduced by adding layers. In addition, the U-net architecture of ~\cite{unet} used by~\cite{pix2pix} deviates from the connectivity pattern of our model. 

% Experiments in this architecture and with the architecture of DualGAN~\citep{dualgan}, which focuses on tasks similar to CycleGAN, and shares many of the architectural choices,  including U-nets and the use of patches, are left for future work.

\begin{figure}[t]


    \centering
    \begin{minipage}{.5\textwidth}
        \centering
        \includegraphics[width=0.95\linewidth]{figures_celebA_blond/gan_onlyA_to_AB.pdf}
    \end{minipage}%
    \begin{minipage}{0.5\textwidth}
        \centering
        \includegraphics[width=0.95\linewidth]{figures_celebA_blond/gan_onlyB_to_BA.pdf}
    \end{minipage}
    \caption{\label{fig:celebA_blond} {\color{red} Results for the celebA dataset for converting blond to black hair and vice versa, when the mapping is obtained by the GAN loss without additional losses.}}
\end{figure}

\begin{figure}[t]
  \centering
 \includegraphics[width=0.6\linewidth,clip]{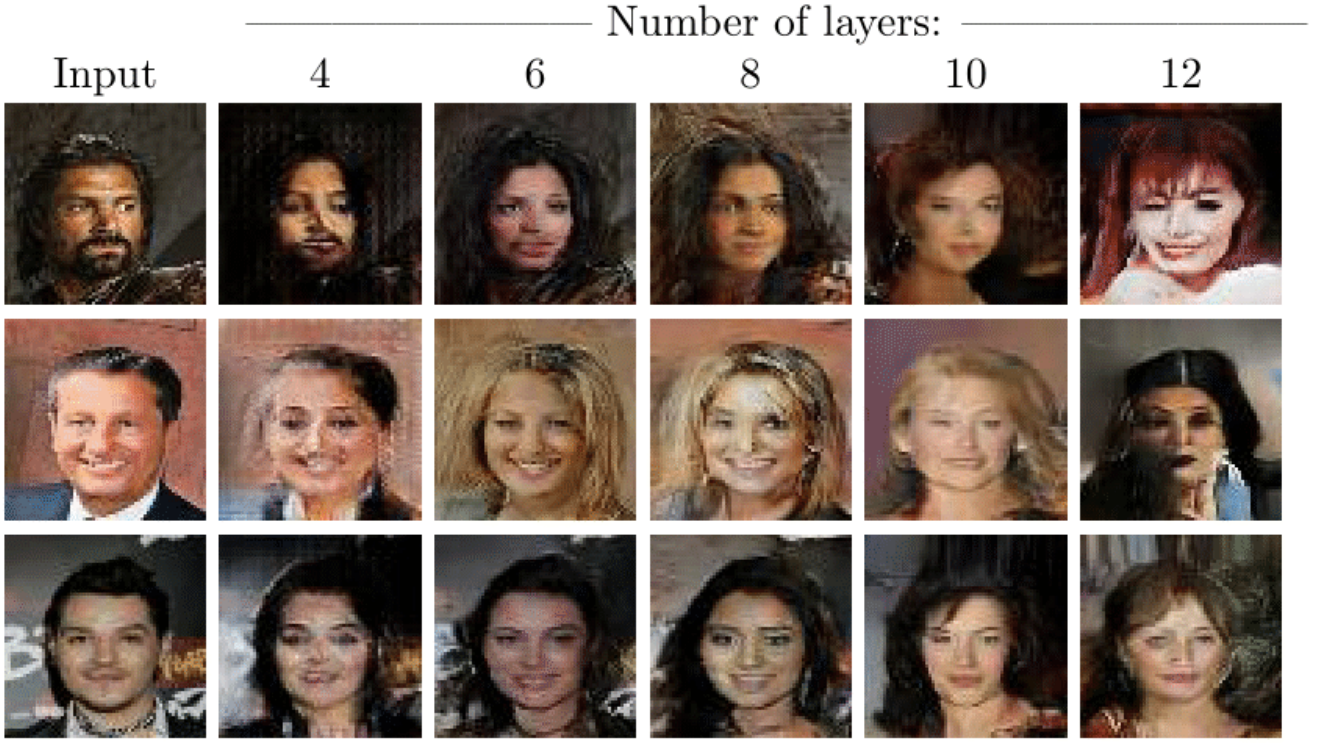}
  \caption{\label{fig:sw004} {\color{red} Results for CelebA Male to Female transfer for DiscoGAN with varying number of layers of the generator. The best results are obtained for $6$ or $8$ layers. For more than $6$ layers, the alignment is lost.} }
\end{figure}

Prediction 1 states that since the unsupervised mapping methods are aimed at learning minimal complexity low discrepancy functions, GANs are sufficient. In this paper, we mainly focus on a complexity measure that is the minimal number of layers of a neural network that are required in order to compute $h$, where each hidden layer is of size $\in [r_1,r_2]$. In the literature~\citep{CycleGAN2017,pmlr-v70-kim17a}, learning a mapping $h:\mathcal{X}_A\rightarrow \mathcal{X}_B$, based only on the GAN constraint on $B$ {\color{red} (e.g., minimize $\W(h \circ D_A,D_B)$)}, is presented as a failing baseline. In~\citep{dualgan}, among many non-semantic mappings obtained by the GAN baseline, one can find images of GANs that are successful. However, this goes unnoticed.

In order to validate the prediction that a purely GAN based solution is viable, we conducted a series of experiments using the DiscoGAN architecture and GAN loss in the target distribution only. We consider image domains $A$ and $B$, where $\mathcal{X}_A = \mathcal{X}_B = \mathbb{R}^{3\times 64\times 64}$.  

In DiscoGAN, the generator is built of: (i) an encoder consisting of convolutional layers with $4\times 4$ filters, followed by Leaky ReLU activation units and (ii) a decoder consisting of deconvolutional layers with $4\times 4$ filters, followed by a ReLU activation units. Sigmoid is used for the output layer. Between four to five convolutional/deconvolutional layers are used, depending on the domains used in $A$ and $B$ (we match the published code architecture per dataset). The discriminator is similar to the encoder, but has an additional convolutional layer as the first layer and a sigmoid output unit. 

The first set of experiments considers the CelebA face dataset. \sout{Transformations are learned between the subset of images labeled as male and those labeled as female, as well as from blond and black hair.} {\color{red} The results for hair color conversions are shown in} \sout{Fig.~\ref{fig:celebA_gender} and} {\color{red} Fig.~\ref{fig:celebA_blond}} \sout{(resp.)}. It is evident that the output image is closely related to the input images, despite the fact that cycle loss terms were not used.

\sout{In the case of mapping handbags to shoes, as seen in Fig.~\ref{fig:handbagtoshoe}, the GAN does not provide a meaningful solution. However, in the case of edges to shoes and vice versa (Fig.~\ref{fig:edges2shoes}), the GAN solution is successful.}

%\subsubsection{Empirical Validation of Prediction 2}

In Prediction 2, we predict that the selection of the right number of layers is crucial in unsupervised learning. Using fewer layers than needed, will not support the modeling of the target distribution. By contrast, adding superfluous layers would mean that more and more alternative mappings obscure the target transformation. 

In~\citep{pmlr-v70-kim17a}, 8 or 10 layers are employed  (counting both convolution and deconvolution), depending on the experiment. In our experiment, we vary the number of layers and inspect the influence on the results. \sout{The experiment  are also repeated for the Wasserstein GAN loss (using the same architecture, except, for the generator, the last sigmoid layer is removed). }

{\color{red} To see the influence of the number of layers of the generator on the results, we employed the DiscoGAN~\citep{pmlr-v70-kim17a} public implementation and added or removed layers from the generator.}
{\color{red}The experiment was} done on the CelebA dataset for male to female conversion where 8 layers are employed in the experiments of~\citep{pmlr-v70-kim17a}. \sout{Using the public implementation and adding and removing layers. We obtain the results in Figs.~\ref{fig:sw001}--~\ref{fig:sw006} for WGAN and Figs.~\ref{fig:a001}--~\ref{fig:a004} for DiscoGAN.} {\color{red} The results are illustrated in Fig.~\ref{fig:sw004}.} Note that since the encoder and the decoder parts of the learned network are symmetrical, the number of layers is always even. As can be seen, changing the number of layers has a dramatic effect on the results. \sout{For the WGAN case, The best overall results are obtained at 8 layers. Using fewer layers, WGAN often fails to produce images of the desired class. Adding layers, the semantic alignment is lost, as expected.} \sout{For the DiscoGAN case,} The best results are obtained at 6 or 8 layers with 6 having the best alignment and 8 having better discrepancy. The results degrade quickly, as one deviates from the optimal value. Using fewer layers, the GAN fails to produce images of the desired class. Adding layers, the semantic alignment is lost, just as expected.

% {\color{red}To provide a numerical analysis, we employ the VGG descriptor~\citep{Parkhi15} to obtain a measure of both separation accuracy and discrepancy as the number of layers vary. We  check if the identity is preserved post mapping by comparing the VGG face descriptors. One can assume that two matching images will have many similar features and so the VGG representation will be similar. The cosine similarities are used, as is commonly done. As we can see, in the DiscoGAN experiments, the optimal values of the three measurements are obtained for 6 and 8 layers.}

\sout{Additional experiment was conducted in order to verify that these conclusions extend to the CycleGAN architecture of\mbox{~\cite{CycleGAN2017}}. The results are shown in Fig.\mbox{~\ref{fig:aer001}}. As can be seen, running  an experiment on the Aerial images to Maps dataset, we found that 8 layers produce an aligned solution. Using 10 layers produces unaligned map images with low discrepancy. For fewer than 8 layer, the discrepancy is high and the images are not very detailed.} 
% For the Segmentations to Images dataset, we found that 6 layers produce an aligned solution. By increasing the number of layers, we obtain mappings that are less aligned and by decreasing the number of layers we obtain less realistic outputs.

% In the WGAN experiment, we employ the architecture of~\citep{pmlr-v70-kim17a} and vary the number of layers and inspect the influence on the results. For the generator, the architecture is identical while for WGAN's critic, the last sigmoid layer is removed. 
% These experiments were done on the CelebA dataset, obtaining the results in Fig.~\ref{fig:sw001}--~\ref{fig:sw006}. 

% Note that since the encoder and the decoder parts of the learned network are symmetrical, the number of layers is always even. As can be seen, changing the number of layers has a dramatic effect on the results. The best overall results are obtained at 6 layers. Using fewer layers, WGAN often fails to produce images of the desired class. Adding layers, the semantic alignment is lost, as expected.

\sout{Note that\mbox{~\cite{pmlr-v70-kim17a}} have preferred low discrepancy over alignment in their choice. In other words, the selected architecture of size $k=8$ presents acceptable images at the price of lower alignment compared to an architecture of size $k-2$. This is probably a result of ambiguity that is already present at the size $k$ architecture. On the other hand, the smaller architecture of size $k-2$ does not produce images of extremely low discrepancy, and there is no architecture that benefits both, an extremely low discrepancy and high alignment.  This is observed, for example in Fig.~\ref{fig:a001}, where females are translated to males. For 4 layers the discrepancy is too low and the mapping fails to produce images of males. For 6 layers, the discrepancy is relatively low and the alignment is at its highest. For 8 layers, the discrepancy is at its lowest value. Nevertheless, the alignment is worse.}

% While our discrete notion of complexity seems to be highly related to the quality of the results, the norm of the weights do not seem to point to a clear architecture, as shown in Tab.~\ref{tab:a001}(a). Since the table compares the norms of architectures of different sizes, we also approximated the functions using networks of a fixed depth $k=18$ and then measured the norm. These results are presented in Tab.~\ref{tab:a001}(b). In both cases, the optimal depth, which is 6 or 8, does not appear to have a be an optimum in any of the measurements. 

% Prediction 3 states that there are only a handful of DPMs,  except for the identity function. In order to verify it, we trained a DiscoGAN from a distribution $A$ to itself with an added loss of the form $-\sum_{x\in A}|x-h(x)|$. In our experiment, testing network complexities from 2 to 12, we could not find a DPM, see Fig.~\ref{fig:c001} and Tab.~\ref{tab:c001}. For lower complexities, the identity was learned despite the added loss. For higher complexities, the network learned the identity while changing the background color. For even higher complexities, other mapping emerged. However, these mappings did not satisfy the circularity constraint, and are unlikely to be DPMs.
% SAY SOMETHING ABOUT CHANGING BACKGROUND COLOR OF IDENTITYE -> Style Transfer -> More complicated geometric transformation. 

\subsection{Results for Algs.~\ref{algo:when} and~\ref{algo:rtrvl}}

\begin{figure*}[t]
\centering
\hspace{-0.6cm}
  \begin{tabular}{c@{~}c@{~}c}
  \centering{\begin{turn}{90}\;\;\;\;~~~Shoes2Edges\end{turn}}&\includegraphics[width=0.4\linewidth, clip]{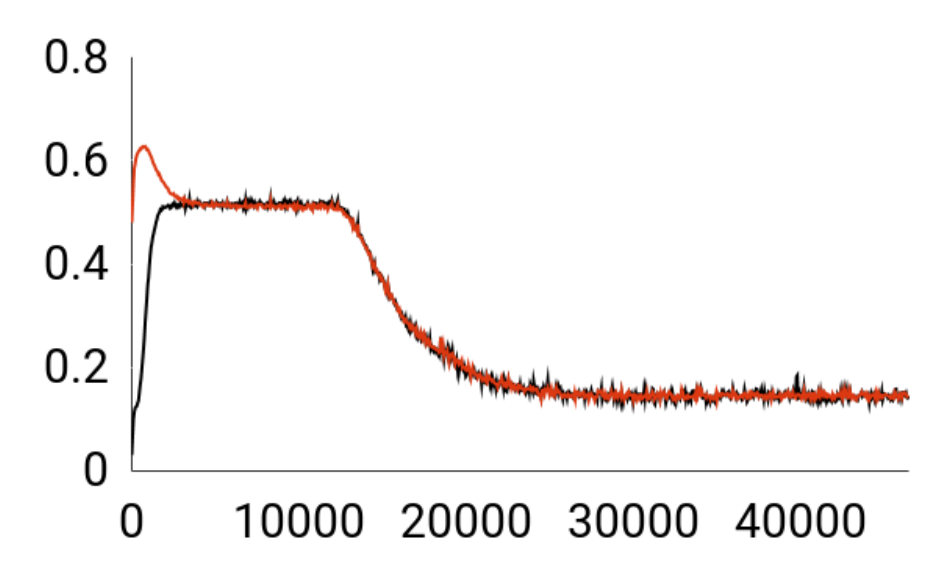}&%\hspace{-0.4cm} 
\includegraphics[width=0.4\linewidth, clip]{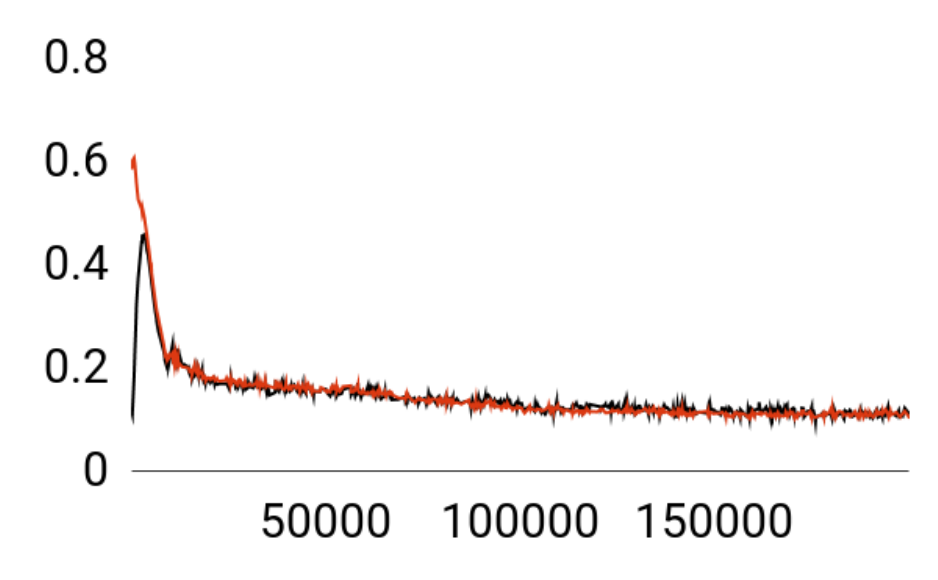}\\
& (Alg~\ref{algo:when}, discoGAN)  & (Alg~\ref{algo:when}, distanceGAN) \\
\end{tabular}
\caption{{\color{red}Results of Alg.~\ref{algo:when} for mapping Shoes to Edges. Ground truth errors are in red and bound in black. x-axis is the iteration. y-axis is expected risk. It takes a few epochs for $h_1$ to have a small enough discrepancy, until which the bound is ineffective.}}
  \label{fig:whenMain}
\end{figure*}

We test the two algorithms on two unsupervised alignment methods: DiscoGAN~\citep{pmlr-v70-kim17a} and % CycleGAN~\cite{CycleGAN2017}, and 
DistanceGAN~\citep{distgan}. In DiscoGAN, we train $h_1$ (and $h_2$), using two GANs and two circularity constraints; in DistanceGAN, {\color{red}to train $h_1$ (and $h_2$),} one GAN and one distance correlation loss are used. The published hyperparameters for each dataset are used, except \sout{when applying our model selection method, where we vary the number of layers and} when using Hyperband, where we \sout{also} vary the {\color{red}number of layers, the} learning rate and the batch size. 

Five datasets were used in the experiments: (i) aerial photographs to maps, trained on data scraped from Google Maps~\citep{pix2pix}, (ii) the mapping between photographs from the cityscapes dataset and their per-pixel semantic labels~\citep{Cordts2016Cityscapes}, (iii) architectural photographs to their labels from the CMP Facades dataset~\citep{facades},  (iv) handbag images~\citep{handbag} to their binary edge images, as obtained from the HED edge detector~\citep{hed}, and (v) a similar dataset for the shoe images from~\citep{shoes}.

Throughout the experiments, fixed values are used as the low-discrepancy threshold ($\epsilon_0=0.2$). The tradeoff parameter between the dissimilarity term and the fitting term during the training of $h_2$ is set, per dataset, to be the maximal value, such that the fitting of $h_2$ provides a solution that has a discrepancy lower than the threshold. This is done once, for the default parameters of $h_1$, as given in the original DiscoGAN and DistanceGAN~\citep{pmlr-v70-kim17a,distgan}.

The results of all experiments are summarized in Tab.~\ref{tab:mainresults}, which presents the correlation and p-value between the ground truth error, as a function of the independent variable, and the bound. The independent variable is either the training epoch or the sample, depending on the algorithm tested. For example, in Alg.~\ref{algo:when} we wish to decide on the best epoch, the independent variable is the training epoch. A high correlation (low p-value) between the bound and the ground truth error, both as a function of the number of layers, indicates the validity of the bound and the utility of the algorithm. Similar correlations are shown with the GAN losses and the reconstruction losses (DiscoGAN) or the distance correlation loss (DistanceGAN), in order to demonstrate that these are much less correlated with the ground truth error. In Fig.~\ref{fig:whenMain}, we omit the other scores in order to reduce clutter.

\begin{table*}[t]
\centering
\begin{small}
\caption{Pearson correlations and the corresponding p-values (in parentheses) of the ground truth error with: (i) the bound, (ii) the GAN losses, and (iii) the circularity losses or (iv) the distance correlation loss. %Note that the p-value is a function of both the correlation strength and the number of samples, which is larger for Alg.~2. 
$^{*}$The cycle loss $A\rightarrow B\rightarrow A$ is shown for DiscoGAN and the distance correlation loss is shown for DistanceGAN.}
\label{tab:mainresults}
\resizebox{0.99000995\textwidth}{!}{% <------ Don't forget this %
\begin{tabular}{l@{~~}l@{~~}llcccc}
\toprule
Alg. & Method & Dataset & Bound & $GAN_A$ & $GAN_B$ & $Cycle_A/\mathcal{L}_D$$^{*}$ & $Cycle_B$ \\
\midrule

Alg.~\ref{algo:when}  & Disco-    & Shoes2Edges    & {\bf 1.00} (\textless1E-16)          & -0.15 (3E-03)         & -0.28 (1E-08)  & 0.76(\textless1E-16)           & 0.79(\textless1E-16)   \\
                             &  GAN~\cite{pmlr-v70-kim17a}          &&&&&\\ %& Bags2Edges &        {\bf 1.00}	(\textless1E-16)& -0.26 (6E-11) &	-0.57 (\textless1E-16) &	0.85 (\textless1E-16) &	0.84 (\textless1E-16)           \\
                             %&             & Cityscapes        & {\bf 0.94}   (\textless1E-16)        & -0.66 (\textless1E-16)          & -0.69 (\textless1E-16)          & -0.26  (1E-07)        & 0.80 (\textless1E-16)   \\
                             %&             & Facades           & {\bf 0.85}  (\textless1E-16)         & -0.46  (\textless1E-16)        & 0.66 (\textless1E-16)           & 0.92 (\textless1E-16)          & 0.66  (\textless1E-16)         \\
                             %&             & Maps              & {\bf 1.00}     (\textless1E-16)      & -0.81 (\textless1E-16)          & 0.58  (\textless1E-16)         & 0.20   (9E-05)        & -0.14   (5E-03)       \\
                             %& Distance- & Shoes2Edges    &           {\bf 0.98} (\textless1E-16)	& - &	-0.25	(2E-16) & -0.14 (1E-05)	    &       -         \\
                             %&  GAN~\cite{distgan}           & Bags2Edges &            {\bf 0.93} (\textless1E-16) & - &	-0.08 (2E-02) &	0.34	(\textless1E-16)     &        -        \\
                             
                             %&             & Cityscapes        & {\bf 0.59}  (\textless1E-16)         & -          & 0.22   (1E-11)        & -0.41   (\textless1E-16)        &          -      \\
                             %&             & Facades           & {\bf 0.48}    (\textless1E-16)       & -        & 0.03  (5E-01)         & -0.01   (9E-01)       &        -        \\
                             %&             & Maps              & {\bf 1.00}    (\textless1E-16)       & -         & -0.73  (\textless1E-16)         & 0.39   (4E-16)        &      -          \\

\midrule
Alg.~\ref{algo:where} & Disco-    & Shoes2Edges    & {\bf 0.92}  (\textless1E-16)         & -0.12 (5E-01)         & 0.02  (9E-01)         & 0.29  (6E-02)         & 0.15    (4E-01)       \\
                             &       GAN~\cite{pmlr-v70-kim17a}      & Bags2Edges & {\bf 0.96}  (\textless1E-16)           & 0.25  (1E-01)         & 0.08   (6E-01)        & 0.08   (6E-01)        & 0.05  (7E-01)         \\
                             &             & Cityscapes        &        {\bf 0.78} (4E-04) &	0.24 (4E-01)	& -0.16 (6E-01) &	-0.04 (9E-01) &	0.03 (9E-01)    \\
                             &             & Facades           & {\bf 0.80}   (6E-10)        & 0.13   (4E-01)        & 0.16     (3E-01)      & 0.20    (2E-01)       & 0.09     (5E-01)      \\
                             &             & Maps             & {\bf 0.66}    (1E-03)       & 0.08   (7E-01)        & 0.12    (6E-01)       & 0.17   (5E-01)        & -0.25  (3E-01)        \\
                             & Distance- & Shoes2Edges    & {\bf 0.98}    (\textless1E-16)        & -     & -0.05   (7E-01)       & 0.84  (\textless1E-16)         & -               \\
                             &     GAN~\cite{distgan}        & Bags2Edges &  {\bf 0.92}      (\textless1E-16)      & -        & -0.28  (2E-01)        & 0.45    (3E-02)       &         -       \\
                             &             & Cityscapes        &   {\bf 0.51} (4E-04)            &        -        &   0.10 (5E-01)             &        0.28 (2E-2)        &           -     \\
                             &             & Facades           & {\bf 0.72}    (\textless1E-16)        & -      & -0.01    (1E00)      & 0.08  (6E-01)         &   -        \\
                             &             & Maps              &  {\bf 0.94}   (1E-06)        & -    & 0.20    (2E-01)       & 0.30  (6E-02)          &       -  \\
%1    & DiscoGAN & Facades     & %\ref{fig:somefig}(a) & 
%{{\bf \begin{tabular}[c]{@{}l@{}}0.77\\ 2.2E-6\end{tabular}}} & \begin{tabular}[c]{@{}l@{}}0.27\\ 1.2E-1\end{tabular}          &         &     %      &           \\
\bottomrule
\end{tabular}}
\end{small}
\end{table*}

\paragraph{Stopping Criterion (Alg.~\ref{algo:when})} For testing the stopping criterion suggested in Alg.~\ref{algo:when}, we compared, at each time point, two scores that are averaged over all training samples: $||h_1(x) - h_2(x)||$, which is our bound in Eq.~\ref{eq:boundEps0} (excluding the $+\epsilon_0$ term), and the ground truth error $||h_1(x) - y(x)||$, where $y(x)$ is the ground truth image that matches $x$ in domain $B$. 

\sout{Note that similar to the experiments with ground truth in the literature\mbox{~\citep{pmlr-v70-kim17a,CycleGAN2017,distgan}}, the ground truth error is measured in the label space and not in the image domain. The mapping in the other direction $y$ is not one to one.}

The results are depicted in the main results table (Tab.~\ref{tab:mainresults}) as well as in Fig.~\ref{fig:whenMain} for both DiscoGAN (first column) and DistanceGAN (second column). {\color{red} The results for Shoes to Edges are depicted in Fig.~\ref{fig:whenMain}.} As can be seen, there is an excellent match between the mean ground truth error of the learned mapping $h_1$ and the predicted error. No such level of correlation is present when considering the GAN losses or the reconstruction losses (for DiscoGAN), or the distance correlation loss of DistanceGAN. Specifically, the very low p-values in the first column of Tab.~\ref{tab:mainresults} show that there is a clear correlation between the ground truth error and our bound for all datasets. For the other columns, the values in question are chosen to be the losses used for $h_1$. The lower scores in these columns show that none of these values are as correlated with the ground truth error, and so cannot be used to estimate this error.

In the experiment of Alg.~\ref{algo:when} for DiscoGAN, which has a large number of sample points, the cycle from $B$ to $A$ and back to $B$ is significantly correlated with the ground truth error with very low p-values in four out of five datasets. However, its correlation is significantly lower than that of our bound.

\sout{In Fig.~\ref{fig:when}, the Facades graph shows a different behavior than the other graphs. This is because the Facades dataset is inherently ambiguous and presents multiple possible mappings from $A$ to $B$. Each mapping satisfies the Occam's razor property separately. }

{\color{red} These results also demonstrate the tightness of the bound in Thm.~\ref{thm:corBoundGEN}. As can be seen, the bound is tight (close to the test error) and highly correlated with the test error.}

\paragraph{Selecting Architecture with the Modified Hyperband Algorithm} Our bound is used in Sec.~\ref{sec:hyperband} to create an unsupervised variant of the Hyperband method. In addition to selecting the architecture, this allows for the optimization of multiple hyperparameters at once, while enjoying the efficient search strategy of the Hyperband method~\citep{hyperband}. 

Fig.~\ref{fig:hyperband} demonstrates the applicability of our unsupervised Hyperband-based method for different datasets, employing both DiscoGAN and DistanceGAN. The graphs show the error and the bound obtained for the selected configuration after up to 35 Hyperband iterations. As can be seen, in all cases, the method is able to recover a configuration that is significantly better than what is recovered, when only optimizing for the number of layers. To further demonstrate the generality of our method, we applied it on the UNIT~\citep{liu2017unsupervised} architecture. Specifically, for DiscoGAN and DistanceGAN, we optimize the number of encoder and decoder layers, batch size and learning rate, while for UNIT, we optimize for the number of encoder and decoder layers, number of resnet layers and learning rate. Fig.~\ref{fig:hyperband_unit} and Tab.~\ref{fig:hyperband}(b) show the convergence on the Hyperband method. 

\begin{figure*}[t]
\begin{tabular}{c@{~}c}
\multicolumn{2}{c}{\includegraphics[width=1.0\linewidth, clip]{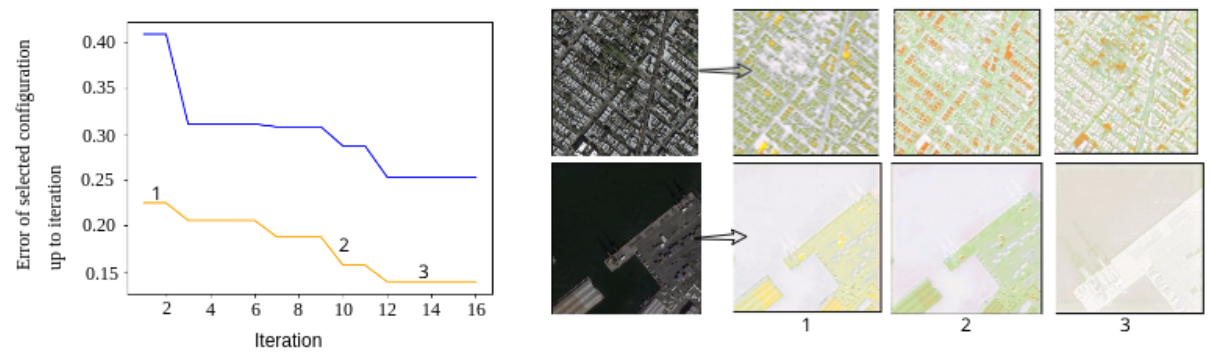}}\\
~~~~~~~~~~~~~~~~~~~~~~~~~~~~~~~~(a) & ~~~~(b)\\
\end{tabular}
\caption{{\color{red} Applying unsupervised Hyperband for selecting the best configuration for UNIT for the Maps dataset. The $x$-axis is the iteration count and the $y$-axis is the error/bound value of the selected configuration. (a) blue and orange lines are bound and ground truth error as in Fig.~\ref{fig:hyperband}. (b) Images produced for three different configurations, as indicated on the plot in (a).} }
 \label{fig:hyperband_unit}
\end{figure*}

\begin{figure*}[t]
\centering
  %\begin{tabular}{ccccc}
%\hspace{-0.4cm}\includegraphics[width=0.20\linewidth, clip]{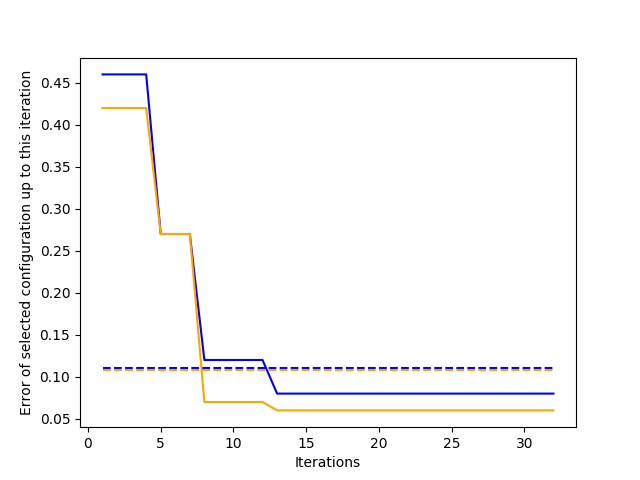}&
%\includegraphics[width=0.20\linewidth, clip]{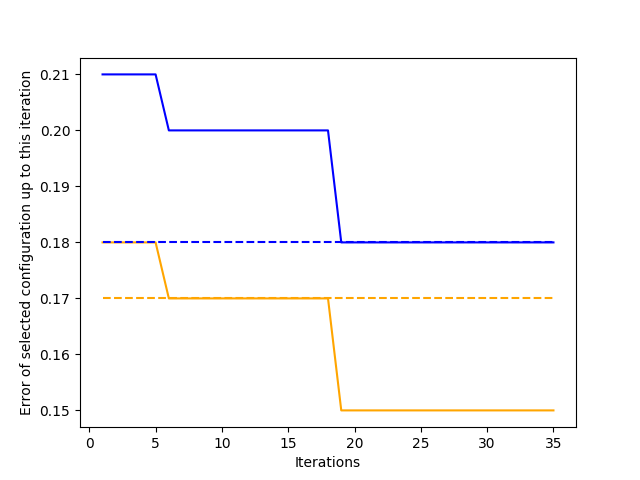}&
%\includegraphics[width=0.20\linewidth, clip]{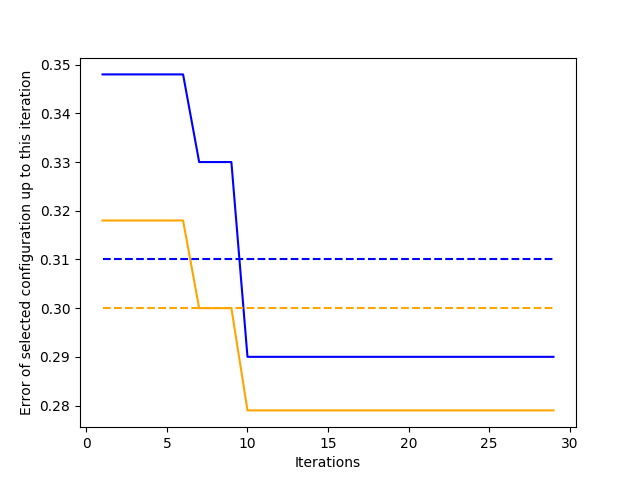}&
% \includegraphics[width=0.20\linewidth, clip]{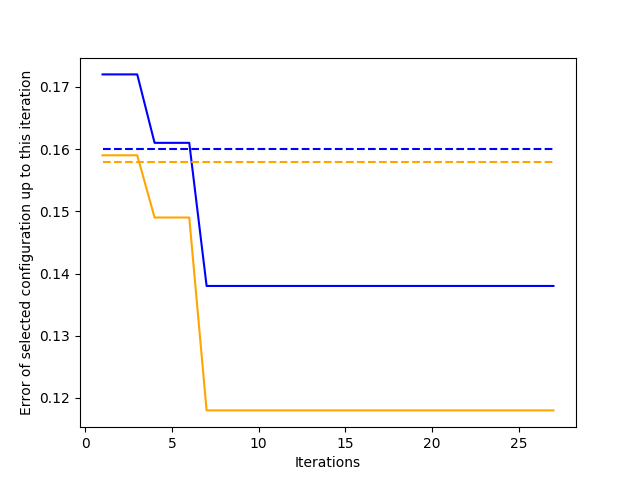}&
%  \includegraphics[width=0.20\linewidth, clip]{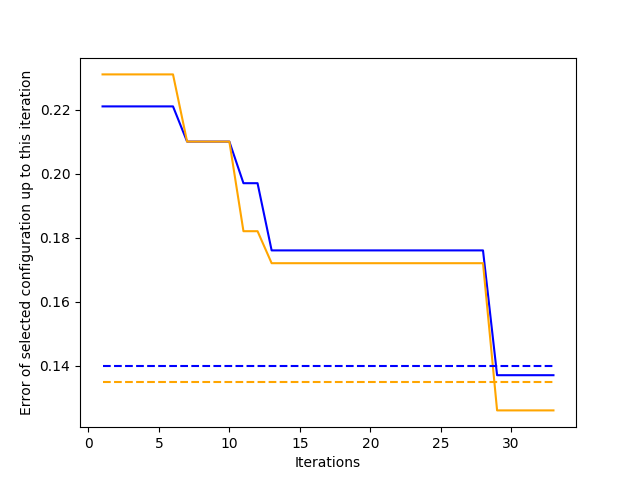}\\
%\hspace{-0.4cm}\includegraphics[width=0.20\linewidth, clip]{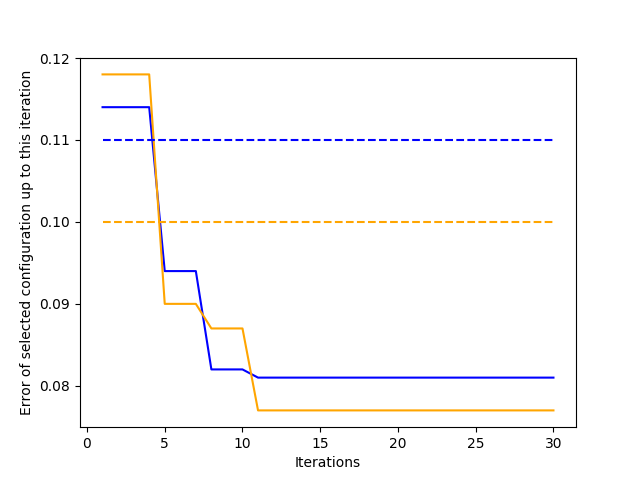}&
%\includegraphics[width=0.20\linewidth, clip]{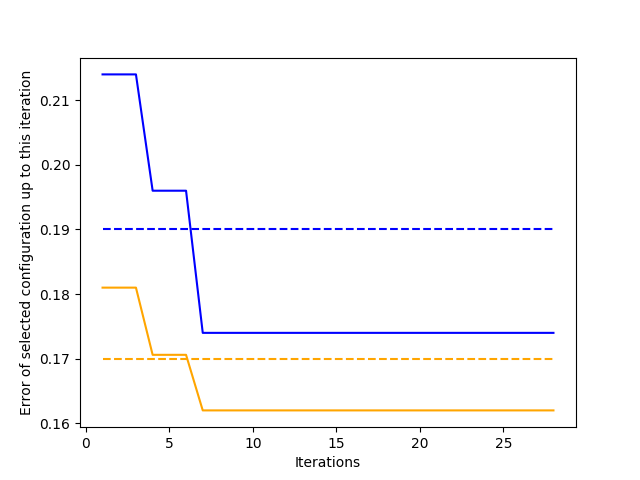}&
% TODO!!!
%\includegraphics[width=0.20\linewidth, clip]{graphs_hyperband/distance_cityscapes_new.png}&
% \includegraphics[width=0.20\linewidth, clip]{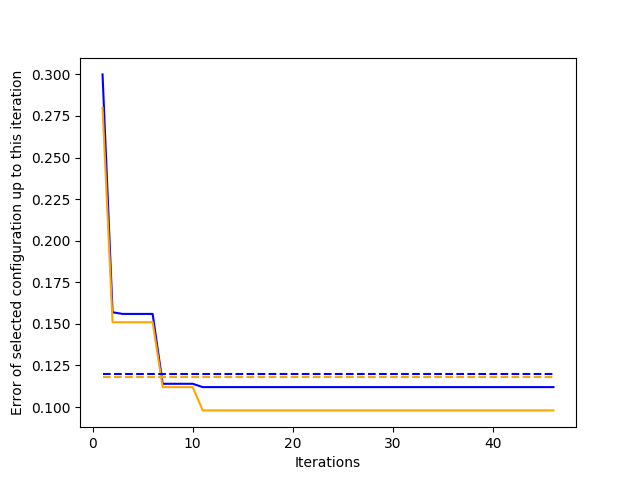}&
 % \includegraphics[width=0.20\linewidth, clip]{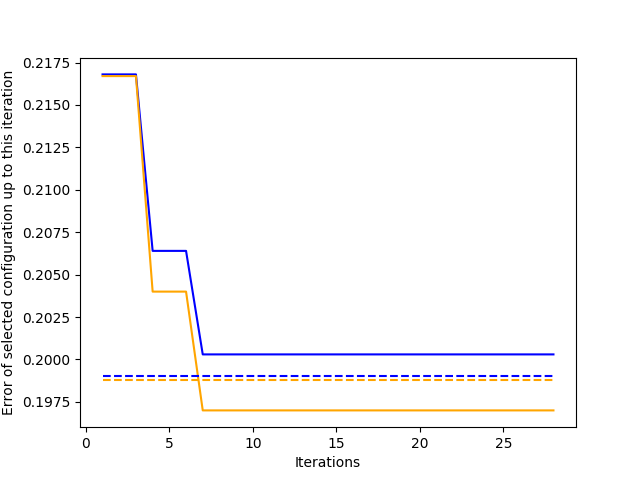}\\
 
 %(Maps) & (Cityscapes) & (Facades) & (Shoes2Edges) & (Handbags2Edges)\\
%\end{tabular}
\vspace{-2cm}
\begin{minipage}{0.656772\textwidth}
\centering
\begin{tabular}{c@{~}c@{~}c}
\centering{\begin{small}\begin{turn}{90}\;\;\;\;\;\; \hspace{3mm} Maps\end{turn}\end{small}}&
\includegraphics[width=0.482\linewidth, clip]{graphs_hyperband/maps_new.png}\vspace{-2mm}&
\includegraphics[width=0.482\linewidth, clip]{graphs_hyperband/distance_maps_new.png}\vspace{-2mm}\\
\centering{\begin{small}\begin{turn}{90}\;\;\;\;\;\; \hspace{1mm}Cityscapes\end{turn}\end{small}}&
\includegraphics[width=0.482\linewidth, clip]{graphs_hyperband/cityscapes_new.png}\vspace{-2mm}&
\includegraphics[width=0.482\linewidth, clip]{graphs_hyperband/distance_cityscapes_new.png}\vspace{-2mm}\\

\centering{\begin{small}\begin{turn}{90}\;\;\;\;\;\; \hspace{3mm}Facades\end{turn}\end{small}}&
\includegraphics[width=0.482\linewidth, clip]{graphs_hyperband/facades_new.png}\vspace{-2mm}&
\includegraphics[width=0.482\linewidth, clip]{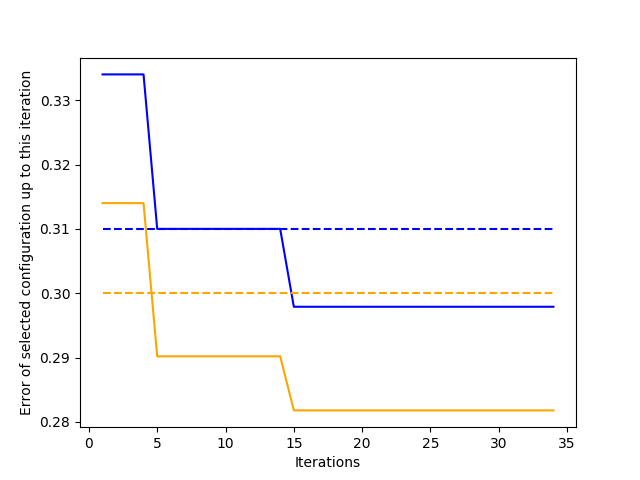}\vspace{-2mm}\\
\centering{\begin{small}\begin{turn}{90}\;\;\;\;\;\; \hspace{-3mm} Bags2Edges\end{turn}\end{small}}&
\includegraphics[width=0.482\linewidth, clip]{graphs_hyperband/handbags_new.png}\vspace{-2mm}&
\includegraphics[width=0.482\linewidth, clip]{graphs_hyperband/distance_handbags_new.png}\vspace{-2mm}\\

\centering{\begin{small}\begin{turn}{90}\;\;\;\;\;\; Shoes2Edges\end{turn}\end{small}}&
\includegraphics[width=0.482\linewidth, clip]{graphs_hyperband/shoes_new.png}\vspace{-2mm}&
\includegraphics[width=0.482\linewidth, clip]{graphs_hyperband/distance_shoes_new.png}\vspace{-2mm}\\
 \multicolumn{3}{c}{(a)}
  \\
\end{tabular}
  \end{minipage}%
\begin{minipage}{.71\textwidth}
\hspace{0.1mm}
\begin{small}
\resizebox{0.48\textwidth}{!}{% <------ Don't forget this %
\begin{tabular}{l@{~~}ccc}
\toprule
Dataset & Number & Batch & Learning \\
 &  Layers & Size & Rate \\
\midrule
\multicolumn{4}{c}{DiscoGAN~\citep{pmlr-v70-kim17a}}\\
            Shoes2Edges       & 3 & 24 & 0.0008 \\
            Bags2Edges        & 2 & 59 &	0.0010 \\
                                  Cityscapes        & 3 & 27 & 0.0009 \\
                                  Facades           & 3 & 20 & 0.0008 \\
                                  Maps              & 3 & 20 & 0.0005 \\
                                  \multicolumn{4}{c}{DistanceGAN~\citep{distgan}} \\
                             Shoes2Edges       & 3 & 15 & 0.0007 \\
                Bags2Edges        & 3 & 33 &	0.0007 \\
                                  Cityscapes        & 4 & 21 & 0.0006 \\
                                  Facades           & 3 & 8  & 0.0006 \\
                                  Maps              & 3 & 20 & 0.0005 \\     
                                  \toprule
Dataset & \#Layers & \#Res & L.Rate \\
% & Layers & Layers & Rate \\
 \midrule
\multicolumn{4}{c}{UNIT~\citep{liu2017unsupervised}} \\
                                  Maps              & 3 & 1 & 0.0003 \\     
\bottomrule
%\hspace{0.1mm}
%& & &\\
\multicolumn{4}{c}{(b)}
\end{tabular}}
%\end{small}
%\end{minipage}%
%\begin{minipage}{0.656772\textwidth}

\centering
\hspace{0.3mm}
\begin{tabular}{l@{~~}l@{~~~}l}
& {\scriptsize default} & {\scriptsize unsupervised}\\
 & {\scriptsize parameters} & {\scriptsize hyperband} \\
~~~$x$ & {\footnotesize $h_1(x)$} & {\footnotesize $h_1(x)$} \\

\includegraphics[width=0.120\linewidth, clip]{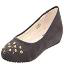}&
\includegraphics[width=0.120\linewidth, clip]{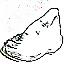}&
\includegraphics[width=0.120\linewidth, clip]{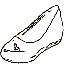} \\
\includegraphics[width=0.120\linewidth, clip]{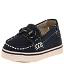}&
\includegraphics[width=0.120\linewidth, clip]{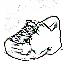}&
\includegraphics[width=0.120\linewidth, clip]{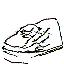} \\
\includegraphics[width=0.120\linewidth, clip]{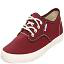}&
\includegraphics[width=0.120\linewidth, clip]{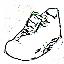}&
\includegraphics[width=0.120\linewidth, clip]{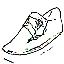} \\
\includegraphics[width=0.120\linewidth, clip]{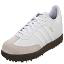}&
\includegraphics[width=0.120\linewidth, clip]{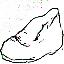}&
\includegraphics[width=0.120\linewidth, clip]{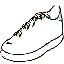} \\
\includegraphics[width=0.120\linewidth, clip]{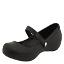}&
\includegraphics[width=0.120\linewidth, clip]{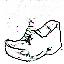}&
\includegraphics[width=0.120\linewidth, clip]{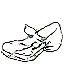} \\
 \multicolumn{3}{c}{(c)}
  \\
\end{tabular}
\end{small}
  \end{minipage}%
% (b) \\ 
% \resizebox{0.99000995\textwidth}{!}{% <------ Don't forget this %
% \begin{tabular}{l@{~~}|c|ccc|c|ccc}
% \toprule
% Dataset & Method & Number of Layers & Batch Size & Learning Rate & Method & Number of Layers & Batch Size & Learning Rate \\
% \midrule
% Shoes2Edges &  Disco-       & 3 & 24 & 0.0008  & Dist-                           &  3 & 15 & 0.0007 \\
% Bags2Edges  &  GAN~\cite{pmlr-v70-kim17a}    & 2 & 59 &	0.0010 &   GAN~\cite{distgan}             & 3 & 33 &	0.0007 \\
% Cityscapes &        & 3 & 27 & 0.0009 & & 4 & 21 & 0.0006 \\
% Facades    &       & 3 & 20 & 0.0008 & &      &   &           \\
% Maps       &       & 3 & 20 & 0.0005 &  & 3 & 20 & 0.0005 \\   
% \bottomrule
% \end{tabular}} 
\caption{\small {\color{red}Applying unsupervised hyperband for selecting the best configuration. For DiscoGAN and DistanceGAN, we optimize of the number of encoder and decoder layers, batch size and learning rate, while for UNIT, we optmized for the number of encoder and decoder layers, number of resnet layers and learning rate. (a) For each dataset, the first plot is of DiscoGAN and the second is of DistanceGAN. Hyperband optimizes, according to the bound values indicated in blue. The corresponding ground truth errors are shown in orange. Dotted lines represent the best configuration errors, when varying only the number of layers without hyperband (blue for bound and orange for ground truth error). Each graph shows the error of the best configuration selected by hyperband, as a function the number of hyperband iterations. (b) The corresponding hyperparameters of the best configuration as selected by hyperband. (c) Images produced for DiscoGAN's shoes2edges: 1st column is the input, the 2nd is the result of DiscoGAN's default configuration, 3rd is the result of the configuration  selected by our unsupervised Hyperband.} }
  \label{fig:hyperband}
\end{figure*}

\paragraph{Predicting per-Sample loss with Alg.~\ref{algo:where}} Finally, we consider the per sample loss. %Determining the success of the prediction of a computer vision method for a given sample is often crucial, yet challenging even for supervised methods that predict a single value. 
The results are \sout{reported numerically in Tab.~\ref{tab:mainresults} and} plotted in Fig.~\ref{fig:oneSamp2}. {\color{red}Each point in the plot specifies the evaluation of the bound ($y$-axis) vs. the per-sample error ($x$-axis) of the trained generator $h_1$ for a specific point $x$ in domain $A$. The numbered points in the graph correspond to the samples in (b).} As can be seen, there is a high degree of correlation {\color{red}(the coefficient of determination is large, $R^2=0.9145$)} between the measured bound and the ground truth error. Therefore, our method is \sout{able to reliably predict} {\color{red} a reliable predictor of} the per-sample success of a multivariate mapping learned in a fully unsupervised manner. 

\begin{figure*}[t]
  \centering
  \begin{tabular}{c@{~~~~~~~~~~}c}
\includegraphics[width=0.425\linewidth, clip]
{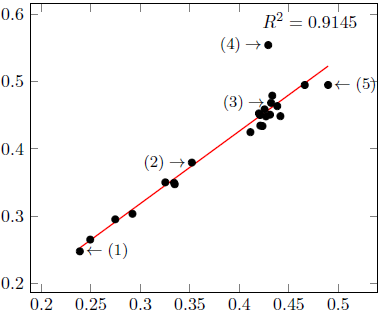}&
\includegraphics[width=0.24\linewidth, clip]{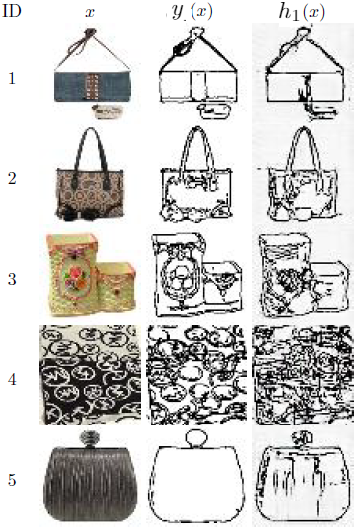}\\
(a) & (b) \\
\end{tabular}
\caption{{\color{red} Results of Alg.~\ref{algo:where} on DiscoGAN for Handbags2Edges. (a) The ground truth errors ($x$-axis) vs. the bound per point ($y$-axis) are shown. %\sout{This is the same as Fig.~\ref{fig:alg3_1} top right plot with added information identifying specific points.}
(b) The source ($x$), ground truth ($y(x)$) and mapping ($h_1(x)$) of the marked points.}}
\label{fig:oneSamp2}
\end{figure*}

\sout{This correlation also seems to hold when considering the time axis, i.e., we can combine Alg.~\ref{algo:when}~and Alg.~\ref{algo:where} and select the stopping epoch that is best for a specific sample. The results are shown in Fig.~\ref{fig:alg3_1}.}

\subsection{Results for Alg.~\ref{algo:duallayers}}

\begin{figure}[t]
    \centering
    \begin{minipage}{.5\textwidth}
        \centering
        \includegraphics[width=0.95\linewidth]{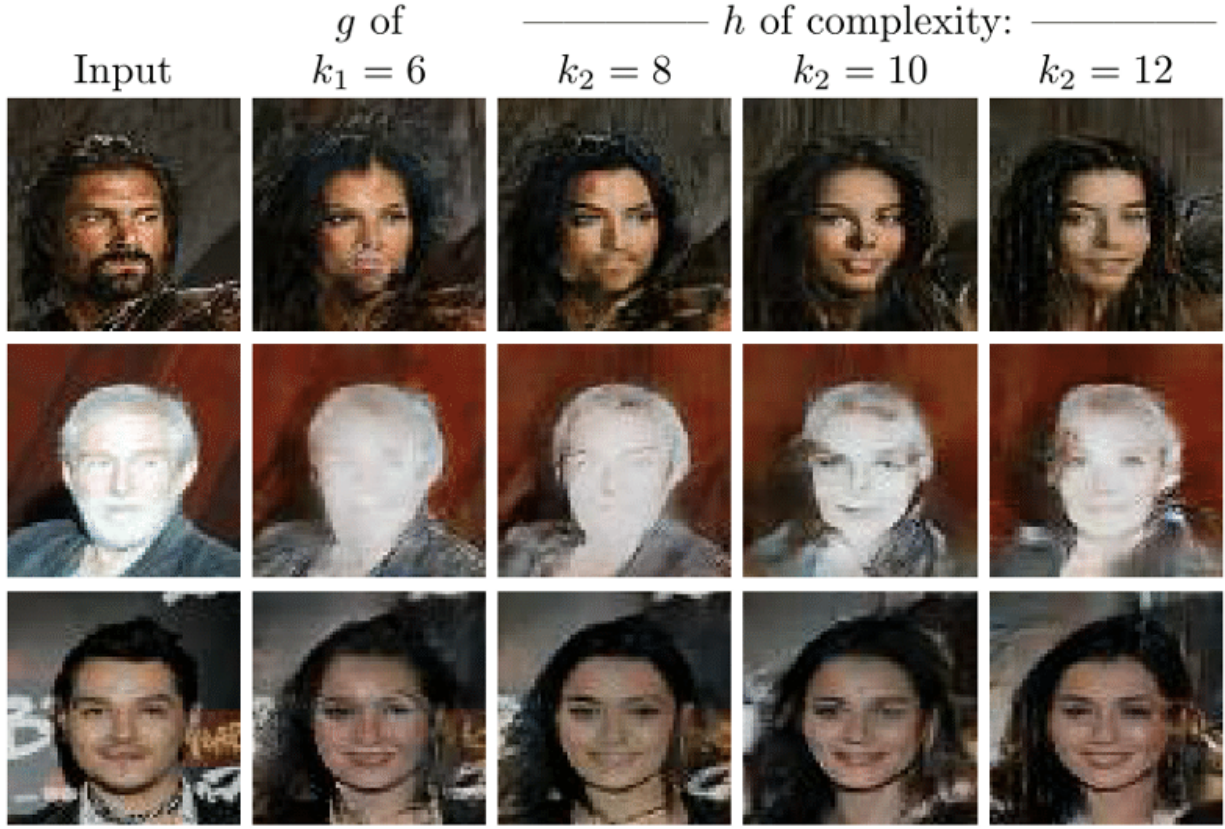}
    \end{minipage}%
    \begin{minipage}{0.5\textwidth}
        \centering
        \includegraphics[width=0.95\linewidth]{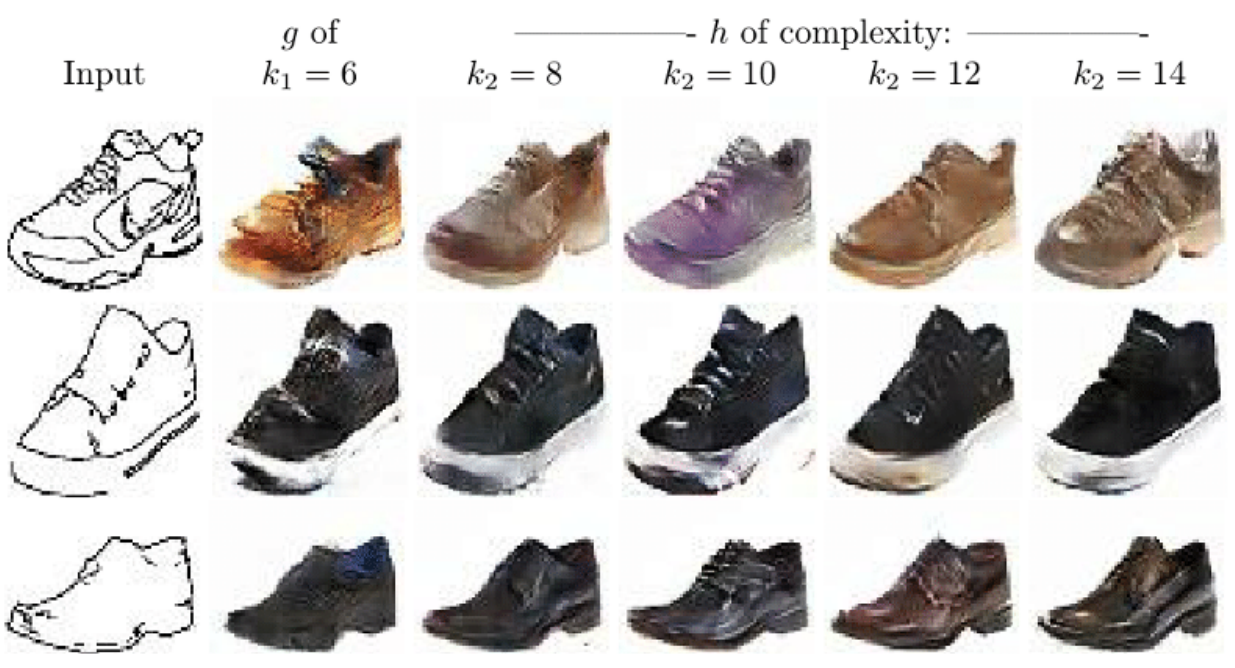}
    \end{minipage}
    \caption{\label{fig:dual} {\color{red} Results for Alg.~\ref{algo:duallayers} on Male2Female and Edges2Shoes datasets for mapping Male to Female (and for mapping Edges to Shoes, resp.). Shown are a minimal complexity mapping $g$ that has low discrepancy, and various mappings $h$ obtained by the method.}}
\end{figure}

The goal of Alg.~\ref{algo:duallayers} is to find a well-aligned solution with higher complexity than the minimal solution and a potentially smaller discrepancy. It has two stages. In the first one, $k_1$, which is the minimal complexity that leads to a low discrepancy, is identified. This follows a set of experiments that are similar to the one that is captured, for example, by Fig.~\ref{pred2}. To demonstrate robustness, we select a single value of $k_1$ across all experiments. Specifically, we use $k_1=6$, which, as discussed above, typically leads to a low (but not very low) discrepancy, while the alignment is still unambiguous.  

Once $g$ is trained, we proceed to the next step of optimizing a second network of complexity $k_2$. Note that while the first function ($g$) uses the complete DiscoGAN architecture, the second network ($h$) only employs a one-directional mapping, since alignment is obtained by $g$.

\sout{Figs.~\ref{fig:dual0}--~\ref{fig:dual5} depict} {\color{red} We conducted the experiments on the CelebA dataset for Male to Female and Edges to Shoes conversions. Fig.~\ref{fig:dual} depicts} the obtained results, for a varying number of layers. First, the result obtained by the DiscoGAN method with $k_1$ is displayed. The results of applying Alg.~\ref{algo:duallayers} are then displayed for a varying $k_2$.

As can be seen, our algorithm leads to more sophisticated mappings. \cite{pmlr-v70-kim17a} have noted that their solutions are, at many times, related to texture or style transfer and, for example, geometric transformations are not well-captured. The new method is able to better capture such complex transformations. Consider the case of mapping Male to Female in Fig.~\ref{fig:dual}, first row. A man with a beard is mapped to a female image. While for $g$ the beard is still somewhat present, it is not so for $h$ with $k_2>k_1$. \sout{On the female to male mappings in Fig.~\ref{fig:dual1}, it is evident in most mappings that $g$ produces a more blurred image, while $h$ is more coherent for $k_2>k_1$. 
Another example is in the blond to black hair mapping in Fig.~\ref{fig:dual1}. In the 5th row, the style transfer nature of $g$ is evident, since it maps a red object behind the head together with the whole blond hair, producing an unrealistic black hair. The $h$ of complexity $k_2=8$ is able to separate that object from the hair, and in $k_2>8$ it produces realistic looking black hair. This kind of transformation requires more than a simple style transfer.} For the Edges to Shoes \sout{and edges to handbags mappings of Fig.~\ref{fig:dual4} and Fig.~\ref{fig:dual5}} {\color{red} mapping of Fig.~\ref{fig:dual}}, while the general structure is clearly present, it is significantly sharpened by mapping $h$ with $k_2>k_1$. 

\sout{For the face datasets, we also employ face descriptors in order to learn whether the mapping is semantic. We can check if the identity is preserved post mapping, by comparing the VGG face descriptors of\mbox{~\cite{Parkhi15}}. One can assume that two matching images that match will have many similar features and so the VGG representation will be similar. The cosine similarities are used, as is commonly done.} 

\sout{In addition, we train a linear classifier in the space of the VGG face descriptors, in order to distinguish between Male/Female, Eyeglasses/No-eyeglasses, and Blond/Black. This allows us to check, beyond discrepancy, that the mapping indeed transforms between the domains. The training samples in domains $A$ and $B$ are used to train this classifier, which is then applied to a set of test images before and after mapping, measuring the accuracy. The higher the accuracy, the better the separation.}

\sout{Tab.~\ref{tab:alg1exp1} presents the results for both the $k_1$ layers network $g$, alternative networks $g$ of higher complexity (shown as baseline only), and the network $h$ trained using Alg.~\ref{algo:duallayers}. 
We expect the alignment of $g$ to be best at complexity $k_1$, and worse due to the loss of discrepancy for alternative network $g$ with complexity $k>k_1$. We expect this loss of alignment to be resolved for networks $h$ trained with Alg.~\ref{algo:duallayers}.}

\sout{In the experiments of black to blond hair and blond to black hair mappings, we note that $h$ with $k_2=8$ has the best descriptor similarity, and very good separation accuracy and discrepancy. Higher values of $k_2$ are best in terms of separation accuracy and discrepancy, but lose somewhat in descriptor similarity. A similar situation occurs for male to female and female to male mappings and in eyeglasses to non-eyeglasses, where $k_2=8$ results in the best similarity score and higher values of $k_2$ result in better separation accuracy and discrepancy. }

\sout{It is interesting to note, that the distance between $g$ and $h$ is also minimal for $k_2=8$. Perhaps, with more effective optimization, higher complexities could also maintain similarity, while delivering lower discrepancies.}

\section{Conclusions}

The recent success in mapping between two domains in an  unsupervised way and without any existing knowledge, other than network hyperparameters, is nothing less than extraordinary and has far reaching consequences. As far as we know, nothing in the existing machine learning or cognitive science literature suggests that this would be possible. 

We provide an intuitive definition of function complexity and employ it in order to identify minimal complexity mappings, which we show to play a pivotal role in this success. If our hypothesis is correct, simply by training networks that are not too complex, the target mapping stands out from all other alternative mappings. 

Our analysis leads directly to a new unsupervised cross-domain mapping algorithm that is able to avoid the ambiguity of such mapping, yet enjoy the expressiveness of deep neural networks. These methods are based on training a pair of mappings that are close to each other.%The experiments demonstrate that the analogies become richer in details and more complex, while maintaining the alignment.

By training pairs of networks that are distant from each other, we are able to obtain a confidence measure on the mapping's outcome. The confidence estimation holds both in expectation, with application to hyperparameter selection, and per sample, thus supporting dynamic confidence-based run time behavior, and (future work) unsupervised boosting during training.

In Sec.~\ref{sec:dualProof}, we derived a novel generalization bound for the unsupervised learning of mappings between domains. While one can argue that the term ``generalization bound'' does not apply, since there is no training error to be generalized, we believe that the term is appropriate: first, the various terms on the right hand side can be matched to the conventional terms of measured error and a capacity term. Second, during training, we learn a function, even if in an unsupervised manner, and we are interested in estimating its distance from the target function on unseen (``test'') samples.

The new bound does not address the estimation error that arises from employing finite sets of samples. Terms to capture this error can be readily inserted by employing conventional techniques, such as Rademacher Complexity, PAC-Bayes, VC-Theory, etc'. Needless to say, those techniques cannot be applied here without our results, due to the inability to estimate the training error.

%We extend the envelope of what is known to be possible in unsupervised learning by showing that we can reliably predict the error of a cross-domain mapping that was trained without matching samples. {\color{red} This is true both in expectation, with application to hyperparameter selection, and per sample, thus supporting dynamic confidence-based run time behavior, and (future work) unsupervised boosting during training.}

%The method is based on measuring the maximal distance within the set of low discrepancy mappings. This measure becomes the bound by applying what we define as the Occam's razor property, which is a general form of the Simplicity Principle. Therefore, the clear empirical success observed in our experiments supports the recent hypothesis that simplicity plays a key role in unsupervised learning.

\section*{Acknowledgements}

This project has received funding from the European Research Council (ERC) under the European Union's Horizon 2020 research and innovation programme (grant ERC CoG 725974).

\bibliography{gans}

\begin{thebibliography}{33}
\providecommand{\natexlab}[1]{#1}
\providecommand{\url}[1]{\texttt{#1}}
\expandafter\ifx\csname urlstyle\endcsname\relax
  \providecommand{\doi}[1]{doi: #1}\else
  \providecommand{\doi}{doi: \begingroup \urlstyle{rm}\Url}\fi

\bibitem[Arjovsky et~al.(2017)Arjovsky, Chintala, and
  Bottou]{DBLP:conf/icml/ArjovskyCB17}
Mart{\'{\i}}n Arjovsky, Soumith Chintala, and L{\'{e}}on Bottou.
\newblock Wasserstein generative adversarial networks.
\newblock In \emph{Proceedings of the 34th International Conference on Machine
  Learning, {ICML} 2017}, pages 214--223, 2017.

\bibitem[Arora et~al.(2017)Arora, Ge, Liang, Ma, and Zhang]{pmlr-v70-arora17a}
Sanjeev Arora, Rong Ge, Yingyu Liang, Tengyu Ma, and Yi~Zhang.
\newblock Generalization and equilibrium in generative adversarial nets
  ({GAN}s).
\newblock In Doina Precup and Yee~Whye Teh, editors, \emph{Proceedings of the
  34th International Conference on Machine Learning}, volume~70 of
  \emph{Proceedings of Machine Learning Research}, pages 224--232,
  International Convention Centre, Sydney, Australia, 06--11 Aug 2017. PMLR.

\bibitem[Bartlett et~al.(2017)Bartlett, Foster, and
  Telgarsky]{Bartlett:2017:SMB:3295222.3295372}
Peter~L. Bartlett, Dylan~J. Foster, and Matus Telgarsky.
\newblock Spectrally-normalized margin bounds for neural networks.
\newblock In \emph{Proceedings of the 31st International Conference on Neural
  Information Processing Systems}, NIPS'17, pages 6241--6250, USA, 2017. Curran
  Associates Inc.
\newblock ISBN 978-1-5108-6096-4.

\bibitem[Benaim and Wolf(2017)]{distgan}
Sagie Benaim and Lior Wolf.
\newblock One-sided unsupervised domain mapping.
\newblock In \emph{Advances in Neural Information Processing Systems}, pages
  752--762, 2017.

\bibitem[Benaim et~al.(2018)Benaim, Galanti, and Wolf]{benaim2017maximally}
Sagie Benaim, Tomer Galanti, and Lior Wolf.
\newblock Estimating the success of unsupervised image to image translation.
\newblock In \emph{ECCV}, 2018.

\bibitem[Berthelot et~al.(2017)Berthelot, Schumm, and
  Metz]{Berthelot2017BEGANBE}
David Berthelot, Tom Schumm, and Luke Metz.
\newblock Began: Boundary equilibrium generative adversarial networks.
\newblock \emph{CoRR}, 2017.

\bibitem[Bojanowski et~al.(2018)Bojanowski, Joulin, Lopez-Paz, and
  Szlam]{bojanowski2017optimizing}
Piotr Bojanowski, Armand Joulin, David Lopez-Paz, and Arthur Szlam.
\newblock Optimizing the latent space of generative networks.
\newblock In \emph{ICML}, 2018.

\bibitem[Cordts et~al.(2016)Cordts, Omran, Ramos, Rehfeld, Enzweiler, Benenson,
  Franke, Roth, and Schiele]{Cordts2016Cityscapes}
Marius Cordts, Mohamed Omran, Sebastian Ramos, Timo Rehfeld, Markus Enzweiler,
  Rodrigo Benenson, Uwe Franke, Stefan Roth, and Bernt Schiele.
\newblock The cityscapes dataset for semantic urban scene understanding.
\newblock In \emph{CVPR}, 2016.

\bibitem[Galanti et~al.(2018)Galanti, Wolf, and Benaim]{galanti2018the}
Tomer Galanti, Lior Wolf, and Sagie Benaim.
\newblock The role of minimal complexity functions in unsupervised learning of
  semantic mappings.
\newblock \emph{International Conference on Learning Representations}, 2018.

\bibitem[Golowich et~al.(2018)Golowich, Rakhlin, and
  Shamir]{pmlr-v75-golowich18a}
Noah Golowich, Alexander Rakhlin, and Ohad Shamir.
\newblock Size-independent sample complexity of neural networks.
\newblock In S\'ebastien Bubeck, Vianney Perchet, and Philippe Rigollet,
  editors, \emph{Proceedings of the 31st Conference On Learning Theory},
  volume~75 of \emph{Proceedings of Machine Learning Research}, pages 297--299.
  PMLR, 06--09 Jul 2018.

\bibitem[Goodfellow et~al.(2014{\natexlab{a}})Goodfellow, Pouget-Abadie, Mirza,
  Xu, Warde-Farley, Ozair, Courville, and Bengio]{NIPS2014_5423}
Ian Goodfellow, Jean Pouget-Abadie, Mehdi Mirza, Bing Xu, David Warde-Farley,
  Sherjil Ozair, Aaron Courville, and Yoshua Bengio.
\newblock Generative adversarial nets.
\newblock In Z.~Ghahramani, M.~Welling, C.~Cortes, N.~D. Lawrence, and K.~Q.
  Weinberger, editors, \emph{Advances in Neural Information Processing Systems
  27}, pages 2672--2680. Curran Associates, Inc., 2014{\natexlab{a}}.

\bibitem[Goodfellow et~al.(2014{\natexlab{b}})Goodfellow, Pouget{-}Abadie,
  Mirza, Xu, Warde{-}Farley, Ozair, Courville, and
  Bengio]{DBLP:conf/nips/GoodfellowPMXWOCB14}
Ian~J. Goodfellow, Jean Pouget{-}Abadie, Mehdi Mirza, Bing Xu, David
  Warde{-}Farley, Sherjil Ozair, Aaron~C. Courville, and Yoshua Bengio.
\newblock Generative adversarial nets.
\newblock In \emph{NIPS}, 2014{\natexlab{b}}.

\bibitem[He et~al.(2016)He, Xia, Qin, Wang, Yu, Liu, and Ma]{xia2016dual}
Di~He, Yingce Xia, Tao Qin, Liwei Wang, Nenghai Yu, Tie-Yan Liu, and Wei-Ying
  Ma.
\newblock Dual learning for machine translation.
\newblock In D.~D. Lee, M.~Sugiyama, U.~V. Luxburg, I.~Guyon, and R.~Garnett,
  editors, \emph{Advances in Neural Information Processing Systems 29}, pages
  820--828. Curran Associates, Inc., 2016.

\bibitem[Isola et~al.(2017)Isola, Zhu, Zhou, and Efros]{pix2pix}
Phillip Isola, Jun-Yan Zhu, Tinghui Zhou, and Alexei~A Efros.
\newblock Image-to-image translation with conditional adversarial networks.
\newblock In \emph{CVPR}, 2017.

\bibitem[Kim et~al.(2017)Kim, Cha, Kim, Lee, and Kim]{pmlr-v70-kim17a}
Taeksoo Kim, Moonsu Cha, Hyunsoo Kim, Jung~Kwon Lee, and Jiwon Kim.
\newblock Learning to discover cross-domain relations with generative
  adversarial networks.
\newblock In Doina Precup and Yee~Whye Teh, editors, \emph{Proceedings of the
  34th International Conference on Machine Learning}, volume~70 of
  \emph{Proceedings of Machine Learning Research}, pages 1857--1865,
  International Convention Centre, Sydney, Australia, 06--11 Aug 2017. PMLR.

\bibitem[Koltchinskii and Panchenko(2000)]{koltch}
Vladimir Koltchinskii and Dmitriy Panchenko.
\newblock Rademacher processes and bounding the risk of function learning.
\newblock In Evarist Gin{\'e}, David~M. Mason, and Jon~A. Wellner, editors,
  \emph{High Dimensional Probability II}, pages 443--457, Boston, MA, 2000.
  Birkh{\"a}user Boston.

\bibitem[Li et~al.(2017)Li, Chang, Cheng, Yang, and Poczos]{NIPS2017_6815}
Chun-Liang Li, Wei-Cheng Chang, Yu~Cheng, Yiming Yang, and Barnabas Poczos.
\newblock Mmd gan: Towards deeper understanding of moment matching network.
\newblock In I.~Guyon, U.~V. Luxburg, S.~Bengio, H.~Wallach, R.~Fergus,
  S.~Vishwanathan, and R.~Garnett, editors, \emph{Advances in Neural
  Information Processing Systems 30}, pages 2203--2213. Curran Associates,
  Inc., 2017.

\bibitem[Li et~al.(2018)Li, Jamieson, DeSalvo, Rostamizadeh, and
  Talwalkar]{hyperband}
L~Li, K~Jamieson, Giulia DeSalvo, A~Rostamizadeh, and A~Talwalkar.
\newblock Hyperband: A novel bandit-based approach to hyperparameter
  optimization.
\newblock \emph{Journal of Machine Learning Research}, 18:\penalty0 1--52, 04
  2018.

\bibitem[Li et~al.(2015)Li, Swersky, and Zemel]{Li:2015:GMM:3045118.3045301}
Yujia Li, Kevin Swersky, and Richard Zemel.
\newblock Generative moment matching networks.
\newblock In \emph{Proceedings of the 32Nd International Conference on
  International Conference on Machine Learning - Volume 37}, ICML'15, pages
  1718--1727. JMLR, 2015.

\bibitem[Liu et~al.(2017)Liu, Breuel, and Kautz]{liu2017unsupervised}
Ming-Yu Liu, Thomas Breuel, and Jan Kautz.
\newblock Unsupervised image-to-image translation networks.
\newblock In \emph{Advances in Neural Information Processing Systems}, pages
  700--708, 2017.

\bibitem[Lucic et~al.(2018)Lucic, Kurach, Michalski, Bousquet, and
  Gelly]{10.5555/3326943.3327008}
Mario Lucic, Karol Kurach, Marcin Michalski, Olivier Bousquet, and Sylvain
  Gelly.
\newblock Are gans created equal? a large-scale study.
\newblock In \emph{Proceedings of the 32nd International Conference on Neural
  Information Processing Systems}, NIPS’18, page 698–707, Red Hook, NY,
  USA, 2018. Curran Associates Inc.

\bibitem[Mohri et~al.(2012)Mohri, Rostamizadeh, and
  Talwalkar]{Mohri:2012:FML:2371238}
Mehryar Mohri, Afshin Rostamizadeh, and Ameet Talwalkar.
\newblock \emph{Foundations of Machine Learning}.
\newblock The MIT Press, 2012.
\newblock ISBN 026201825X, 9780262018258.

\bibitem[Mroueh and Sercu(2017)]{DBLP:conf/nips/MrouehS17}
Youssef Mroueh and Tom Sercu.
\newblock Fisher {GAN}.
\newblock In \emph{Advances in Neural Information Processing Systems 30: Annual
  Conference on Neural Information Processing Systems 2017, 4-9 December 2017,
  Long Beach, CA, {USA}}, pages 2510--2520, 2017.

\bibitem[Mroueh et~al.(2018)Mroueh, Li, Sercu, Raj, and
  Cheng]{mroueh2018sobolev}
Youssef Mroueh, Chun-Liang Li, Tom Sercu, Anant Raj, and Yu~Cheng.
\newblock Sobolev {GAN}.
\newblock In \emph{International Conference on Learning Representations}, 2018.

\bibitem[Müller(1997)]{muller}
Alfred Müller.
\newblock Integral probability metrics and their generating classes of
  functions advances in applied probability.
\newblock In \emph{Advances in Applied Probability}, pages 429--–443, 1997.

\bibitem[Radim~Tyle{\v c}ek(2013)]{}
Radim~{\v S}{\' a}ra Radim~Tyle{\v c}ek.
\newblock Spatial pattern templates for recognition of objects with regular
  structure.
\newblock In \emph{Proc. GCPR}, 2013.

\bibitem[Varadhan(2002)]{limits}
Sathamangalam Ranga Iyengar~Srinivasa Varadhan.
\newblock Lecture notes on limit theorems, 2002.
\newblock URL \url{https://math.nyu.edu/~varadhan/limittheorems.html}.

\bibitem[Xie and Tu(2015)]{hed}
Saining Xie and Zhuowen Tu.
\newblock Holistically-nested edge detection.
\newblock In \emph{ICCV}, 2015.

\bibitem[Yi et~al.(2017)Yi, Zhang, Tan, and Gong]{dualgan}
Zili Yi, Hao Zhang, Ping Tan, and Minglun Gong.
\newblock Dualgan: Unsupervised dual learning for image-to-image translation.
\newblock \emph{2017 IEEE International Conference on Computer Vision (ICCV)},
  pages 2868--2876, 2017.

\bibitem[Yu and Grauman(2014)]{shoes}
A.~Yu and K.~Grauman.
\newblock Fine-grained visual comparisons with local learning.
\newblock In \emph{CVPR}, 2014.

\bibitem[Zhao et~al.(2017)Zhao, Mathieu, and
  LeCun]{DBLP:journals/corr/ZhaoML16}
Junbo~Jake Zhao, Micha{\"{e}}l Mathieu, and Yann LeCun.
\newblock Energy-based generative adversarial network.
\newblock In \emph{International Conference on Learning Representations
  (ICLR)}, 2017.

\bibitem[Zhu et~al.(2016)Zhu, Kr{\"a}henb{\"u}hl, Shechtman, and
  Efros]{handbag}
Jun-Yan Zhu, Philipp Kr{\"a}henb{\"u}hl, Eli Shechtman, and Alexei~A. Efros.
\newblock Generative visual manipulation on the natural image manifold.
\newblock In \emph{{ECCV}}, 2016.

\bibitem[Zhu et~al.(2017)Zhu, Park, Isola, and Efros]{CycleGAN2017}
Jun-Yan Zhu, Taesung Park, Phillip Isola, and Alexei~A. Efros.
\newblock Unpaired image-to-image translation using cycle-consistent
  adversarial networks.
\newblock \emph{2017 IEEE International Conference on Computer Vision (ICCV)},
  pages 2242--2251, 2017.

\end{thebibliography}
\clearpage
\appendix

{\color{red}

\section{A Generalization Bound for Unsupervised Cross-Domain Mapping with IPMs}

In Thm.~\ref{thm:corBoundGEN}, we introduced a generalization bound for unsupervised cross-domain mapping with WGANs. We recall the definition of the $\mathcal{C}$-IPM, 
\begin{equation}
\rho_{\mathcal{C}}(D_1,D_2) := \sup\limits_{d \in \mathcal{C}} \Big\{\mathbb{E}_{x\sim D_1}[d(x)] - \mathbb{E}_{x\sim D_2}[d(x)] \Big\}
\end{equation}
The following theorem extends Thm.~\ref{thm:corBoundGEN} for unsupervised cross-domain mapping with general IPMs. 
% \begin{restatable}[Cross-Domain Mapping with IPMs]{thrm}{boundGEN}\label{thm:boundGEN} Assume the settings of Sec.~\ref{sec:problemformulation}. Assume that $\mathcal{X}_A \subset \mathbb{R}^N$ and $\mathcal{X}_B \subset \mathbb{R}^M$ are convex and bounded sets. Assume that $\mathcal{C} \subset C^2$. Let $\omega \in \Omega$, $h \in \mathcal{P}_{\omega}(D_A,D_B)$, such that, $h:\mathcal{X}_A \to \mathcal{X}_B$ and $d \in \mathcal{C}$, such that, $\beta(d)\leq 1$. Then, for every $h_1 \in \mathcal{P}_{\omega}(D_A,D_B)$, we have:
% \begin{equation}
% \begin{aligned}
%  R_{D_A}[h_1,y] \lesssim & \sup\limits_{h_2 \in \mathcal{P}_{\omega}(D_A,D_B)} R_{D_A}[h_1,h_2] + \rho_{\mathcal{C}}(h \circ D_A,D_B) \\
%  &+ \sqrt{R_{D_B}[h\circ y^{-1}-\Id_{\mathcal{X}_B},\nabla d]}
% \end{aligned}
% \end{equation}
% \end{restatable}
\begin{restatable}[Cross-Domain Mapping with IPMs]{thrm}{boundGEN}\label{thm:boundGEN} Assume the setting of Sec.~\ref{sec:problemformulation}. Assume that $\mathcal{X}_A \subset \mathbb{R}^N$ and $\mathcal{X}_B \subset \mathbb{R}^M$ are convex and bounded sets. Assume that $\mathcal{C} \subset C^2$. Then, for every $\omega \in \Omega$ and $h_1 \in \mathcal{P}_{\omega}(D_A,D_B)$, we have:
\begin{equation}
\begin{aligned}
 R_{D_A}[h_1,y] \lesssim \sup\limits_{h_2 \in \mathcal{P}_{\omega}(D_A,D_B)} [h_1,h_2]+\inf\limits_{h,d} \left\{ \rho_{\mathcal{C}}(h \circ D_A,D_B) + \sqrt{R_{D_B}[h\circ y^{-1}-\Id_{\mathcal{X}_B},\nabla d]} \right\}
\end{aligned}
\end{equation}
where the infimum is taken over $h\in \mathcal{P}_{\omega}(D_A,D_B)$ (such that, $h:\mathcal{X}_A \to \mathcal{X}_B$) and $d \in \mathcal{C}$, such that, $\beta(d)\leq 1$. 
\end{restatable}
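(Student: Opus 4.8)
The plan is to reduce, via the triangle inequality, to bounding $R_{D_A}[h,y]$ for a single well-chosen $h\in\mathcal P_\omega(D_A,D_B)$, and then to linearize this $L_2$ risk into the IPM $\rho_{\mathcal C}$ through a second-order Taylor expansion of a discriminator $d\in\mathcal C$. For the reduction, note that $\sqrt{R_{D_A}[\cdot,\cdot]}$ is the $L^2(D_A)$ seminorm of the difference of two functions, so for every $h\in\mathcal P_\omega(D_A,D_B)$ mapping $\mathcal X_A$ into $\mathcal X_B$ we have $\sqrt{R_{D_A}[h_1,y]}\le\sqrt{R_{D_A}[h_1,h]}+\sqrt{R_{D_A}[h,y]}\le\sqrt{\sup_{h_2\in\mathcal P_\omega(D_A,D_B)}R_{D_A}[h_1,h_2]}+\sqrt{R_{D_A}[h,y]}$ (this is the argument behind Lemma~\ref{lem:cvprBound1}). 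After squaring, using $(a+b)^2\le 2a^2+2b^2$, it suffices to show $R_{D_A}[h,y]\lesssim\rho_{\mathcal C}(h\circ D_A,D_B)+\sqrt{R_{D_B}[h\circ y^{-1}-\Id_{\mathcal X_B},\nabla d]}$ for every such $h$ and every $d\in\mathcal C$ with $\beta(d)\le 1$; since the left-hand side of the theorem does not involve $h$ or $d$, the infimum over the admissible pairs $(h,d)$ is taken only at the very end.

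To bound $R_{D_A}[h,y]$, set $\phi:=h\circ y^{-1}-\Id_{\mathcal X_B}$ and change variables $u=y(x)$; since $y$ is invertible and $y\circ D_A=D_B$, this gives $R_{D_A}[h,y]=\mathbb E_{u\sim D_B}\|\phi(u)\|_2^2$, and, for the linear form appearing below, $\mathbb E_{x\sim D_A}[\nabla d(y(x))\cdot(h(x)-y(x))]=\mathbb E_{u\sim D_B}[\nabla d(u)\cdot\phi(u)]$. Now fix $d\in\mathcal C$ with $\beta(d)\le 1$. Because $\mathcal C\subset C^2$ and $\mathcal X_B$ is convex, the segment joining $y(x)$ and $h(x)$ lies in $\mathcal X_B$, and the Lagrange form of Taylor's theorem gives, for all $x\in\mathcal X_A$, $|d(h(x))-d(y(x))-\nabla d(y(x))\cdot(h(x)-y(x))|\le\tfrac12\beta(d)\|h(x)-y(x)\|_2^2$. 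Taking $\mathbb E_{x\sim D_A}$, observing that $\mathbb E_{x\sim D_A}[d(h(x))]-\mathbb E_{x\sim D_A}[d(y(x))]=\mathbb E_{x\sim h\circ D_A}[d(x)]-\mathbb E_{x\sim D_B}[d(x)]\le\rho_{\mathcal C}(h\circ D_A,D_B)$, and substituting in the linear term, I obtain $\mathbb E_{u\sim D_B}[\nabla d(u)\cdot\phi(u)]\le\rho_{\mathcal C}(h\circ D_A,D_B)+\tfrac12 R_{D_A}[h,y]$.

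The final step uses the decomposition $R_{D_A}[h,y]=\mathbb E_{u\sim D_B}[\phi(u)\cdot\nabla d(u)]+\mathbb E_{u\sim D_B}[\phi(u)\cdot(\phi(u)-\nabla d(u))]$. The first summand is controlled by the previous display; for the second, boundedness of $\mathcal X_B$ gives $\|\phi(u)\|_2\le\mathrm{diam}(\mathcal X_B)$ (both $h(y^{-1}(u))$ and $u$ lie in $\mathcal X_B$), so by Cauchy--Schwarz it is at most $\mathrm{diam}(\mathcal X_B)\sqrt{\mathbb E_{u\sim D_B}\|\phi(u)-\nabla d(u)\|_2^2}=\mathrm{diam}(\mathcal X_B)\sqrt{R_{D_B}[h\circ y^{-1}-\Id_{\mathcal X_B},\nabla d]}$. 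Combining, $R_{D_A}[h,y]\le\rho_{\mathcal C}(h\circ D_A,D_B)+\tfrac12 R_{D_A}[h,y]+\mathrm{diam}(\mathcal X_B)\sqrt{R_{D_B}[h\circ y^{-1}-\Id_{\mathcal X_B},\nabla d]}$; since $R_{D_A}[h,y]\le\mathrm{diam}(\mathcal X_B)^2<\infty$, the term $\tfrac12 R_{D_A}[h,y]$ can be moved to the left, yielding $R_{D_A}[h,y]\le 2\rho_{\mathcal C}(h\circ D_A,D_B)+2\,\mathrm{diam}(\mathcal X_B)\sqrt{R_{D_B}[h\circ y^{-1}-\Id_{\mathcal X_B},\nabla d]}$. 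Plugging this into the reduction of the first paragraph, squaring, and taking the infimum over admissible $(h,d)$ finishes the proof (if no admissible pair exists the right-hand side is $+\infty$ and the claim is trivial).

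The step I expect to be the main obstacle is the Taylor/IPM linkage in the second paragraph: one must verify that the second-order remainder is uniformly bounded by $\tfrac12\beta(d)\|h(x)-y(x)\|_2^2$ along the whole segment (this is exactly where convexity of $\mathcal X_B$ and $\mathcal C\subset C^2$, hence finiteness of $\beta(d)$, are used), and then recognize that averaging $d\circ h-d\circ y$ against $D_A$ is precisely the quantity $\mathbb E_{h\circ D_A}[d]-\mathbb E_{D_B}[d]$ that the IPM bounds from above. The remaining ingredients --- the triangle inequality, the change of variables through $y$, the Cauchy--Schwarz split of $\mathbb E_{u\sim D_B}\|\phi(u)\|_2^2$, and the use of $\mathrm{diam}(\mathcal X_B)$ to convert the leftover quadratic term into the square-root term of the statement --- are routine.
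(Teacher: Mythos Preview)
Your proposal is correct and follows essentially the same route as the paper: the paper proves this theorem via Lemmas~\ref{lem:bound1}, \ref{lem:cvprBound}, \ref{lem:boundGEN} and \ref{lem:boundGENExt}, which amount to exactly the triangle-inequality reduction, the second-order Taylor expansion of $d$ at $y(x)$ (using convexity of $\mathcal X_B$ to control the Hessian remainder by $\tfrac{\beta(d)}{2}\|h(x)-y(x)\|_2^2$), the add--subtract that turns $\mathbb E\|\phi\|_2^2$ into $\mathbb E[\phi\cdot\nabla d]+\mathbb E[\phi\cdot(\phi-\nabla d)]$, and the Cauchy--Schwarz/Jensen step with the uniform bound $\|\phi(u)\|_2\le\mathrm{diam}(\mathcal X_B)$. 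The only cosmetic differences are that the paper keeps the general factor $2/(2-\beta(d))$ in Lemma~\ref{lem:bound1} before specializing to $\beta(d)\le 1$, and its triangle-inequality lemma (Lemma~\ref{lem:cvprBound}) produces the constant $3$ rather than your $2$; since the statement is up to $\lesssim$, these are immaterial.
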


Thm.~\ref{thm:boundGEN} provides an upper bound on the generalization risk, $R_{D_A}[h_1,y]$, of a hypothesis $h_1$ that was selected by an algorithm $\mathcal{A}_{\omega}$, which is the argument that we would like to minimize. 
% First, we note that the upper bound can be rephrased as follows:
% \begin{equation}
% \sup\limits_{h_2 \in \mathcal{P}_{\omega}(D_A,D_B)} [h_1,h_2]+\inf\limits_{h} \left\{ \rho_{\mathcal{C}}(h \circ D_A,D_B) + {\color{blue} \inf\limits_{d}} \sqrt{R_{D_B}[h\circ y^{-1}-\Id_{\mathcal{X}_B},\nabla d]} \right\}
% \end{equation}
% where the infimum is taken over $h\in \mathcal{P}_{\omega}(D_A,D_B)$ (such that, $h:\mathcal{X}_A \to \mathcal{X}_B$) and $d \in \mathcal{C}$, such that, $\beta(d)\leq 1$. 

This bound is decomposed into three terms. The first term is the same as in Thm.~\ref{thm:corBoundGEN}. The second and third terms behave as an approximation error and a capacity term (resp.). The second term measures the distance between the distributions $h \circ D_A$ and $D_B$ for a hypothesis $h \in \mathcal{P}_{\omega}(D_A,D_B)$. This is captured by the $\mathcal{C}$-IPM between $h \circ D_A$ and $D_B$. This term replaces the WGAN divergence in Thm.~\ref{thm:corBoundGEN}. The third term 
\begin{equation}
\inf\limits_{d} \sqrt{R_{D_B}[h \circ y^{-1}-\Id_{\mathcal{X}_B},\nabla d]} = 
\inf\limits_{d} \sqrt{\mathbb{E}_{x\sim D_B}[\|h(y^{-1}(x))-x - \nabla d(x)\|^2_2]},
\end{equation}
% {\color{blue} To understand this theorem, we will informally show that Thm.~\ref{thm:corBoundGEN} is a special case of Thm.~\ref{thm:boundGEN} that follows from selecting $\mathcal{C}$ to be the set of all $L$-Lipschitz functions $d:\mathcal{X}_B \to \mathbb{R}$, where $L:=\max\limits_{a,b\in \mathcal{X}_B}\|a-b\|_2$ (this quantity is finite since $\mathcal{X}_B$ is assumed to be bounded). In the proof of Thm.~\ref{thm:corBoundGEN} this is done rigorously. Therefore, potentially, the bound in Thm.~\ref{thm:boundGEN} can provide tighter bounds for more sophisticated selections of $\mathcal{C}$. }
measures the ability of a gradient function $\nabla d$ to approximate the function $h \circ y^{-1} - \Id_{\mathcal{X}_B}$. The discriminator $d$ is assumed to have a Hessian, whose norm bounded by $1$. In general, since we do not know the target function, $y$, we cannot estimate this term. However, this term can be treated as a capacity term and (for any $h$) decreases as the class of discriminators $\mathcal{C}$ is expanded. For instance, let $h\in \mathcal{P}_{\omega}(D_A,D_B)$ be a fixed hypothesis, the function $h\circ y^{-1}-\Id_{\mathcal{X}_B}$ is differentiable and by~\cite{Hornik1991ApproximationCO}, if $\mathcal{C}$ is a class of neural networks (with bounded and non-constant activation functions), by increasing their complexity (i.e., number of neurons), the term $\inf\limits_{d}\sqrt{R_{D_B}[h\circ y^{-1}-\Id_{\mathcal{X}_B},\nabla d]}$ decreases to $0$. On the other hand, if $\mathcal{C}$ is expanded, then, the second term, $\rho_{\mathcal{C}}(h\circ D_A,D_B)$, increases for all $h$. Therefore, we have a tradeoff between the second and third terms that stems from the complexity of the class $\mathcal{C}$. 

At the limit of the process, by extending $\mathcal{C}$ to contain all $L$-Lipschitz functions (for a large enough constant $L>0$, depending only on the diameter of $\mathcal{X}_B$), the $\mathcal{C}$-IPM becomes the WGAN divergence up to a multiplication by $L$ and the third term is $0$. This gives us the bound in Thm.~\ref{thm:corBoundGEN}. For a more formal argument, see the proof of Thm.~\ref{thm:corBoundGEN}. 

As a final note, since the term $\inf\limits_{d}\sqrt{R_{D_B}[h\circ y^{-1}-\Id_{\mathcal{X}_B},\nabla d]}$ links between $\mathcal{C}$ and $y$, having some prior knowledge on the structure of the target function $y$, might help in selecting $\mathcal{C}$ in a more sophisticated manner. }

%the term $\sqrt{R_{D_B}[h\circ y^{-1}-\Id_{\mathcal{X}_B},\nabla d]}$ if without knowing the function, $y^{-1}$, we know that it can be approximated by a neural network in a class $\mathcal{F}$, then, $h \circ y^{-1}$ is a neural network in the class $\mathcal{F} \circ \mathcal{H}$. Therefore, if we know the architecture $\mathcal{F}$ (without knowing the weights of $y^{-1}$), then, we can select the class of discriminators to be $\mathcal{C} = \{g \circ f -\Id \;\vert\; g\in \mathcal{F}, f \in \mathcal{H}\}$. Therefore, we have $\nabla d = h \circ y^{-1} - \Id$. }

\section{A Generalization Bound for the \hsout{Non-Deterministic} {\color{red} Non-Unique} Case}
\label{sec:dualProofExt}

In this section, we extend the discussion in Sec.~\ref{sec:dualProof} to the \sout{non-deterministic} {\color{red} non-unique} case. Our notion of \sout{non-determinism} {\color{red}non-uniqueness} assumes that there are multiple target functions from $A$ to $B$, i.e., \sout{$|\mathcal{T}|>1$} {\color{red}there is a set $\mathcal{T}$ of alternative target functions $y$. For instance, the domain $\mathcal{X}_A$ is a set of images of shoe edges and $\mathcal{X}_B$ is a set of images of shoes. There are multiple mappings that take edges of a shoe and return a shoe that fits these edges (each mapping colors the shoes in a different way). } 

{\color{red} 
As before, the cross-domain mapping algorithm $\mathcal{A}$ is provided with access to the distributions $D_A$ and $D_B$. However, in this case, the goal of the algorithm is to return a hypothesis $h\in \mathcal{H}$ that is close to one of the target functions $y \in \mathcal{T}$, i.e., 
\begin{equation}
h \in \arginf\limits_{f \in \mathcal{H}} \left\{ \inf\limits_{y \in \mathcal{T}} R_{D_A}[f,y] \right\}
\end{equation}
We will have the same goal as before: providing an upper bound on the generalization risk of a hypothesis selected by the cross-domain mapping algorithm. In this case, instead of upper bounding the term $R_{D_A}[h,y]$, we are interested in providing an upper bound for $\inf\limits_{y \in \mathcal{T}} R_{D_A}[h,y]$.

For this purpose, we extend the assumptions we had on the target function $y$ in the unique case and extend them for all $y \in \mathcal{T}$. Additionally, we denote by $\mathcal{Q} := \left\{h:\mathcal{X}_A \to \mathcal{X}_B \;\vert\; \inf_{y\in \mathcal{T}}\|h \circ y^{-1} - \Id_{\mathcal{X}_B}\|_{\Lip} \leq 1 \right\}$ the set of functions that are weakly correlated with some function $y \in \mathcal{T}$.
}

\subsection{Generalization Bounds}\label{sec:AppendixgenB}

\sout{The following theorem extends Thm.~\ref{thm:boundGEN} to the non-deterministic case.}

\begin{restatable}[\sout{Cross-Domain Mapping with IPMs - The Non-Deterministic  Case}]{thrm}{boundGENExt}\label{thm:boundGENExt} \sout{Assume the settings of Sec.~\ref{sec:problemformulation} and Sec.~\ref{sec:dualProof}. Let $\omega \in \Omega$, $h \in \mathcal{P}_{\omega}(D_A,D_B)$ such that $h:\mathcal{X}_A \to \mathcal{X}_B$ and $d \in \mathcal{C}$ such that $\beta(d)\leq 1$. Then, for every $h_1 \in \mathcal{P}_{\omega}(D_A,D_B)$, we have:}
\begin{equation}
\begin{aligned}
\hcancel{\inf\limits_{y \in \mathcal{T}} R_{D_A}[h_1,y] \lesssim} & \hcancel{\sup\limits_{h_2 \in \mathcal{P}_{\omega}(D_A,D_B)} R_{D_A}[h_1,h_2] + \rho_{\mathcal{C}}(h \circ D_A,D_B)} \\
&\hcancel{+\inf\limits_{y \in \mathcal{T}} \sqrt{R_{D_B}[h\circ y^{-1}-\Id_{\mathcal{X}_B},\nabla d]}}
\end{aligned}
\end{equation}
\end{restatable}
\sout{The difference between Thm.~\ref{thm:boundGEN} and Thm.~\ref{thm:boundGENExt} is intuitive: in the latter, the theorem bounds the term, $\inf\limits_{y \in \mathcal{T}} R_{D_A}[h_1,y]$, instead of the term $R_{D_A}[h_1,y]$ in the deterministic case. In addition, the capacity term in the deterministic case is replaced with its infimum with respect to $y\in \mathcal{T}$.} 

The following corollary extends Thm.~\ref{thm:corBoundGEN}. It introduces a generalization bound for unsupervised cross-domain mapping with WGANs for the \sout{non-deterministic} {\color{red} non-unique} case.

\begin{restatable}[Cross-Domain Mapping with WGANs - The \sout{Non-Deterministic} {\color{red} Non-Unique} Case]{cor}{corBoundGENExt}\label{cor:corBoundGENExt} Assume the setting of Sec.~\ref{sec:problemformulation} \sout{and Sec.~\ref{sec:dualProof}}. {\color{red} Assume that $\mathcal{X}_A \subset \mathbb{R}^N$ and $\mathcal{X}_B \subset \mathbb{R}^M$ are convex and bounded sets.} In addition, assume that $\mathcal{T} \subset C^1_{\textnormal{diff}}$ and $\mathcal{H} \subset C^1$ \sout{and that for every $\omega \in \Omega$, we have: $\mathcal{P}_{\omega}(D_A,D_B) \cap \mathcal{Q} \neq \emptyset$}. Then, for every $\omega \in \Omega$ and $h_1 \in \mathcal{P}_{\omega}(D_A,D_B)$, we have:
\begin{equation}\label{eq:genBoundMain2}
\begin{aligned}
\inf\limits_{y \in \mathcal{T}} R_{D_A}[h_1,y] \lesssim & \sup\limits_{h_2 \in \mathcal{P}_{\omega}(D_A,D_B)} R_{D_A}[h_1,h_2] + \inf\limits_{h \in \mathcal{P}_{\omega}(D_A,D_B) \cap \mathcal{Q}} \W(h \circ D_A,D_B)
\end{aligned}
\end{equation}
\end{restatable}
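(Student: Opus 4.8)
\emph{Proof proposal.} The plan is to derive the corollary from Thm.~\ref{thm:corBoundGEN}, which was established for a single target function, by applying it to a well-chosen member of $\mathcal{T}$ and then passing to an infimum. Fix $\omega\in\Omega$ and $h_1\in\mathcal{P}_{\omega}(D_A,D_B)$. For each $y\in\mathcal{T}$, let $\mathcal{Q}_y := \{h:\mathcal{X}_A\to\mathcal{X}_B \mid \|h\circ y^{-1}-\Id_{\mathcal{X}_B}\|_{\Lip}\leq 1\}$ be the instance of the set $\mathcal{Q}$ from Thm.~\ref{thm:corBoundGEN} that corresponds to the target function $y$; then the non-unique set $\mathcal{Q}$ satisfies $\bigcup_{y\in\mathcal{T}}\mathcal{Q}_y\subseteq\mathcal{Q}$, with equality as soon as the infimum over $\mathcal{T}$ in its definition is attained at each of its members. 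Since, under the non-unique setup, every $y\in\mathcal{T}$ obeys $y\in C^1_{\textnormal{diff}}$ and $y\circ D_A=D_B$, the setting of Sec.~\ref{sec:problemformulation} and all hypotheses of Thm.~\ref{thm:corBoundGEN} are met with any such $y$ in the role of the target.

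First I would handle the case where this infimum is attained. Let $h\in\mathcal{P}_{\omega}(D_A,D_B)\cap\mathcal{Q}$ be arbitrary and choose $y_h\in\mathcal{T}$ realizing $\inf_{y\in\mathcal{T}}\|h\circ y^{-1}-\Id_{\mathcal{X}_B}\|_{\Lip}\leq 1$, so that $h\in\mathcal{Q}_{y_h}$. Applying Thm.~\ref{thm:corBoundGEN} with target $y_h$, and then upper-bounding the infimum on its right-hand side by the single summand arising from $h\in\mathcal{P}_{\omega}(D_A,D_B)\cap\mathcal{Q}_{y_h}$, gives
\begin{equation}
R_{D_A}[h_1,y_h]\lesssim \sup\limits_{h_2\in\mathcal{P}_{\omega}(D_A,D_B)}R_{D_A}[h_1,h_2]+\W(h\circ D_A,D_B).
\end{equation}
Because $y_h\in\mathcal{T}$, the left-hand side is at least $\inf_{y\in\mathcal{T}}R_{D_A}[h_1,y]$; since $h\in\mathcal{P}_{\omega}(D_A,D_B)\cap\mathcal{Q}$ was arbitrary, taking the infimum over all such $h$ on the right-hand side produces Eq.~\ref{eq:genBoundMain2}.

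The delicate part is the passage from a fixed target to the infimum over $\mathcal{T}$, and it has two facets. First, if the infimum $\inf_{y\in\mathcal{T}}\|h\circ y^{-1}-\Id_{\mathcal{X}_B}\|_{\Lip}$ is not attained, I would instead select, for each small $\eta>0$, some $y_h^{\eta}\in\mathcal{T}$ with $\|h\circ (y_h^{\eta})^{-1}-\Id_{\mathcal{X}_B}\|_{\Lip}\leq 1+\eta$, verify that the proof behind Thm.~\ref{thm:corBoundGEN} still runs with the Lipschitz bound $1$ relaxed to $1+\eta$, incurring only a multiplicative constant that stays bounded as $\eta\downarrow 0$, and then send $\eta\to 0$. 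Second, the implied constant in ``$\lesssim$'' must be uniform over the targets in $\mathcal{T}$ that are actually invoked; here I would rely on the fact that it depends only on the diameter of $\mathcal{X}_B$, the Lipschitz bound ($1$ or $1+\eta$), the threshold $\epsilon_0$, and a uniform upper bound on $\|y^{-1}\|_{\Lip}$ over $y\in\mathcal{T}$ — the last being part of, or an immediate consequence of, the standing assumptions on $\mathcal{T}$. The remainder is a verbatim reuse of Thm.~\ref{thm:corBoundGEN}.
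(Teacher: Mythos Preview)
Your reduction is correct, but it runs in the \emph{opposite} logical direction from the paper. In the paper, Cor.~\ref{cor:corBoundGENExt} is proved first and directly---from Lem.~\ref{lem:boundGENExt} by choosing $h_{\omega}\in\arg\inf_{h\in\mathcal{P}_{\omega}\cap\mathcal{Q}}\W(h\circ D_A,D_B)$, picking a witnessing $y_{\omega}\in\mathcal{T}$, constructing an anti-derivative $d$ with $\nabla d(z)=h_{\omega}(y_{\omega}^{-1}(z))-z$, and checking $\beta(d)\leq 1$ so the third term in the IPM bound vanishes---and Thm.~\ref{thm:corBoundGEN} is then deduced from it as the special case $\mathcal{T}=\{y\}$. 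Your approach reverses this: take Thm.~\ref{thm:corBoundGEN} as established, fix $h\in\mathcal{P}_{\omega}\cap\mathcal{Q}$, pick a witnessing $y_h$, apply the unique-target bound, and optimize over $h$. This is a clean and modular argument; its only structural cost is that, within the paper's own dependency graph, it would be circular, since the paper's proof of Thm.~\ref{thm:corBoundGEN} literally reads ``follows immediately from Cor.~\ref{cor:corBoundGENExt}.'' Your argument becomes self-contained only once Thm.~\ref{thm:corBoundGEN} is given an independent proof (which the same chain of lemmas does supply).

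Two minor remarks. First, your concern about a uniform bound on $\|y^{-1}\|_{\Lip}$ over $\mathcal{T}$ is unnecessary: tracking the constants through Lem.~\ref{lem:bound1} and Lem.~\ref{lem:boundGEN} shows that the implied constant depends only on the diameter of $\mathcal{X}_B$ (via $K$) and on the condition $\beta(d)\leq 1$, not on any Lipschitz data of $y^{-1}$, so uniformity in $y$ is automatic. Second, the paper's own proof also tacitly assumes the infimum over $\mathcal{T}$ in the definition of $\mathcal{Q}$ is attained (it writes ``there is a function $y_{\omega}\in\mathcal{T}$ such that\ldots''), so your $\eta$-relaxation is a genuine refinement rather than just a patch to match the paper.
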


{\color{red} This corollary is an immediate extension of Thm.~\ref{thm:corBoundGEN} to the non-unique case. 

As discussed above, since we are interested in learning one of many target functions $\mathcal{T}$, instead of upper bounding $R_{D_A}[h_1,y]$ for a specific target function $y$, we upper bound $\inf\limits_{y \in \mathcal{T}} R_{D_A}[h_1,y]$. In addition, we do not assume the existence of a candidate $h \in \mathcal{P}_{\omega}(D_A,D_B)$, such that, $\|h \circ y^{-1} - \Id_{\mathcal{X}_B}\|_{\Lip} \leq 1$ for a specific target function $y$. Instead, we assume that there exists $h \in \mathcal{P}_{\omega}(D_A,D_B)$, such that, $\inf\limits_{y \in \mathcal{T}} \|h \circ y^{-1} - \Id_{\mathcal{X}_B}\|_{\Lip} \leq 1$. This change asserts that some candidate $h \in \mathcal{P}_{\omega}(D_A,D_B)$ should be correlated with one of the target functions, instead of a specific target function.}

\sout{The bound is decomposed into two parts: $\sup\limits_{h_2 \in \mathcal{P}_{\omega}(D_A,D_B)}[h_1,h_2]$ and $\inf\limits_{h \in \mathcal{P}_{\omega}(D_A,D_B) \cap \mathcal{Q}} \W(h \circ D_A,D_B)$. }

{\color{red} This bound is not tight for all $\omega\in \Omega$ that satisfies the conditions of the theorem. For instance, } since there are multiple possible target functions $y \in \mathcal{T}$ and {\color{red} the term} $\sup\limits_{h_2 \in \mathcal{P}_{\omega}(D_A,D_B)} R_{D_A}[h_1,h_2]$ is almost the diameter of $\mathcal{P}_{\omega}(D_A,D_B)$, \sout{this term} {\color{red} it} can be large, if $h_1 \approx y_1$ and $h_2 \approx y_2$, where $y_1, y_2 \in \mathcal{T}$, such that, $y_1 \neq y_2$. {\color{red} However, in this case, $\inf\limits_{y \in \mathcal{T}}R_{D_A}[h_1,y] \approx 0$. }

A tighter bound would result, if we are able to {\color{red} select $\omega \in \Omega$ that concentrates the members of $\mathcal{P}_{\omega}(D_A,D_B)$} \sout{focus on} {\color{red} around one target function $y\in \mathcal{T}$} \sout{the relevant $y$ for each $h\in \mathcal{P}_{\omega}(D_A,D_B)$}. \sout{In order to  concentrate the different functions in $\mathcal{P}_{\omega}(D_A,D_B)$ around the same target function, we minimize the bound with respect to $\omega$} To do so, we select $\omega$ that minimizes the bound. In other words, we select $\omega$ that minimizes the diameter of $\mathcal{P}_{\omega}(D_A,D_B)$, such that, the WGAN divergences of the hypotheses in $\mathcal{P}_{\omega}(D_A,D_B)$ are kept small. In Sec.~\ref{sec:estExt}, we extend the algorithmic aspects of estimating and minimizing the bound in the \sout{non-deterministic} {\color{red} non-unique} case.

% Thm.~\ref{thm:corBoundGEN} provides a simplified version of Thm.~\ref{thm:boundGEN}. This statement further assumes that $y$, $y^{-1}$ and every $h \in \mathcal{H}$ are continuously differentiable. Furthermore, it is assumed that there is a function $h \in \mathcal{P}_{\omega}(D_A,D_B)$ such that $\sup\limits_{z \in \mathcal{X}_B}\|\Diff_{h\circ y^{-1} - \Id}(z)\|_2 \leq 1$. In this case, the the capacity term is eliminated and the $\mathcal{C}$-IPM is replaced with the $1$-Wasserstein distance. 

\subsection{Equivalence Classes According to a Fixed Encoder}\label{sec:equiv}

{\color{red} To deal with the problem discussed in the previous section, } we assume that {\color{red}each target function $y \in \mathcal{T}$ is a neural network, such that,} by fixing the first layers of the mapping $y$, the ambiguity {\color{red} between these functions} vanishes, and only one possible solution remains, e.g., in a common case where the learned hypothesis is viewed as an encoder followed by a decoder, that the function is determined by the encoder part. 

{\color{red}To formalize this idea, we take a} \sout{In this case, the} hypothesis class $\mathcal{H} := \{h_{\theta,\omega} = g_{\theta} \circ f_{\omega} \;\vert\; \theta \in \Theta, \omega \in \Omega\}$ \sout{consists} {\color{red} consisting} of hypotheses that are parameterized by two sets of parameters $\theta \in \Theta$ and $\omega \in \Omega$. In addition, we denote $\mathcal{H}_{\omega} := \{h_{\theta,\omega} \;\vert\; \theta \in \Theta\}$. Specifically, $\mathcal{H}$ serves as a set of neural networks of a fixed architecture with $l_1+l_2$ layers. Each hypothesis $h_{\theta,\omega}$ is a neural network of an encoder-decoder architecture. The encoder, $f_{\omega}$, consists of the first $l_1$ layers and the decoder, $g_{\theta}$, consists of the last $l_2$ layers. In addition, $\omega$ and $\theta$ denote the sets of weights of $f_{\omega}$ and $g_{\theta}$ (resp.). 

We take $\mathcal{A}_{\omega}$ that returns a generator from $\mathcal{H}_{\omega}$ that has a small WGAN divergence $\W(h \circ D_A,D_B) \leq \epsilon_0$ (if exists) for some fixed threshold $\epsilon_0 > 0$. In this case, we have: $\mathcal{P}_{\omega}(D_A,D_B) := \{h \in \mathcal{H}_{\omega}\;\vert\; \W(h \circ D_A,D_B) \leq \epsilon_0 \}$.

\subsubsection{Estimating the Ground Truth Error}
\label{sec:estExt}

Thm.~\ref{cor:corBoundGENExt} provides us with an accessible upper bound for the generalization risk of the learned function $h_1$. In general, for every $\omega \in \Omega$, we can approximate $\sup\limits_{h_2 \in \mathcal{P}_{\omega}(D_A,D_B)}[h_1,h_2]$ simply by maximizing $R_{D_A}[h_1,h_2]$ with respect to $h_2 \in \mathcal{P}_{\omega}(D_A,D_B)$. On the other hand, in order to approximate the second term, we can simply minimize $\W(h \circ D_A,D_B)$ for $h \in \mathcal{P}_{\omega}(D_A,D_B)$. This can only give a heuristic approximation for the second term, since we do not suffice that $h \in \mathcal{Q}$. Therefore, instead of computing the bound in Eq.~\ref{eq:genBoundMain2}, we use the following approximate version of it: 
\begin{equation}
\sup\limits_{h_2 \in \mathcal{P}_{\omega}(D_A,D_B)} R_{D_A}[h_1,h_2] + \W(h_1\circ D_A,D_B)
\end{equation}
Thus, for any fixed $\omega \in \Omega$, the first term in the RHS can be directly approximated by training a neural network $h_2 \in \mathcal{H}_{\omega}$ that has a WGAN divergence lower than $\epsilon_0$ and has the maximal risk, with regards to $h_1$, i.e., 
\begin{equation}\label{eq:primal2}
\sup\limits_{h_2 \in \mathcal{H}_{\omega}} R_{D_A}[h_1,h_2] \textnormal{ s.t: } \W(h_2 \circ D_A,D_B) \leq \epsilon_0
\end{equation}
It is computationally infeasible to compute the solution $h_2$ to Eq.~\ref{eq:primal2}, since, in most cases, we cannot explicitly compute the set $\mathcal{P}_{\omega}(D_A,D_B)$. Therefore, inspired by Lagrange relaxation, we employ the following relaxed version of Eq.~\ref{eq:primal2}: 
\begin{equation}\label{eq:dual2}
\min\limits_{h_2\in \mathcal{H}_{\omega}} \Big\{ \W(h_2 \circ D_A,D_B) - \lambda R_{D_A}[h_1,h_2] \Big\}
\end{equation}
% Therefore, instead of computing Eq.~\ref{eq:primal2}, we maximize the dual form in Eq.~\ref{eq:dual2}. For convenience, we will use the following equivalent representation of it that follows from $\lambda = \mu^{-1}$:
% \begin{align}
% &\max\limits_{h_2\in \mathcal{H}_{\omega}, \mu >0} \Big\{ R_{D_A}[h_1,h_2] + \mu \cdot (\epsilon_0 - \W(h_2 \circ D_A,D_B) ) \Big\} \nonumber \\
% \iff&\min\limits_{h_2\in \mathcal{H}_{\omega}, \lambda > 0} \Big\{ \W(h_2 \circ D_A,D_B) - \lambda R_{D_A}[h_1,h_2] \Big\} \label{eq:dualFinal2}
% \end{align}
The expectation over $x \sim D_A$ (resp $x \sim D_B$) in the risk and discrepancy are replaced, as is often done, with the sum over the training samples in domain $A$ (resp $B$). Based on this, we present a stopping criterion in Alg.~\ref{algo:when2}. Eq.~\ref{eq:dual2} is manifested in Step 6.

\subsubsection{Experimental Results for Alg.~\ref{algo:when2}}

{\color{black} For testing the stopping criterion suggested in Alg.~\ref{algo:when2}, we plotted the value of the bound and attached a specific sample for a few epochs. For this purpose, we employed DiscoGAN for both $h_1$ and $h_2$, such that the encoder part is shared between them. As we can see in Figs.~\ref{fig:NONUNIQ1}--~\ref{fig:NONUNIQ2}, for smaller values of the bound, we obtain more realistic images and the alignment also improves.}

\begin{algorithm}[t]
\caption{Deciding when to stop training $h_1$}
\label{algo:when2}
  \begin{algorithmic}[1]
  \Require{$\mathcal{S}_A$ and $\mathcal{S}_B$: unlabeled training sets; $\mathcal{H}$: a hypothesis class; $\epsilon_0$: a threshold; $\lambda$: a trade-off parameter; $T_0$: a fixed number of epochs for $\omega$; $T_1$: a fixed number of epochs for $h_1$; $T_2$: a fixed number of epochs for $h_2$.}
    \State Initialize the shared parameters $\omega_0 \in \Omega$ at random.
    \State Initialize the parameters $\theta_{1,0},\theta_{2,0} \in \Theta$ of $h_1$ and $h_2$ (resp.) at random.
    \For {$t = 1,\dots,T_0$}
      \parState{%
        Train $\omega_{t-1}$ for one epoch to minimize \begin{small}$R_{D_A}[h^{t-1}_1,h^{t-1}_2] + \W(h^{t-1}_1\circ D_A,D_B)$\end{small}, obtaining $\omega_t \in \Omega$.}
      \parState{%
        Train $\theta_{1,t-1} \in \Theta$ for $T_1$ epochs to minimize \begin{small}$\W(h^t_1\circ D_A,D_B)$\end{small}, obtaining $\theta_{1,t} \in \Theta$.}
       \parState{%
       Train $\theta_{2,t-1} \in \Theta$ for $T_2$ epochs to minimize \begin{small}$\W(h^t_2\circ D_A,D_B)-\lambda R_{D_A}[h^t_1,h^t_2]$\end{small}, obtaining $\theta_{2,t} \in \Theta$.\\\Comment{Here, $h^t_i := g_{\theta_{i,t-1}}\circ f_{\omega_{t-1}}$ for $i=1,2$.}}
    \EndFor
    \parState{%
    Define \begin{small}$t := \underset{i \in [T_0]}{\arg\min}  \left\{ R_{D_A}[h^i_1,h^i_2] + \W(h^{i}_1\circ D_A,D_B) \; \Big\vert \; \forall j=1,2: \W(h^t_j \circ D_A,D_B) \leq \epsilon_0 \right\}$\end{small}.
    }
    \parState{\Return $h^t_1$.} 
  \end{algorithmic}
\end{algorithm}

\begin{figure}[t]
    \centering
    \begin{minipage}{0.5\textwidth}
    \centering
        \includegraphics[width=0.8\linewidth, clip]
{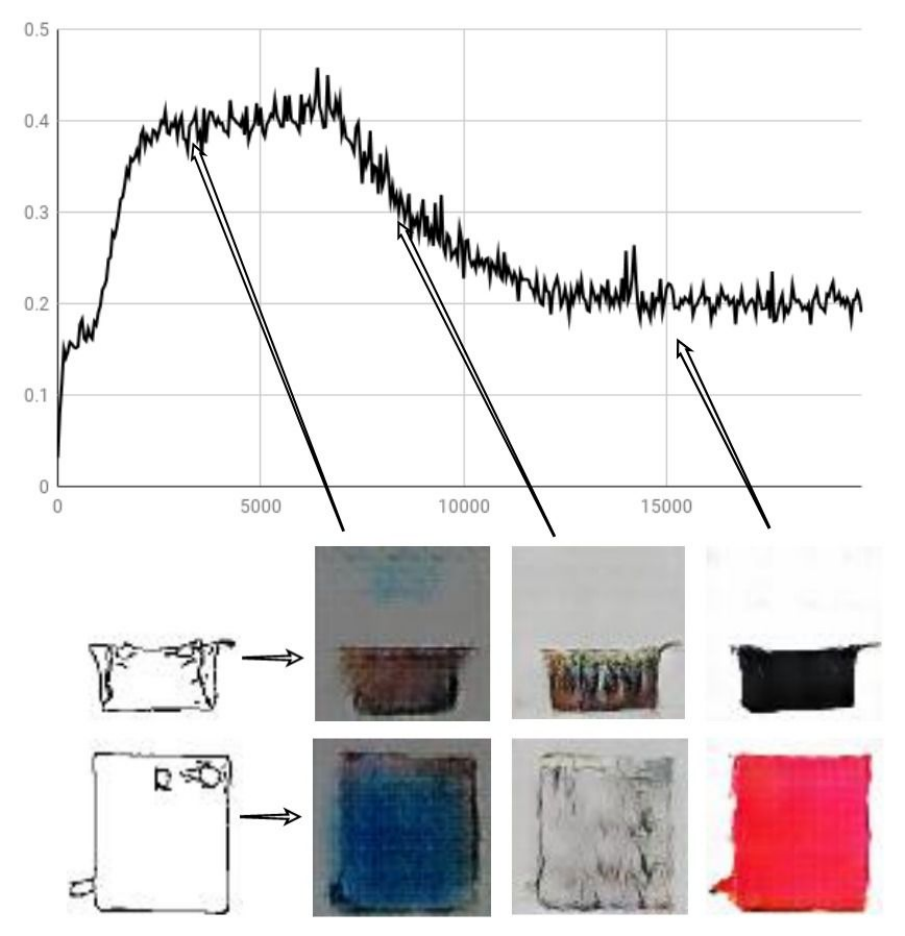}
        \caption{{\color{red}Results for Alg.~\ref{algo:when2} for non-unique translation of Edges to Handbags. The black line is the bound, images are shown for different bound values.}}
        \label{fig:NONUNIQ1}
    \end{minipage}%
    \begin{minipage}{0.5\textwidth}
    \centering
        \includegraphics[width=0.8\linewidth, clip]
{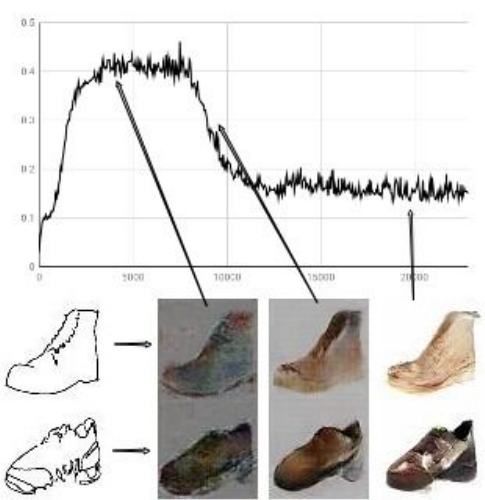}
        \caption{{\color{red}Results for Alg.~\ref{algo:when2} for non-unique translation of Edges to Shoes. The black line is the bound, images are shown for different bound values.}}
        \label{fig:NONUNIQ2}
    \end{minipage}
\end{figure}

\clearpage

\section{Proofs of {\color{red} the Main Results}\hsout{ Thm.~\ref{thm:boundGEN} and Thm.~\ref{thm:corBoundGEN}}}\label{sec:proofs}

The following lemma bounds the generalization risk between a hypothesis $h$ and a target function $y$. The upper bound is a function {\color{red} of $\rho_{\mathcal{C}}(h \circ D_A,D_B) = \sup\limits_{d \in \mathcal{C}} \left\{ \mathbb{E}_{x \sim h \circ D_A} [d(x)] - \mathbb{E}_{x \sim D_B} [d(x)] \right\}$, which is the $\mathcal{C}$-IPM between the distributions $h\circ D_A$ and $D_B$}. An additional term expresses the approximation of $h(x)-y(x)$ by the gradient of a function $d \in \mathcal{C}$. Both terms are multiplied by a term that depends on the smoothness of $d$.

\begin{lemma}\label{lem:bound1} Assume the settings of Sec.~\ref{sec:problemformulation} and Sec.~\ref{sec:dualProof}. {\color{red} Assume that $\mathcal{X}_A \subset \mathbb{R}^N$ and $\mathcal{X}_B \subset \mathbb{R}^M$ are convex and bounded sets. Assume that $\mathcal{C} \subset C^2$.} Let $y \in \mathcal{T}$ be target function and $d \in \mathcal{C}$ such that $\beta(d)<2$. Then, for every function $h \in \mathcal{H}$, such that $h:\mathcal{X}_A \to \mathcal{X}_B$, we have:
\begin{equation}
\begin{aligned}
R_{D_A}[h,y] \leq& \frac{2\rho_{\mathcal{C}}(h \circ D_A,D_B)}{2-\beta(d)} + \frac{2\sup\limits_{u \in \mathcal{X}_A} \|h(u) - y(u)\|_2 }{2-\beta(d)} \sqrt{R_{D_B}[h\circ y^{-1}-\Id_{\mathcal{X}_B} ,\nabla d ]}
\end{aligned}
\end{equation}
\end{lemma}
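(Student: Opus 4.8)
The plan is to combine a change of variables with a second-order Taylor expansion of the discriminator $d$. Since $D_B = y\circ D_A$, for any integrable $f$ we have $\mathbb{E}_{z\sim D_B}[f(z)] = \mathbb{E}_{x\sim D_A}[f(y(x))]$; in particular, substituting $z = y(x)$ gives
\[ R_{D_B}[h\circ y^{-1}-\Id_{\mathcal{X}_B},\nabla d] = \mathbb{E}_{x\sim D_A}\big[\,\|\,(h(x)-y(x)) - \nabla d(y(x))\,\|_2^2\,\big]. \]
So I set $e(x) := \nabla d(y(x)) - (h(x)-y(x))$, noting $\mathbb{E}_{x\sim D_A}[\|e(x)\|_2^2] = R_{D_B}[h\circ y^{-1}-\Id_{\mathcal{X}_B},\nabla d]$. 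The goal is then to show that the quantity $\mathbb{E}_{x\sim D_A}[\langle \nabla d(y(x)), h(x)-y(x)\rangle]$ is, up to controllable error, equal to both $R_{D_A}[h,y]$ (because $\nabla d\circ y$ is close to $h-y$ by the $e$-term) and to the $\mathcal{C}$-IPM term (because it is the linear part of $d(h(x))-d(y(x))$).

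For the latter, fix $x$ and apply Taylor's theorem with Lagrange remainder to the scalar function $t\mapsto d\big(y(x)+t(h(x)-y(x))\big)$ on $[0,1]$; this segment lies in $\mathcal{X}_B$ by convexity of $\mathcal{X}_B$ together with $h,y:\mathcal{X}_A\to\mathcal{X}_B$, and $d\in C^2$ there. Bounding the remainder by $\tfrac12\|\Hess_d(\xi)\|_2\,\|h(x)-y(x)\|_2^2 \le \tfrac12\beta(d)\,\|h(x)-y(x)\|_2^2$ gives
\[ \big|\,d(h(x)) - d(y(x)) - \langle \nabla d(y(x)),\,h(x)-y(x)\rangle\,\big| \;\le\; \tfrac12\,\beta(d)\,\|h(x)-y(x)\|_2^2 . \]
Taking $\mathbb{E}_{x\sim D_A}$, using $\mathbb{E}_{x\sim D_A}[d(h(x))] = \mathbb{E}_{z\sim h\circ D_A}[d(z)]$, $\mathbb{E}_{x\sim D_A}[d(y(x))] = \mathbb{E}_{z\sim D_B}[d(z)]$ and the definition of $\rho_{\mathcal{C}}$ (note $d\in\mathcal{C}$), I obtain
\[ \mathbb{E}_{x\sim D_A}\big[\langle \nabla d(y(x)),\,h(x)-y(x)\rangle\big] \;\le\; \rho_{\mathcal{C}}(h\circ D_A,D_B) + \tfrac12\,\beta(d)\,R_{D_A}[h,y]. \]

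Now I expand the left-hand side via $\nabla d(y(x)) = (h(x)-y(x)) + e(x)$, so that it equals $R_{D_A}[h,y] + \mathbb{E}_{x\sim D_A}[\langle e(x), h(x)-y(x)\rangle]$; rearranging yields
\[ \Big(1-\tfrac12\beta(d)\Big) R_{D_A}[h,y] \;\le\; \rho_{\mathcal{C}}(h\circ D_A,D_B) + \big|\mathbb{E}_{x\sim D_A}[\langle e(x), h(x)-y(x)\rangle]\big| . \]
For the cross term I use the pointwise bound $|\langle e(x), h(x)-y(x)\rangle| \le \|e(x)\|_2\,\|h(x)-y(x)\|_2 \le \|e(x)\|_2\sup_{u\in\mathcal{X}_A}\|h(u)-y(u)\|_2$, take expectations, and apply Jensen ($\mathbb{E}\|e\|_2\le\sqrt{\mathbb{E}\|e\|_2^2}$) to get the factor $\sup_u\|h(u)-y(u)\|_2\cdot\sqrt{R_{D_B}[h\circ y^{-1}-\Id_{\mathcal{X}_B},\nabla d]}$. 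Since $\beta(d)<2$, the coefficient $1-\tfrac12\beta(d) = \tfrac{2-\beta(d)}{2}$ is strictly positive, and dividing through by it produces exactly the claimed inequality.

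The individual steps are routine; the one place requiring care is the Taylor step — one must ensure the whole segment $[y(x),h(x)]$ stays inside $\mathcal{X}_B$ (hence the convexity hypothesis and $h:\mathcal{X}_A\to\mathcal{X}_B$) and that the Lagrange remainder is controlled uniformly by $\beta(d)$ in the operator norm — together with keeping the signs straight so that the cross term and the $\rho_{\mathcal{C}}$ term land on the same side and the constant comes out as $\tfrac{2}{2-\beta(d)}$. Boundedness of $\mathcal{X}_B$ guarantees $\sup_u\|h(u)-y(u)\|_2<\infty$ and that all expectations in play are finite, which is all the integrability one needs.
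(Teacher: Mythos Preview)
Your proof is correct and follows essentially the same approach as the paper's: a change of variables using $D_B=y\circ D_A$, a second-order Taylor expansion of $d$ along the segment $[y(x),h(x)]\subset\mathcal{X}_B$ (using convexity), the decomposition $\nabla d(y(x))=(h(x)-y(x))+e(x)$, Cauchy--Schwarz on the cross term, and Jensen to pass to $\sqrt{R_{D_B}[\cdot,\cdot]}$. The only cosmetic difference is that the paper keeps the Taylor identity and bounds the Hessian term afterward, whereas you absorb it immediately into an absolute-value estimate; the constants and the final rearrangement are identical.
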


\begin{proof} First, since each function $f \in \mathcal{H} \cup \mathcal{T}$\sout{, such that $f:\mathcal{X}_A \to \mathcal{X}_B$,} is measurable, by a change of variables (cf.~\cite{limits}, Thm.~1.9), we can represent the $\mathcal{C}$-IPM in the following manner:
\begin{align}
\rho_{\mathcal{C}}(h \circ D_A,D_B) &= \sup\limits_{d\in \mathcal{C}} \Big\{ \mathbb{E}_{u\sim h \circ D_A}[d(u)] - \mathbb{E}_{v\sim D_B}[d(v)] \Big\} \nonumber \\
&= \sup\limits_{d\in \mathcal{C}} \Big\{ \mathbb{E}_{u\sim h \circ D_A}[d(u)] - \mathbb{E}_{v\sim y \circ D_A}[d(v)] \Big\} \nonumber \\
&= \sup\limits_{d\in \mathcal{C}} \Big\{ \mathbb{E}_{x\sim D_A}[d \circ h(x)] - \mathbb{E}_{x\sim D_A}[d \circ y (x)] \Big\} \nonumber \\
&= \sup\limits_{d\in \mathcal{C}} \Big\{  \mathbb{E}_{x\sim D_A}[d \circ h(x)- d \circ y (x)] \Big\}\label{eq:repr}
\end{align}
For fixed $d \in \mathcal{C}$ and $z \in \mathcal{X}_B$, we can write the following Taylor series {\color{red} (possible since $\mathcal{C} \subset C^2$)}:
\begin{equation}
\begin{aligned}
d(z+\delta) - d(z) = \left\langle \nabla d(z), \delta \right\rangle  + \frac{1}{2}\left\langle \delta^\top \cdot \Hess_d(u^*),\delta \right\rangle
\end{aligned}
\end{equation}
where $u^*$ is strictly between $z$ and $z+\delta$ (on the line connecting $z$ and $z + \delta$). In particular, for each $d \in \mathcal{C}$ and $x \in \mathcal{X}_A$, if $z = y(x)$ and $\delta = h(x)-y(x)$, we have:
\begin{equation}\label{eq:TaylorRep}
\begin{aligned}
d(h(x)) - d(y(x)) = &\left\langle \nabla_{y(x)} d(y(x)), h(x)-y(x) \right\rangle \\
&+ \frac{1}{2}\left\langle (h(x)-y(x))^\top \cdot \Hess_d(u^*_{d,x}), h(x)-y(x) \right\rangle \\ 
\end{aligned}
\end{equation}
where $u^*_{d,x}$ is strictly between $y(x)$ and $h(x)$ (on the line connecting $y(x)$ and $h(x)$). Therefore, by combining Eqs.~\ref{eq:repr} and~\ref{eq:TaylorRep}, we obtain that for every $d \in \mathcal{C}$, we have:
\begin{equation} 
\begin{aligned}
\rho_{\mathcal{C}}(h \circ D_A,D_B) \geq& \mathbb{E}_{x \sim D_A}[d(h(x)) - d(y(x))]\\
=& \mathbb{E}_{x \sim D_A}\left[\left\langle \nabla_{y(x)} d(y(x)), h(x)-y(x) \right\rangle \right] \\
&+ \frac{1}{2} \mathbb{E}_{x \sim D_A}\left[ \left\langle (h(x)-y(x))^\top \cdot \Hess_d(u^*_{d,x}), (h(x)-y(x)) \right\rangle \right] \\
=&  \mathbb{E}_{x \sim D_A}\left[\|h(x)-y(x)\|^2_2\right] \\
&+ \mathbb{E}_{x \sim D_A}\left[\left\langle \nabla_{y(x)} d(y(x)) - (h(x)-y(x)), h(x)-y(x) \right\rangle \right] \\
&+ \frac{1}{2} \mathbb{E}_{x \sim D_A}\left[ \left\langle (h(x)-y(x))^\top \cdot \Hess_d(u^*_{d,x}),h(x)-y(x) \right\rangle \right]\\
\end{aligned}
\end{equation}
In particular, by $|\mathbb{E}[X]|\leq \mathbb{E}[|X|]$, we have:
\begin{align}
&\rho_{\mathcal{C}}(h \circ D_A,D_B) \nonumber \\
\geq&  \mathbb{E}_{x \sim D_A}\left[\|h(x)-y(x)\|^2_2\right] - \Big\vert \mathbb{E}_{x \sim D_A}\left[\left\langle \nabla_{y(x)} d(y(x)) - (h(x)-y(x)), h(x)-y(x) \right\rangle \right] \Big\vert \nonumber \\
&- \frac{1}{2} \Big\vert \mathbb{E}_{x \sim D_A}\left[ \left\langle (h(x)-y(x))^\top \cdot \Hess_d(u^*_{d,x}),h(x)-y(x) \right\rangle \right] \Big\vert \nonumber \\
\geq&  \mathbb{E}_{x \sim D_A}\left[\|h(x)-y(x)\|^2_2\right] -\mathbb{E}_{x \sim D_A}\left[ \Big\vert \left\langle \nabla_{y(x)} d(y(x)) - (h(x)-y(x)), h(x)-y(x) \right\rangle  \Big\vert\right] \label{eq:W1}\\
&- \frac{1}{2} \mathbb{E}_{x \sim D_A}\left[ \Big\vert \left\langle (h(x)-y(x))^\top \cdot \Hess_d(u^*_{d,x}), h(x)-y(x) \right\rangle \Big\vert\right] \nonumber 
\end{align}
By applying the Cauchy-Schwartz inequality,
\begin{align}
&\big\vert \left\langle \nabla_{y(x)} d(y(x)) - (h(x)-y(x)), h(x)-y(x) \right\rangle  \big\vert \nonumber \\
&\leq \|\nabla_{y(x)} d(y(x)) - (h(x)-y(x))\|_2 \cdot \|h(x)-y(x)\|_2 \nonumber\\
&\leq \|\nabla_{y(x)} d(y(x)) - (h(x)-y(x))\|_2 \cdot \sup\limits_{u \in \mathcal{X}_A}\|h(u)-y(u)\|_2 \label{eq:cs1}
\end{align}
Again, by applying the Cauchy-Schwartz inequality,
\begin{equation}
\begin{aligned}
\Big\vert \left\langle (h(x)-y(x))^\top \cdot \Hess_d(u^*_{d,x}), h(x)-y(x) \right\rangle \Big\vert &\leq \| (h(x)-y(x))^\top \cdot \Hess_d(u^*_{d,x}) \|_2\cdot \|h(x)-y(x)\|_2 \\
&\leq \| \Hess_d(u^*_{d,x})\|_2 \cdot \|h(x)-y(x)\|^2_2 \\
\end{aligned}
\end{equation}
Since $\mathcal{X}_B$ is convex, $y(x),h(x) \in \mathcal{X}_B$ and $u^{*}_{d,x}$ is on the line connecting $y(x)$ and $h(x)$, we have: $u^{*}_{d,x} \in \mathcal{X}_B$. In particular,
\begin{align}
\Big\vert \left\langle (h(x)-y(x))^\top \cdot \Hess_d(u^*_{d,x}), h(x)-y(x) \right\rangle \Big\vert \nonumber &\leq \sup\limits_{z \in \mathcal{X}_B}\| \Hess_d(z)\|_2 \cdot \|h(x)-y(x)\|^2_2 \nonumber\\
&= \beta(d) \cdot \|h(x)-y(x)\|^2_2 \label{eq:cs2}
\end{align}
Therefore, by combining Eqs.~\ref{eq:W1},~\ref{eq:cs1} and~\ref{eq:cs2}, we have: 
\begin{align}
\rho_{\mathcal{C}}(h \circ D_A,D_B) \geq& \mathbb{E}_{x \sim D_A}\left[\|h(x)-y(x)\|^2_2\right] - \frac{1}{2} \mathbb{E}_{x \sim D_A}\left[ \beta(d) \cdot \|h(x)-y(x)\|_2^2\right] \nonumber \\
&- \sup\limits_{u \in \mathcal{X}_A} \| h(u)-y(u) \|_2 \cdot  \mathbb{E}_{x \sim D_A}\left[\|\nabla_{y(x)} d(y(x)) - (h(x)-y(x))\|_2 \right] \nonumber \\  
=& \left( 1 - \frac{\beta(d)}{2}\right) R_{D_A}[h,y] \label{eq:W2} \\
&- \sup\limits_{u \in \mathcal{X}_A} \| h(u)-y(u) \|_2 \cdot  \mathbb{E}_{x \sim D_A}\left[\|\nabla_{y(x)} d(y(x)) - (h(x)-y(x))\|_2 \right] \nonumber
\end{align}
By Jensen's inequality,
\begin{equation}\label{eq:simpleJensen}
\begin{aligned}
\mathbb{E}_{x \sim D_A}\left[\|\nabla_{y(x)} d(y(x)) - (h(x)-y(x))\|_2 \right]  \leq \sqrt{\mathbb{E}_{x \sim D_A}\left[\|\nabla_{y(x)} d(y(x)) - (h(x)-y(x))\|^2_2 \right]} 
\end{aligned}
\end{equation}
Since $y$ is measurable, by a change of variables (cf.~\cite{limits}, Thm.~1.9), we have:
\begin{align}
&\mathbb{E}_{x \sim D_A}\left[\|\nabla_{y(x)} d(y(x)) - (h(x)-y(x))\|^2_2 \right] \nonumber = \mathbb{E}_{z \sim y \circ D_A}\left[\|\nabla_{z} d(z) - (h(y^{-1}(z))-z)\|^2_2 \right] \nonumber \\
&= \mathbb{E}_{z \sim D_B}\left[\|\nabla_{z} d(z) - (h(y^{-1}(z))-z)\|^2_2 \right] = R_{D_B}[h\circ y^{-1}-\Id_{\mathcal{X}_B},\nabla d] \label{eq:changeVAR}
\end{align}
Therefore, by combining Eqs.~\ref{eq:simpleJensen} and~\ref{eq:changeVAR}, we have:
\begin{equation}\label{eq:changeVAR2}
\mathbb{E}_{x \sim D_A}\left[\|\nabla_{y(x)} d(y(x)) - (h(x)-y(x))\|_2 \right] \leq \sqrt{R_{D_B}[h\circ y^{-1}-\Id_{\mathcal{X}_B},\nabla d]}
\end{equation}
Finally, by combining Eqs.~\ref{eq:W2} and~\ref{eq:changeVAR2} and $\beta(d)<2$, we obtain the desired inequality.
\end{proof}

The following lemma is a variation of the Occam's Razor theorem from~\citep{benaim2017maximally}, where it was used to bound the risk between $h_1\in \mathcal H$ and the target function $y$, when assuming that there is a good approximation for $y$ in, what appears here as $\mathcal{P}$.

\begin{lemma}\label{lem:cvprBound} Let $y\in \mathcal{T}$ be a target function and $\mathcal{P}$ a class of functions. Then, for every function $h_1 \in \mathcal{P}$, we have:
\begin{equation}\label{eq:lemcvpr1}
R_{D_A}[h_1,y] \leq 3\sup\limits_{h_2 \in \mathcal{P}} R_{D_A}[h_1,h_2] + 3\inf\limits_{h \in \mathcal{P}} R_{D_A}[h,y]
\end{equation}
and,
\begin{equation}\label{eq:lemcvpr2}
\sup\limits_{h_1,h_2\in \mathcal{P}} R_{D_A}[h_1,h_2] \leq 6\sup\limits_{h \in \mathcal{P}} R_{D_A}[h,y]
\end{equation}
\end{lemma}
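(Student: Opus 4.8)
The whole content is the observation that, since $\ell$ is the squared Euclidean loss, $\sqrt{R_{D_A}[\,\cdot\,,\cdot\,]}$ is a pseudometric on functions $\mathcal{X}_A\to\mathbb{R}^M$. Concretely, $R_{D_A}[f_1,f_2] = \mathbb{E}_{x\sim D_A}\|f_1(x)-f_2(x)\|_2^2 = \|f_1-f_2\|^2$, where $\|\cdot\|$ is the seminorm induced by the (semi-)inner product $\langle f,g\rangle := \mathbb{E}_{x\sim D_A}\langle f(x),g(x)\rangle_{\mathbb{R}^M}$ on $\mathbb{R}^M$-valued functions; hence $d(f_1,f_2):=\sqrt{R_{D_A}[f_1,f_2]}=\|f_1-f_2\|$ obeys the triangle inequality. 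First I would record this, together with its elementary quantitative consequence: for any $f_1,f_2,f_3$, using $2ab\le a^2+b^2$, $R_{D_A}[f_1,f_3]\le\big(d(f_1,f_2)+d(f_2,f_3)\big)^2\le 2R_{D_A}[f_1,f_2]+2R_{D_A}[f_2,f_3]$.

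For Eq.~\ref{eq:lemcvpr1}, I would fix $h_1\in\mathcal{P}$ and take an arbitrary $h\in\mathcal{P}$. The relaxed triangle inequality with $f_2=h$ gives $R_{D_A}[h_1,y]\le 2R_{D_A}[h_1,h]+2R_{D_A}[h,y]\le 2\sup_{h_2\in\mathcal{P}}R_{D_A}[h_1,h_2]+2R_{D_A}[h,y]$. Since the first summand is independent of $h$, taking the infimum over $h\in\mathcal{P}$ on the right yields $R_{D_A}[h_1,y]\le 2\sup_{h_2\in\mathcal{P}}R_{D_A}[h_1,h_2]+2\inf_{h\in\mathcal{P}}R_{D_A}[h,y]$, which is at most the stated bound because $2\le 3$. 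If the infimum is not attained I would simply pass to an $\varepsilon$-minimizer $h$ with $R_{D_A}[h,y]\le \inf_{h'\in\mathcal{P}}R_{D_A}[h',y]+\varepsilon$ and let $\varepsilon\to 0$.

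For Eq.~\ref{eq:lemcvpr2}, I would fix arbitrary $h_1,h_2\in\mathcal{P}$ and insert the target $y$ as the intermediate point: $d(h_1,h_2)\le d(h_1,y)+d(y,h_2)\le 2\sup_{h\in\mathcal{P}}d(h,y)$. Squaring gives $R_{D_A}[h_1,h_2]\le 4\sup_{h\in\mathcal{P}}R_{D_A}[h,y]$, and taking the supremum over $h_1,h_2\in\mathcal{P}$ together with $4\le 6$ gives the claim.

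There is essentially no obstacle: the argument is exhausted by the pseudometric structure of $\sqrt{R_{D_A}}$ and the inequality $2ab\le a^2+b^2$; the only points requiring a word of care are that the relevant expectations are well defined (assumed throughout Sec.~\ref{sec:problemformulation}) and the handling of a possibly unattained infimum. Note in passing that this route actually yields the sharper constants $2$ and $4$ in place of $3$ and $6$, so the stated constants are not tight, which is harmless for the applications.
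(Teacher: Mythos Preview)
Your proposal is correct and follows essentially the same approach as the paper: both arguments rest on the (relaxed) triangle inequality for the squared-$L_2$ risk, inserting an intermediate function ($h\in\mathcal{P}$ for the first inequality, $y$ for the second) and then taking $\inf$/$\sup$. The only cosmetic difference is that the paper bounds $2ab\le 2\max(a^2,b^2)$ pointwise to obtain $\ell(a,c)\le 3(\ell(a,b)+\ell(b,c))$ and hence the constants $3$ and $6$, whereas you use $2ab\le a^2+b^2$ at the level of the $L^2$-seminorm and get the sharper constants $2$ and $4$; as you note, this is harmless for the applications.
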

% \begin{lemma}\label{lem:cvprBound}  For every target function $y \in \mathcal{T}$ and a function $h_1:\mathbb{R}^N \to \mathbb{R}^M$, we have:
% \begin{equation}\label{eq:cvprBound}
% R_{D_A}[h_1,y] \leq 3 \left\{ \sup\limits_{h_2 \in \mathcal{P}_{\omega}(D_A,D_B)} R_{D_A}[h_1,h_2] + \inf\limits_{h \in \mathcal{P}_{\omega}(D_A,D_B)} R_{D_A}[h,y] \right\}
% \end{equation}
% \end{lemma}

\begin{proof} First, we prove Eq.~\ref{eq:lemcvpr1}. We consider that $\ell(a,c) = \|a-c\|^2_2 = \|a-b+b-c\|^2_2 \leq (\|a-b\|_2+\|b-c\|_2)^2 = \|a-b\|^2_2 + \|b-c\|^2_2 + 2\|a-b\|_2\cdot \|b-c\|_2 \leq \|a-b\|^2_2 + \|b-c\|^2_2 + 2\max (\|a-b\|^2_2,\|b-c\|^2_2) \leq 3(\|a-b\|^2_2+\|b-c\|^2_2) = 3(\ell(a,b) + \ell(b,c))$. Therefore, we have: $R_{D_A}[h_1,y]  = \mathbb{E}_{x \sim D_A}[\|h_1(x)-y(x)\|^2_2]
\leq \mathbb{E}_{x \sim D_A}\left[3\|h_1(x)-h^*(x)\|^2_2+3\|h^*(x)-y(x)\|^2_2 \right]
= 3 \left[ R_{D_A}[h_1,h^*] + R_{D_A}[h^*,y] \right] = 3 \left[ R_{D_A}[h_1,h^*] + \inf\limits_{h \in \mathcal{P}} R_{D_A}[h,y] \right]$, where $h^* \in \arg\inf\limits_{h \in \mathcal{P}} R_{D_A}[h^*,y]$. Since $h^* \in \mathcal{P}$, we have: $R_{D_A}[h_1,h^*] \leq \sup\limits_{h_2 \in \mathcal{P}} R_{D_A}[h_1,h_2]$ and the desired inequality follows immediately. By the same argument, we prove Eq.~\ref{eq:lemcvpr2}. We consider that, $R_{D_A}[h_1,h_2] \leq 3\left[R_{D_A}[h_1,y] + R_{D_A}[h_2,y]\right]$. Therefore, $\sup\limits_{h_1,h_2\in \mathcal{P}} R_{D_A}[h_1,h_2] \leq 3\sup\limits_{h_1,h_2\in \mathcal{P}}\left[R_{D_A}[h_1,y] + R_{D_A}[h_2,y]\right] = 6\sup\limits_{h\in \mathcal{P}}R_{D_A}[h,y]$.
\end{proof}

The following result is obtained by combining Lem.~\ref{lem:bound1} with Lem.~\ref{lem:cvprBound}. 

\begin{lemma}\label{lem:boundGEN} Assume the setting of Sec.~\ref{sec:problemformulation} \sout{and Sec.~\ref{sec:dualProof}}. {\color{red} Assume that $\mathcal{X}_A \subset \mathbb{R}^N$ and $\mathcal{X}_B \subset \mathbb{R}^M$ are convex and bounded sets. Assume that $\mathcal{C} \subset C^2$.} Let $y \in \mathcal{T}$ be a target function, $\omega \in \Omega$, $h \in \mathcal{P}_{\omega}(D_A,D_B)$ such that $h:\mathcal{X}_A \to \mathcal{X}_B$ and $d \in \mathcal{C}$ such that $\beta(d) < 2$. Then, for every function $h_1\in \mathcal{H}$, we have:
\begin{equation}
\begin{aligned}
R_{D_A}[h_1,y] \leq& 3\sup\limits_{h_2 \in \mathcal{P}_{\omega}(D_A,D_B)} R_{D_A}[h_1,h_2] + \frac{6\rho_{\mathcal{C}}(h \circ D_A,D_B)}{2-\beta(d)}  \\
&+ \frac{6\sup\limits_{u \in \mathcal{X}_A} \|h(u) - y(u)\|_2 }{2-\beta(d)} \sqrt{R_{D_B}[h\circ y^{-1}-\Id_{\mathcal{X}_B},\nabla d]}  
\end{aligned}
\end{equation}
\end{lemma}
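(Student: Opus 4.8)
The plan is to view this lemma as a bookkeeping combination of Lem.~\ref{lem:bound1} with the relaxed triangle inequality for the squared loss that is already isolated inside the proof of Lem.~\ref{lem:cvprBound}. Recall from that proof that for any $a,b,c \in \mathbb{R}^M$ one has $\ell(a,c) = \|a-c\|_2^2 \leq 3\|a-b\|_2^2 + 3\|b-c\|_2^2 = 3\ell(a,b) + 3\ell(b,c)$. First I would apply this pointwise with $a = h_1(x)$, $b = h(x)$, $c = y(x)$, for the fixed $h \in \mathcal{P}_{\omega}(D_A,D_B)$ appearing in the statement, and take the expectation over $x \sim D_A$. This yields
\[
R_{D_A}[h_1,y] \leq 3\,R_{D_A}[h_1,h] + 3\,R_{D_A}[h,y].
\]

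Next I would bound the two summands separately. Since $h \in \mathcal{P}_{\omega}(D_A,D_B)$, the first term is controlled by the diameter-type quantity in the statement: $R_{D_A}[h_1,h] \leq \sup_{h_2 \in \mathcal{P}_{\omega}(D_A,D_B)} R_{D_A}[h_1,h_2]$. For the second term, I would invoke Lem.~\ref{lem:bound1}: its hypotheses — $\mathcal{X}_A,\mathcal{X}_B$ convex and bounded, $\mathcal{C} \subset C^2$, $y \in \mathcal{T}$, $h \in \mathcal{P}_{\omega}(D_A,D_B) \subset \mathcal{H}$ with $h:\mathcal{X}_A \to \mathcal{X}_B$, and $d \in \mathcal{C}$ with $\beta(d) < 2$ — are exactly the assumptions carried by the present lemma, so Lem.~\ref{lem:bound1} gives
\[
\begin{aligned}
R_{D_A}[h,y] \leq{}& \frac{2\rho_{\mathcal{C}}(h \circ D_A,D_B)}{2-\beta(d)} \\
&+ \frac{2\sup_{u \in \mathcal{X}_A}\|h(u)-y(u)\|_2}{2-\beta(d)}\sqrt{R_{D_B}[h\circ y^{-1}-\Id_{\mathcal{X}_B},\nabla d]}.
\end{aligned}
\]
(The supremum is finite because $h$ and $y$ both take values in the bounded set $\mathcal{X}_B$.)

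Finally, I would substitute this estimate into $R_{D_A}[h_1,y] \leq 3R_{D_A}[h_1,h] + 3R_{D_A}[h,y]$ and distribute the factor $3$: the coefficients $2/(2-\beta(d))$ become $6/(2-\beta(d))$, and $3R_{D_A}[h_1,h]$ becomes $3\sup_{h_2 \in \mathcal{P}_{\omega}(D_A,D_B)} R_{D_A}[h_1,h_2]$, which is precisely the asserted inequality. Since $h$ and $d$ are the (arbitrary but fixed) objects quantified in the statement, no further argument is needed. I do not expect a genuine obstacle here — the content is entirely in the two cited lemmas — the only point deserving a word of care is verifying that Lem.~\ref{lem:bound1} is applicable to the chosen $h$, i.e.\ that $h$ indeed maps into $\mathcal{X}_B$, which is built into the hypothesis, and that $\beta(d)<2$ so the denominators are positive.
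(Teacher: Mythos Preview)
Your proposal is correct and follows essentially the same approach as the paper. The paper phrases the first step as invoking Eq.~\eqref{eq:lemcvpr1} of Lem.~\ref{lem:cvprBound} and then bounding $\inf_{h^*\in\mathcal{P}_\omega}R_{D_A}[h^*,y]\le R_{D_A}[h,y]$ for the given $h$, whereas you apply the relaxed triangle inequality directly with the fixed $h$; after that both proofs invoke Lem.~\ref{lem:bound1} identically, so the arguments coincide.
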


\begin{proof} Let $y \in \mathcal{T}$, $\omega \in \Omega$, $h \in \mathcal{P}_{\omega}(D_A,D_B)$ such that $h:\mathcal{X}_A \to \mathcal{X}_B$ and $d \in \mathcal{C}$, such that, $\beta(d) < 2$. By Lem.~\ref{lem:bound1}:
\begin{equation}\label{eq:b2}
\begin{aligned}
R_{D_A}[h,y] \leq \frac{2\rho_{\mathcal{C}}(h \circ D_A,D_B)}{2-\beta(d)} + \frac{2\sup\limits_{u \in \mathcal{X}_A} \|h(u) - y(u)\|_2 }{2-\beta(d)} \sqrt{R_{D_B}[h\circ y^{-1}-\Id_{\mathcal{X}_B},\nabla d]} 
\end{aligned}
\end{equation}
In particular, since $h \in \mathcal{P}_{\omega}(D_A,D_B)$, we have: $\inf\limits_{h^* \in \mathcal{P}_{\omega}(D_A,D_B)} R_{D_A}[h^*,y] \leq R_{D_A}[h,y]$. By combining Eq.~\ref{eq:lemcvpr1} (of Lem.~\ref{lem:cvprBound}) with Eq.~\ref{eq:b2}, we obtain the desired inequality.

\end{proof}

% \begin{restatable}{thrm}{blaboundGEN}\label{thm:blaboundGEN} Assume the settings of Sec.~\ref{sec:problemformulation} and Sec.~\ref{sec:dualProof}. Let $\omega \in \Omega$, $h \in \mathcal{P}_{\omega}(D_A,D_B)$ such that $h:\mathcal{X}_A \to \mathcal{X}_B$ and $d \in \mathcal{C}$ such that $\beta(d) \leq 1$. Then, for every $h_1 \in \mathcal{P}_{\omega}(D_A,D_B)$, we have:
% \begin{equation}
% \begin{aligned}
% \inf\limits_{y \in \mathcal{T}} R_{D_A}[h_1,y] \lesssim & \sup\limits_{h_2 \in \mathcal{P}_{\omega}(D_A,D_B)} R_{D_A}[h_1,h_2] + \frac{1}{2-\beta(d)}\W_{\mathcal{C}}(h \circ D_A,D_B) \\
% &+ \frac{1}{2-\beta(d)}\sqrt{R_{D_B}[h\circ y^{-1}-\Id_{\mathcal{X}_B},\nabla d]}
% \end{aligned}
% \end{equation}
% \end{restatable}

%Our main results follow directly form Lem.~\ref{lem:boundGEN} by applying an assumption on the norm of the Hessian of some $d\in \mathcal C$. 

{\color{red}
\begin{lemma}\label{lem:boundGENExt} Assume that $\mathcal{X}_A \subset \mathbb{R}^N$ and $\mathcal{X}_B \subset \mathbb{R}^M$ are convex and bounded sets. Assume that $\mathcal{C} \subset C^2$. Let $\omega \in \Omega$, $h \in \mathcal{P}_{\omega}(D_A,D_B)$, such that, $h:\mathcal{X}_A \to \mathcal{X}_B$ and $d \in \mathcal{C}$, such that, $\beta(d)\leq 1$. Then, for every $h_1 \in \mathcal{P}_{\omega}(D_A,D_B)$, we have:
\begin{equation}
\begin{aligned}
\inf\limits_{y \in \mathcal{T}} R_{D_A}[h_1,y] \lesssim & \sup\limits_{h_2 \in \mathcal{P}_{\omega}(D_A,D_B)} R_{D_A}[h_1,h_2] + \rho_{\mathcal{C}}(h \circ D_A,D_B) \\
&+\inf\limits_{y \in \mathcal{T}} \sqrt{R_{D_B}[h\circ y^{-1}-\Id_{\mathcal{X}_B},\nabla d]}
\end{aligned}
\end{equation}
\end{lemma}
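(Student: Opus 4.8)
The plan is to deduce this lemma from its unique-target counterpart, Lemma~\ref{lem:boundGEN}, by applying that lemma to each $y\in\mathcal{T}$ separately and then taking the infimum over $\mathcal{T}$. The two terms that must survive the infimum — the variance term $\sup_{h_2}R_{D_A}[h_1,h_2]$ and the IPM term $\rho_{\mathcal{C}}(h\circ D_A,D_B)$ — are already $y$-free in Lemma~\ref{lem:boundGEN}, so the entire content of the argument is to control the one $y$-dependent prefactor appearing there uniformly in $y$, and then to distribute the infimum correctly.

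Concretely, I would fix an arbitrary $y\in\mathcal{T}$. All hypotheses of Lemma~\ref{lem:boundGEN} are in force here (in particular $\beta(d)\le 1<2$, and $h\in\mathcal{P}_{\omega}(D_A,D_B)$ with $h:\mathcal{X}_A\to\mathcal{X}_B$), so for every $h_1\in\mathcal{H}$ that lemma gives
\[
R_{D_A}[h_1,y]\;\le\;3\sup\limits_{h_2\in\mathcal{P}_{\omega}(D_A,D_B)}R_{D_A}[h_1,h_2]+\frac{6\,\rho_{\mathcal{C}}(h\circ D_A,D_B)}{2-\beta(d)}+\frac{6\sup\limits_{u\in\mathcal{X}_A}\|h(u)-y(u)\|_2}{2-\beta(d)}\sqrt{R_{D_B}[h\circ y^{-1}-\Id_{\mathcal{X}_B},\nabla d]}.
\]
Since $\beta(d)\le 1$ we have $\tfrac{6}{2-\beta(d)}\le 6$, and since $\mathcal{X}_B$ is bounded and $h(u),y(u)\in\mathcal{X}_B$ we have $\sup_{u\in\mathcal{X}_A}\|h(u)-y(u)\|_2\le\textnormal{diam}(\mathcal{X}_B)<\infty$, a constant depending only on $\mathcal{X}_B$ — not on $\omega$, $h_1$, $h$, $d$ or $y$. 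Folding these into $\lesssim$ yields, for \emph{every} $y\in\mathcal{T}$ and every $h_1\in\mathcal{P}_{\omega}(D_A,D_B)\subset\mathcal{H}$,
\[
R_{D_A}[h_1,y]\;\lesssim\;\sup\limits_{h_2\in\mathcal{P}_{\omega}(D_A,D_B)}R_{D_A}[h_1,h_2]+\rho_{\mathcal{C}}(h\circ D_A,D_B)+\sqrt{R_{D_B}[h\circ y^{-1}-\Id_{\mathcal{X}_B},\nabla d]}.
\]

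To finish, I would distribute the infimum over $\mathcal{T}$. For any $y\in\mathcal{T}$, $\inf_{y'\in\mathcal{T}}R_{D_A}[h_1,y']\le R_{D_A}[h_1,y]$, so the left-hand side of the last display may be replaced by $\inf_{y'\in\mathcal{T}}R_{D_A}[h_1,y']$; as the first two terms on the right do not involve $y$, taking the infimum over $y\in\mathcal{T}$ on the right leaves them unchanged and turns the last term into $\inf_{y\in\mathcal{T}}\sqrt{R_{D_B}[h\circ y^{-1}-\Id_{\mathcal{X}_B},\nabla d]}$. This is exactly the asserted inequality.

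I do not anticipate a genuine obstacle; the lemma is essentially a corollary of Lemma~\ref{lem:boundGEN}. The two points deserving care are: (i) the $y$-dependent amplitude $\sup_u\|h(u)-y(u)\|_2$ must be replaced by the $y$-free bound $\textnormal{diam}(\mathcal{X}_B)$ — this is exactly where boundedness of $\mathcal{X}_B$ is used, and it is what makes the infimum over $\mathcal{T}$ legitimate after passing to $\lesssim$; and (ii) the quantifier order — the bound must first be proved for a fixed $y$ and the infimum over $\mathcal{T}$ distributed only afterwards, so that the variance and IPM terms remain intact.
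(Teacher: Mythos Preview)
Your proposal is correct and follows essentially the same route as the paper's proof: apply Lemma~\ref{lem:boundGEN} to a fixed $y\in\mathcal{T}$, use $\beta(d)\le 1$ and the boundedness of $\mathcal{X}_B$ to absorb the $y$-dependent prefactor into a universal constant, and then take the infimum over $\mathcal{T}$ on both sides. Your write-up is in fact slightly more explicit than the paper's about why the amplitude bound is uniform in $y$ and how the infimum distributes.
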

}

\begin{proof} Let $\omega \in \Omega$, $h \in \mathcal{P}_{\omega}(D_A,D_B)$, such that, $h:\mathcal{X}_A \to \mathcal{X}_B$, $d \in \mathcal{C}$, such that $\beta(d)\leq 1$ and $y \in \mathcal{T}$. Then, by Lem.~\ref{lem:boundGEN}, for every $h_1 \in \mathcal{P}_{\omega}(D_A,D_B)$, we have:
\begin{equation}
\begin{aligned}
R_{D_A}[h_1,y] \leq& 3\sup\limits_{h_2 \in \mathcal{P}_{\omega}(D_A,D_B)} R_{D_A}[h_1,h_2] + 6\rho_{\mathcal{C}}(h \circ D_A,D_B) \\
&+ 6\sup\limits_{u \in \mathcal{X}_A} \|h(u) - y(u)\|_2\sqrt{R_{D_B}[h\circ y^{-1}-\Id_{\mathcal{X}_B},\nabla d]}  \\
\end{aligned}
\end{equation}
In particular, since $\mathcal{X}_B$ is bounded, there is a constant $K>0$ such that $\sup\limits_{a,b \in \mathcal{X}_B} \|a - b\|_2 \leq K$. Hence, for every $h,y:\mathcal{X}_A \to \mathcal{X}_B$, we have: $\sup\limits_{u \in \mathcal{X}_A} \|h(u) - y(u)\|_2 \leq K$. Therefore, 
\begin{equation}\label{eq:deter}
\begin{aligned}
R_{D_A}[h_1,y] 
%\leq& 3\sup\limits_{h_2 \in \mathcal{P}_{\omega}(D_A,D_B)} R_{D_A}[h_1,h_2] + 6 \rho_{\mathcal{C}}(h \circ D_A,D_B) \\
%&+ 6K \sqrt{R_{D_B}[h\circ y^{-1}-\Id_{\mathcal{X}_B},\nabla d]}  \\
\lesssim& \sup\limits_{h_2 \in \mathcal{P}_{\omega}(D_A,D_B)} R_{D_A}[h_1,h_2] + \rho_{\mathcal{C}}(h \circ D_A,D_B) + \sqrt{R_{D_B}[h\circ y^{-1}-\Id_{\mathcal{X}_B},\nabla d]}  \\
\end{aligned}
\end{equation}
Finally, by taking $\inf\limits_{y \in \mathcal{T}}$ in both sides of Eq.~\ref{eq:deter}, we obtain the desired inequality. 
\end{proof}

\boundGEN*

\proof{Follows immediately from Lem.~\ref{lem:boundGENExt} for $\mathcal{T} = \{y\}$.}

\corBoundGENExt*

\begin{proof} {\color{red} Let $\omega \in \Omega$.  If $\mathcal{P}_{\omega}(D_A,D_B) \cap \mathcal{Q} =\emptyset$, then, $\inf\limits_{h \in \mathcal{P}_{\omega}(D_A,D_B) \cap \mathcal{Q}} \W(h \circ D_A,D_B) = \infty$ and the bound follows immediately. Otherwise, let} $h_{\omega} \in \arg\inf\limits_{h \in \mathcal{P}_{\omega}(D_A,D_B) \cap \mathcal{Q}} \W(h \circ D_A,D_B)$. Since $h_{\omega} \in \mathcal{Q}$, there is a function $y_{\omega}\in \mathcal{T}$ such that: 
\begin{equation}\label{eq:jbound}
\sup\limits_{z \in \mathcal{X}_B}\|\Diff_{h_{\omega} \circ y^{-1}_{\omega}-\Id_{\mathcal{X}_B}}(z) \|_2 \leq 1
\end{equation}
Since $\mathcal{X}_B$ is bounded, there is a constant $K > 0$ such that $\sup\limits_{z \in \mathcal{X}_B}\|z\|_2 \leq K$. We consider that $h_{\omega},y_{\omega}:\mathcal{X}_A \to \mathcal{X}_B$, and, therefore, for every $z,u \in \mathcal{X}_B$, we have: $h_{\omega}(y^{-1}_{\omega}(z)) \in \mathcal{X}_B$. In particular, 
\begin{equation}\label{eq:2K}
\sup\limits_{z \in \mathcal{X}_B}\|h_{\omega}(y^{-1}_{\omega}(z))-z\|_2 \leq 2K
\end{equation}
We select a set of discriminators $\mathcal{C} = \{d:\mathcal{X}_B \to \mathbb{R} \;\vert\; \|d\|_{\Lip} \leq 2K\}$. In addition, since $y_{\omega} \in C^1_{\textnormal{diff}}$ and $h_{\omega} \in C^1$, we have: $h_{\omega}(y^{-1}_{\omega}(z))-z \in C^1$ and there is an anti-derivative function $d$ such that: 
\begin{equation}\label{eq:nablaD}
\nabla d(z) = h_{\omega}(y^{-1}_{\omega}(z)) - z
\end{equation}
By Eqs.~\ref{eq:2K} and~\ref{eq:nablaD}, $d \in \mathcal{C}$. In addition, $d \in C^2$ since $h_{\omega}(y^{-1}_{\omega}(z))-z \in C^1$. Furthermore, by Eq.~\ref{eq:nablaD}, $\Hess_d(z) = \Diff_{h_{\omega} \circ y^{-1}_{\omega} - \Id_{\mathcal{X}_B}}$ and by Eq.~\ref{eq:jbound}, we conclude that $\beta(d)\leq 1$. Therefore, by Lem.~\ref{thm:boundGENExt} and Eq.~\ref{eq:nablaD}, for any $h_1 \in \mathcal{P}_{\omega}(D_A,D_B)$, we have:
\begin{equation}
\begin{aligned}
\inf\limits_{y \in \mathcal{T}} R_{D_A}[h_1,y] \lesssim& \sup\limits_{h_2 \in \mathcal{P}_{\omega}(D_A,D_B)} R_{D_A}[h_1,h_2] + \rho_{\mathcal{C}}(h_{\omega} \circ D_A,D_B) + \sqrt{R_{D_B}[h_{\omega}\circ y^{-1}_{\omega}-\Id_{\mathcal{X}_B},\nabla d]} \\
=& \sup\limits_{h_2 \in \mathcal{P}_{\omega}(D_A,D_B)} R_{D_A}[h_1,h_2] + \rho_{\mathcal{C}}(h_{\omega} \circ D_A,D_B)  
\end{aligned}
\end{equation}
Finally, since $\mathcal{C} = \{d:\mathcal{X}_B \to \mathbb{R} \;\vert\; \|d\|_{\Lip} \leq 2K\}$, for any two distributions $D_1$ and $D_2$, we have: $\rho_{\mathcal{C}}(D_1,D_2) = 2K \cdot \W(D_1,D_2)$ and by the definition of $h_{\omega}$, we have: $\W(h_{\omega} \circ D_A,D_B) = \inf\limits_{h \in \mathcal{P}_{\omega}(D_A,D_B) \cap \mathcal{Q}} \W(h \circ D_A,D_B)$. Therefore, 
\begin{equation}
\begin{aligned}
\inf\limits_{y \in \mathcal{T}} R_{D_A}[h_1,y] \lesssim& \sup\limits_{h_2 \in \mathcal{P}_{\omega}(D_A,D_B)} R_{D_A}[h_1,h_2] + 2K \cdot \W(h_{\omega} \circ D_A,D_B) \\
=& \sup\limits_{h_2 \in \mathcal{P}_{\omega}(D_A,D_B)} R_{D_A}[h_1,h_2] + 2K \cdot \inf\limits_{h \in \mathcal{P}_{\omega}(D_A,D_B) \cap \mathcal{Q}} \W(h \circ D_A,D_B) \\ 
\lesssim& \sup\limits_{h_2 \in \mathcal{P}_{\omega}(D_A,D_B)} R_{D_A}[h_1,h_2] + \inf\limits_{h \in \mathcal{P}_{\omega}(D_A,D_B) \cap \mathcal{Q}} \W(h \circ D_A,D_B) \\
\end{aligned}
\end{equation}
\end{proof}

\corBoundGEN*

\proof{Follows immediately from Cor.~\ref{cor:corBoundGENExt} for $\mathcal{T}=\{y\}$.}

\begin{lemma}\label{lem:strongerAssmp} Assume the setting of Sec.~\ref{sec:problemformulation} such that $M=N$. Assume that the sets $\mathcal{X}_A$ and $\mathcal{X}_B$ are convex and open. Let $h,y:\mathcal{X}_A \to \mathcal{X}_B$ such that $h\in C^1$ and $y\in C^1_{\textnormal{diff}}$. If $\Diff_{y^{-1}}(z) \neq 0$ for every $z\in \mathcal{X}_B$ and $\|h-y\|_{\Lip} \leq \|y^{-1}\|^{-1}_{\Lip}$, then, $\|h\circ y^{-1}-\Id_{\mathcal{X}_B}\|_{\Lip} \leq 1$. 
\end{lemma}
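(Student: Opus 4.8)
The plan is to reduce the statement to the submultiplicativity of Lipschitz norms under composition, after a single algebraic rewriting of the function whose Lipschitz norm we must control. The key observation is that, since $y$ is invertible with inverse $y^{-1}$, we have $y\circ y^{-1}=\Id_{\mathcal{X}_B}$, and hence
\begin{equation}
h\circ y^{-1}-\Id_{\mathcal{X}_B}=h\circ y^{-1}-y\circ y^{-1}=(h-y)\circ y^{-1}.
\end{equation}
This identity is the whole point; once it is in place the $C^1$ hypotheses become nearly cosmetic, serving only to make the various Lipschitz norms coincide with suprema of Jacobian norms on the convex sets $\mathcal{X}_A,\mathcal{X}_B$ (as recorded in the notation section).

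Next I would estimate $\|(h-y)\circ y^{-1}\|_{\Lip}$ directly. For $z_1,z_2\in\mathcal{X}_B$, writing $w_i=y^{-1}(z_i)\in\mathcal{X}_A$,
\begin{equation}
\|(h-y)(w_1)-(h-y)(w_2)\|_2\le\|h-y\|_{\Lip}\,\|w_1-w_2\|_2\le\|h-y\|_{\Lip}\,\|y^{-1}\|_{\Lip}\,\|z_1-z_2\|_2,
\end{equation}
so that $\|h\circ y^{-1}-\Id_{\mathcal{X}_B}\|_{\Lip}\le\|h-y\|_{\Lip}\,\|y^{-1}\|_{\Lip}$, and combining with the hypothesis $\|h-y\|_{\Lip}\le\|y^{-1}\|^{-1}_{\Lip}$ gives the bound $1$. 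Equivalently, one can argue on Jacobians: differentiating $y\circ y^{-1}=\Id_{\mathcal{X}_B}$ via the chain rule yields $\Diff_y(y^{-1}(z))\,\Diff_{y^{-1}}(z)=I$, hence $\Diff_{h\circ y^{-1}-\Id_{\mathcal{X}_B}}(z)=\bigl(\Diff_h(y^{-1}(z))-\Diff_y(y^{-1}(z))\bigr)\Diff_{y^{-1}}(z)$, whose operator norm is at most $\|h-y\|_{\Lip}\,\|y^{-1}\|_{\Lip}\le1$; taking the supremum over $z\in\mathcal{X}_B$ and using $\|\cdot\|_{\Lip}=\|\Diff_{\cdot}\|_{\infty,\mathcal{X}_B}$ on the convex set $\mathcal{X}_B$ finishes it.

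The only subtlety worth spelling out — and the place where the hypothesis $\Diff_{y^{-1}}(z)\neq0$ is used — is to ensure that $\|h-y\|_{\Lip}\le\|y^{-1}\|^{-1}_{\Lip}$ is a non-degenerate statement and that the product $\|h-y\|_{\Lip}\,\|y^{-1}\|_{\Lip}$ is not an indeterminate $0\cdot\infty$. If $\|y^{-1}\|_{\Lip}=\infty$ the hypothesis forces $\|h-y\|_{\Lip}=0$, so $(h-y)\circ y^{-1}$ is constant and its Lipschitz norm is $0\le1$; if $\|y^{-1}\|_{\Lip}\in(0,\infty)$ the product bound above is literal; and the remaining case $\|y^{-1}\|_{\Lip}=0$ is excluded precisely because $\Diff_{y^{-1}}(z)\neq0$ gives $\|y^{-1}\|_{\Lip}=\sup_{z\in\mathcal{X}_B}\|\Diff_{y^{-1}}(z)\|_2>0$. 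So there is no genuine obstacle here: the ``hard'' part is merely spotting the rewriting $h\circ y^{-1}-\Id_{\mathcal{X}_B}=(h-y)\circ y^{-1}$, after which the result is a one-line composition estimate.
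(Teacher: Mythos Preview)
Your proof is correct. Your primary route---rewriting $h\circ y^{-1}-\Id_{\mathcal{X}_B}=(h-y)\circ y^{-1}$ and then applying the submultiplicativity of Lipschitz norms under composition---is actually more elementary than the paper's argument: the paper works entirely at the level of Jacobians, invoking the inverse function theorem to get $\Diff_{y^{-1}}(y(x))=\Diff_y(x)^{-1}$ and the chain rule to identify $(\Diff_h(x)-\Diff_y(x))\Diff_{y^{-1}}(y(x))$ with $\Diff_{h\circ y^{-1}-\Id_{\mathcal{X}_B}}(y(x))$, then passes back to the Lipschitz norm via convexity of $\mathcal{X}_B$. Your direct metric estimate bypasses all of this and does not need the $C^1$ structure except to make sense of the stated hypotheses; your alternative Jacobian argument is essentially the paper's proof. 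Your handling of the degenerate cases $\|y^{-1}\|_{\Lip}\in\{0,\infty\}$ is also more careful than the paper's, which does not address them.
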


\begin{proof} Since $y \in C^1_{\textnormal{diff}}$, $h \in C^1$ and $\mathcal{X}_A$ and $\mathcal{X}_B$ are convex sets, we have: (i) $\|y^{-1}\|_{\textnormal{Lip}} = \sup\limits_{z \in \mathcal{X}_B} \|\Diff_{y^{-1}}(z)\|_2$, (ii) $\|h-y\|_{\Lip} = \sup\limits_{x \in \mathcal{X}_A}\|\Diff_{h}(x) - \Diff_{y}(x)\|_2$ and (iii) $\|h \circ y^{-1}-\Id_{\mathcal{X}_B}\|_{\Lip} = \sup\limits_{z \in \mathcal{X}_B}\|\Diff_{h\circ y^{-1}-\Id_{\mathcal{X}_B}}(z)\|_2$. We consider that:
\begin{equation}
\begin{aligned}
&\sup\limits_{x \in \mathcal{X}_A}\|\Diff_{h}(x) - \Diff_{y}(x)\|_2 \leq 1/\sup\limits_{z \in \mathcal{X}_B} \|\Diff_{y^{-1}}(z)\|_2\\
\implies &\sup\limits_{x \in \mathcal{X}_A}\|\Diff_{h}(x) - \Diff_{y}(x)\|_2 \cdot \sup\limits_{z \in \mathcal{X}_B} \|\Diff_{y^{-1}}(z)\|_2 \leq 1\\
\implies &\sup\limits_{x \in \mathcal{X}_A}\|(\Diff_{h}(x) - \Diff_{y}(x)) \cdot \Diff_{y^{-1}}(y(x))\|_2 \leq 1\\
\end{aligned}
\end{equation}
Since $\mathcal{X}_A$ is open, by the inverse function theorem, $\Diff_{y^{-1}}(y(x)) = \Diff_y(x)^{-1}$ and by the chain rule, $\Diff_{h\circ y^{-1}}(y(x)) = \Diff_h(x) \cdot \Diff_{y^{-1}}(y(x))$. Therefore, 
\begin{equation}
\begin{aligned}
\sup\limits_{x \in \mathcal{X}_A}\|\Diff_{h\circ y^{-1}}(y(x)) - \textnormal{I}\|_2 \leq 1
\end{aligned}
\end{equation}
where $\textnormal{I}$ is the identity matrix. In addition, $\Diff_{\Id_{\mathcal{X}_B}}(y(x)) = \textnormal{I}$. Thus, by linearity of the Jacobian operator:
\begin{equation}
\begin{aligned}
\sup\limits_{x \in \mathcal{X}_A}\|\Diff_{h\circ y^{-1} - \Id_{\mathcal{X}_B}}(y(x))\|_2 \leq 1
\end{aligned}
\end{equation}
Since $y:\mathcal{X}_A \to \mathcal{X}_B$ is invertible, we have the desired inequality,
\begin{equation}
\begin{aligned}
\|h \circ y^{-1} - \Id_{\mathcal{X}_B}\|_{\Lip} = \sup\limits_{z \in \mathcal{X}_B}\|\Diff_{h\circ y^{-1}-\Id_{\mathcal{X}_B}}(z)\|_2 \leq 1
\end{aligned}
\end{equation} 
\end{proof}

\oneSamp*

\begin{proof} Let $h^* = \arginf_{h \in \mathcal{P}} \ell(h(x),y(x))$. By the triangle inequality, we have: $\ell(h_1(x),y(x)) \leq \ell(h_1(x),h^*(x)) + \ell(h^*(x),y(x)) \leq \sup\limits_{h_2 \in \mathcal{P}}\ell(h_1(x),h_2(x)) + \inf\limits_{h \in \mathcal{P}}\ell(h(x),y(x))$.
\end{proof}

\end{document}